\documentclass[letterpaper]{article} 
\usepackage{aaai2026}  
\usepackage{times}  
\usepackage{helvet}  
\usepackage{courier}  
\usepackage[hyphens]{url}  
\usepackage{graphicx} 
\urlstyle{rm} 
\usepackage{natbib}  
\usepackage{caption} 
\frenchspacing  
\setlength{\pdfpagewidth}{8.5in}  
\setlength{\pdfpageheight}{11in}  
%
\usepackage{algorithm}
\usepackage{algorithmic}

%
\usepackage{newfloat}
\usepackage{listings}
\DeclareCaptionStyle{ruled}{labelfont=normalfont,labelsep=colon,strut=off} 
\lstset{%
	basicstyle={\footnotesize\ttfamily},
	numbers=left,numberstyle=\footnotesize,xleftmargin=2em,
	aboveskip=0pt,belowskip=0pt,%
	showstringspaces=false,tabsize=2,breaklines=true}
\floatstyle{ruled}
\newfloat{listing}{tb}{lst}{}
\floatname{listing}{Listing}
%
\pdfinfo{
/TemplateVersion (2026.1)
}

\usepackage{graphicx}
\usepackage{amsmath}
\usepackage{amsthm}
\usepackage{amsfonts, amssymb}

\usepackage{booktabs}
\usepackage{nicematrix} 

\usepackage{bm}
\usepackage{dsfont}
\usepackage{subfigure}
\usepackage{multirow}
\usepackage{enumitem}
\usepackage{caption}

\newcommand{\real}{I\!\!R} 

\def\beqa{\begin{eqnarray*}}
\def\eeqa{\end{eqnarray*}}
\def\be{\begin{eqnarray}}
\def\ee{\end{eqnarray}}

\def\Xcal{\mathcal{X}}
\def\Ycal{\mathcal{Y}}

\def\Hcal{\mathcal{H}}

\def\Esf{\mathsf{E}}
\def\Psf{\mathsf{P}}

\def\bb{\bm{b}}
\def\bn{\bm{n}}
\def\bx{\bm{x}}

\def\bmu{\boldsymbol{\mu}}

\def\ub{\bm{b}}
\def\ux{\bm{x}}

\def\nFP{n_{FP}}
\def\nFN{n_{FN}}
\def\mFP{m_{FP}}
\def\mFN{m_{FN}}
\def\qFP{q_{FP}}
\def\qFN{q_{FN}}

\newtheorem{definition}{Definition}
\newtheorem{theorem}{Theorem}

\newtheorem{lemma}{Lemma}

\newtheorem{example}{Example}

\setcounter{secnumdepth}{2} 

%



\title{Impacts of Individual Fairness on Group Fairness \\from the Perspective of Generalized Entropy}
\author {
    Youngmi Jin\textsuperscript{\rm 1},
    Jio Gim\textsuperscript{\rm 2},
    Tae-Jin Lee\textsuperscript{\rm 3},
    Young-Joo Suh\textsuperscript{\rm 2}
}
\affiliations {
    \textsuperscript{\rm 1}KAIST\\
    \textsuperscript{\rm 2}POSTECH\\
    \textsuperscript{\rm 3}Sungkyunkwan University\\
    youngmijin@kaist.ac.kr, \{jio.gim, yjsuh\}@postech.ac.kr, tjlee@skku.edu 
}


\begin{document}

\maketitle

\begin{abstract}
This paper investigates  how the degree of group fairness changes 
when   the degree of individual fairness is actively controlled.  
As a metric quantifying individual fairness, 
we consider generalized entropy (GE) recently introduced into machine learning community. 
To control the degree of  individual fairness, 
we design a classification algorithm satisfying a given degree of individual fairness 
through  an  empirical risk minimization (ERM)  
with a fairness constraint specified in terms of GE. 
We show the PAC learnability of the fair ERM problem by proving
that the true fairness degree does not deviate much from an empirical one with high probability 
for finite VC dimension if the sample size is big enough. 
Our experiments show that strengthening individual fairness degree does not always 
lead to enhancement of group fairness. 
\end{abstract}


\section{Introduction}
As the use of  machine learning algorithms grows in diverse areas such as criminal justice, lending and  hiring,
the issue of  algorithmic fairness gets  big attention. 
In response, a variety of work on algorithmic fairness has been proposed 
such as many fairness definitions 
\cite{Kusner17, Lowy22}, 
examination of relationship between different fairness concepts \cite{ Kleinberg17},
fairness metrics  quantifying the degree of fairness  \cite{Heidari18},
and finding a fair empirical risk minimizer  \cite{Donini18}. 

Diverse group and  individual fairness definitions   are 
proposed such as equal opportunity and equalized odds  in \cite{Hardt16},  
disparate treatment/impact  in \cite{Zafar17,Feldman15}, 
for group fairness concepts,  
and average individual fairness in \cite{Kearns19} for individual fairness, 
after Dwork et al. introduces the concepts of individual fairness 
and group fairness \cite{Dwork12}, 
where individual fairness requires that similar individuals should be similarly classified 
and group fairness requires some approximated statistical parity over the partitioned groups  
based on some attributes such as race or gender.

The work of \cite{Speicher18}  proposes to use generalized entropy (GE) indices 
which originally   evaluates  income inequality in economics 
as a metric measuring algorithmic fairness.
GE is an individual fairness as  a group fairness in that 
each individual is treated as one single group and 
has the  nice property that  GE can be decomposed 
into  two terms,  within-group term and between-group term where 
between-group term is a kind of group fairness at coarser level than individual.
Using the GE's nice property, 
they  demonstrate  the well-known insight provided in \cite{Dwork12}  
that reducing the degree of  group unfairness may increase that of individual unfairness,

Motivated by the work  of \cite{Speicher18}, 
we  investigate the under-explored question 
"How does the degree of group fairness (i.e., the between-group term) changes  
if we regulate  the degree of individual fairness (i.e., GE)?". 
The question demands us to  design a classifier 
achieving a given  degree of individual fairness specified in terms of GE 
in order to control the degree of of individual fairness.     
The task to design a classifier satisfying a given fairness constraint 
directly related to the  roles of GE as fairness metric. 
In general, there are two roles for an  algorithmic fairness metric. The first role 
is to measure the degree of fairness of a classifier. 
The second one is to  specify   fairness requirements  of a  classification problem 
so that the requirements can be realized with small deviation by an algorithm. 
We focus on the second role of a fairness metric for GE, while Speicher et al. are  interested in  the first role. 

Before the  design of a classifier satisfying fairness requirements, 
we analyze the deviation of the empirical fairness degree measured 
by GE on some sample data set from the true fairness degree on the original space 
where the sample data set has been drawn.
To design a classifier, we consider a fair empirical risk minimization (ERM) 
where the fairness constraint  is specified 
in terms of GE and find an approximate optimal empirical classifier 
satisfying the given fairness constraint  based on Hedge algorithm \citep{Freund96, Freund97}. 
With  the approximate optimal classifier achieving 
given degree of GE (individual  fairness), 
we experimentally study the impact of controlling  GE (individual  fairness) 
on between-group term (on group  fairness).

The most related works are the papers of \cite{Speicher18}, \cite{Kearns18}, \cite{Agarwal18}, and \cite{Cousins21}. 
The major differences  of  the paper of \cite{Speicher18} and ours 
are as follows. 
The work \cite{Speicher18} focuses on how to evaluate the degree of fairness of a given 
algorithm and examines GE as a metric to quantify fairness degree. 
They study interesting properties of GE as an algorithmic fairness metric. 
One of them is  that strengthening group fairness may result in degradation of individual fairness. 
Unlike the paper, this paper  is interested in  
whether the degree of group fairness (between-group  term) gets improved  
when we impose a strong  requirement on the degree of  individual fairness (GE), 
which drives us to design and analyze a fair algorithm 
whose fairness requirement is  given by GE.  
In  this paper, the role of generalized entropy is not limited 
to a  metric quantifying fairness degree 
but extends to a design parameter in pursuit of a fair algorithm.

Our algorithm-designing philosophy is similar to that 
of \cite{Kearns18} and \cite{Agarwal18}  in  that all of them  seek a randomized 
algorithm based on Hedge algorithm and analyze the performance of the  randomized algorithm 
with the help of minmax game theory.  
However, the objectives of the papers are very different from ours. 
The objective of    \cite{Agarwal18}  is  
to provide the unified reduction approach for fair classification 
where the fairness concepts are represented by a linear function 
of numbers of false positive and false negative labels, 
including   demographic parity  and equalized  odds  fairness concepts. 
Our work is not related with such reduction approach.  
The objective of \cite{Kearns18}    is how to 
prevent fairness gerrymandering, 
the situation where a classifier 
satisfies some  fairness constraint on small number of pre-defined groups 
but it severely violates the fairness constraint on groups. 
Our work does not focus on  preventing fairness gerrymandering. 
 
Our paper  and the paper of \cite{Cousins21}  study  PAC learnable fair learning algorithms. 
The fairness measure in \cite{Cousins21} is based on malfare (opposite to welfare), 
disadvantage or loss caused by wrong prediction,  and our fairness measure 
is based on generalized entropy. 

We  summarize our contributions: 
(i) 
It is well known that improvement of group fairness may deteriorate individual fairness. 
However, it is little known how group fairness changes as individual fairness enhances.
Our experiments show that strengthening individual fairness degree 
does not always lead to enhancement of group fairness,
which has not been discussed in existing literature as far as we know. 
(ii) 
We formulate and analyze  theoretically and experimentally  a (randomized)  ERM  
with a fairness constraint given by GE. 
(iii) We show the PAC learnability of the fair ERM  problem 
by proving that the true fairness degree does not deviate much 
from an empirical one with high probability  
if the sample size is big enough. 

\section{Problem Formulation} \label{sec:formulation}
The most popular group fairness definitions are demographic parity (DP), equal opportunity, and 
equalized odds (EO)  
\cite{Feldman15,Hardt16}.  
The main paradigm of such fairness definitions is 
the (conditional) independence between prediction results of a hypothesis 
and  sensitive attributes (given ground truth values). 
We believe that  the traditional fairness definitions such as DP and EO do not 
consider the  impact of wrong predictions on individuals. 
Consider the following example. 
\begin{example} Imagine a bank's lending system where positive prediction means acceptance of a loan application. 
A hypothesis  $h$ makes   decisions for a population consisting of  females and  males. 
Each of  the female and male groups has four individuals with  
one true positive (TP), one false negative (FN), 
one true negative (TN), and one false positive (FP)  prediction results. 
Note that $h$ satisfies DP and EO, since the statistical properties of female and male groups 
are identical. However, the individuals (especially  the persons with FN)  do  not think 
the hypothesis makes fair decisions, 
simply because the prediction results bring different benefits to individuals; 
they are unfavorable to the individuals with FN but  
advantageous to the ones with FP.  
\end{example}
The toy example shows that it is necessary to consider the benefits 
brought by prediction results to individuals.  
In his paper, we   considers a new fairness metric, generalized entropy introduced to machine learning community by \cite{Speicher18},  
that focuses on inequality of individual's benefits resulting from prediction results 
in order to remedy  what existing fairness definitions do not pay much attention to.

In this paper,  vectors are denoted in boldface font and scalars  in  normal font; 
$\ub$ is a vector while  $b$ is a real value.

\subsection{Definition of Generalized Entropy Index}
Generalized entropy index  is originally developed 
to  measure income inequality over finite population by \cite{Shorrocks80}. 
For a given population of $n$ individuals with income vector $\bb=(b_1,\cdots, b_n)$ 
with $b_i \geq 0$ for all $i$, 
Shorrocks has considered an  inequality measure, $I(\bb;n)$,  
that satisfies the  following axioms.
\begin{itemize}
\item{Axiom 1:} $I(\bb;n)$ is continuous and symmetric in $\bb$, i.e.,
$I(\bb;n) = I(\bb';n)$ where $\bb'$ is a permutation of $\bb$. 
\item{Axiom 2 :} $I(\bb;n) \geq 0$ and  $I(\bb;n)=0$ if and only if $b_i = c$ for a constant $c$ and for all $i$.
\item{Axiom 3:} 
 $\frac{\partial^2 I(\bb;n)}{\partial b_i \partial b_j}$ is continuous for all $i,j$.
\item{Axiom 4 (Additive decomposability):} For $n\geq 2$ and 
any partition\footnote{For a given set $X$, a class of subsets $\{X^i\}_{i=1}^G$ with $X^i \subseteq X$ 
is called a partition of $X$ 
if and only if $\cup_i X^i  = X$ and $X^i \cap X^j =\emptyset$ if $i \neq j.$}  
$\bb^1, \bb^2, ...\bb^G$ of $\bb$ with $\bb^g =(b_1^g, ..., b_{n_g}^g)$, 
  there exists a set of $w_g^G(\bmu, \bn)$ such that 
$I(\bb^1, ..., \bb^G;n)= \sum_g w_g^G(\bmu, \bn)I(\bb^g;n_g) + V$  
where  $\bmu = (\mu_1,..., \mu_G)$ with  $\mu_g =\frac{1}{n_g} \sum_{i=1}^{n_g} b_i^g$, 
$~\bn = (n_1, \cdots, n_G)$ with $n_g$ the cardinality of $\bb^g$,  and
$V=  I(\mu_1   \mathbf{1}_{n_1},\cdots,\mu_G \mathbf{1}_G;n)$
with  $\mathbf{1}_{m} = (1,\ldots, 1) \in \mathbb{R}^m$.   

\item{Axiom 5: }$I(\bb, \bb,...,\bb;kn) = I(\bb;n)$ for any positive integer $k$.

\item{Axiom 6 (Pigou-Dalton principle of transfers): } 
If a transfer $\delta >0$ is  made from an individual $j$ to another $i$ with $b_j > b_i$ 
such that $ b_j -\delta > b_i + \delta$, then the inequality index decreases after the transfer. 

\item{Axiom 7: } $I(r\bb;n) = I(\bb;n)$ for any  $r >0$. 
\end{itemize} 
\begin{definition}  \label{def:gen_entropy}
For any $\alpha \in [0, \infty)$ and  a given   vector $\bb=(b_1,\ldots,b_n)$ with $b_i \geq 0$ for all $i$, 
the generalized entropy (GE) index of $\bb$, denoted by $I_{\alpha}(\bb;n)$,  
 is  defined as   
$ I_{\alpha}(\ub;n)= \frac{1}{n} \sum_{i=1}^n f_{\alpha}\Big(\frac{b_i}{\mu}\Big)$ 
where   $\mu  = \frac{1}{n} \sum_{i=1}^n b_i$ and    
\be
 f_\alpha(x) = \left \{ \begin{array}{ll} 
  - \ln x    & \mbox{if } \alpha =0,\\    
      x \ln x  & \mbox{if } \alpha =1,\\
        \frac{1}{\alpha (\alpha-1)}  (x^{\alpha} -1) & \mbox{otherwise.} 
      \end{array} \right.  \label{eqn:f_alpha}
\ee
\end{definition}
An income vector $\ub$ is less unfair (i.e. more fair) than $\tilde{\ub}$ if $I_{\alpha}(\ub;n) < I_{\alpha}(\tilde{\ub};n)$.
Note that  
if  $b_i \neq b_j$ for some $i \neq j$, 
then  $I_{\alpha}(\bb;n) \neq 0$  by Axiom 2 and that  
 $I_{\alpha}(\bb;n) = 0$ only when all $b_i$s are equal. 
 
\subsection{Applying GE  to  Algorithmic Fairness}
Consider a supervised machine learning problem. 
Each individual is represented by $(\bx, y) \in \Xcal \times \Ycal$ 
where $\bx$ is a feature vector 
and $y \in \Ycal$ is a (ground truth) label of $\bx$. 
We assume that $\Ycal = \{0,1\}$
and there is an unknown distribution $P$ over $\Xcal \times  \Ycal$. 
The label value 1 corresponds to the desirable case for an individual   
and the value 0 to the undesirable one. 
For a credit lending example, acceptance of  a loan application corresponds to  1 and rejection of it to 0. 
The marginal distribution of $P$ over $\Xcal$ is denoted by $P_x$.   
A sample data set (or training data) with size $n$,   $S = \{(\bx_i,y_i)\}_{i=1}^n$, consists 
of elements $(\ux_i, y_i)$    independently and identically drawn (i.i.d) according to 
the unknown distribution $P$  over $\Xcal \times \Ycal$.  
A hypothesis, called also a learning algorithm, is a function
 $h: \Xcal \rightarrow \Ycal$  that outputs a predicted label $h(\bx)$, 
 either correct or incorrect,  for $\bx \in \Xcal$. 
The empirical risk (or error) of hypothesis $h$ is the error
 that $h$ incurs on the sample data $X =\{\bx_i\}_{i=1}^{n}$:  
$R_S(h)=   \frac{1}{n} \sum_{i=1}^n   \mathds{1}\{ h(\bx_i) \neq y_i \}$ 
where  $\mathds{1}\{C\}$ is the indicator function that returns 1 if condition $C$ is satisfied and returns 0 otherwise.
The \emph{true  error} of a hypothesis $h$ is the error 
that $h$ generates over the whole domain $\Xcal$:  
$~R_{\Xcal}(h) = \Psf_{(\bx,y) \sim P } [h(\bx) \neq y] = P(\{(\ux, y) | h(\bx) \neq y \}).$

We assume that $\Xcal$ and a  class of hypothesis $\mathcal{H}$ are given.
The objective of   learning  is to find a 
hypothesis $h \in \Hcal$ for a given hypothesis class  $\Hcal$ 
that  predicts well the label of a new instance $\bx$ 
(i.e., yields small $R_{\Xcal}(h)$) 
with the help of a sample data set, 
which is usually  done by  ERM: we find $h^*  = \arg \min_{h' \in \Hcal} R_S(h')$
and  expect that  $R_{\Xcal}(h^*)$ is small too. 
It is very well-known that  the true error of $h$ is close to the empirical error 
with high probability if  the sample data size is large enough and VC dimension is finite, 
which is stated in   Theorem \ref{thm:gen-error} (whose proof can be found in  \cite{Mohri2018}.)  
\begin{theorem} ({Standard VC Dimension Bound}) \label{thm:gen-error}
For any distribution $P$ over $\Xcal \times \Ycal$, let $S =\{(\bx_i, y_i)\}_{i=1}^n$ 
be a  sample data set  i.i.d  according to $P$. 
For any   $0 < \delta < \frac{1}{2}$ and  any  $h \in \mathcal{H}$,   with probability at least $1-\delta$, 
it holds that  
\beqa
 R_{\Xcal}(h) ~ \leq ~  R_S(h)+
    \sqrt{ \frac{   8d_{\Hcal}\ln \big(\frac{2en}{d_{\Hcal}}\big) +8\ln \frac{4}{\delta}}{n}} 
\eeqa
where $d_{\Hcal}$ is the VC dimension of $\Hcal$.   
\end{theorem}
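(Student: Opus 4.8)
The plan is to prove the stronger \emph{uniform} convergence statement, namely a high-probability bound on $\sup_{h \in \Hcal}\big(R_{\Xcal}(h) - R_S(h)\big)$, since once the supremum is controlled with probability at least $1-\delta$, the stated inequality holds simultaneously for every $h \in \Hcal$ (and in particular for the one in question). The argument follows the classical four-step VC-dimension route: symmetrization by a ghost sample, reduction to the growth function via Sauer's lemma, a swapping (random-sign) argument, and a union bound combined with Hoeffding's inequality. The restriction $0 < \delta < \tfrac12$ is used to stay in the regime where the symmetrization step is effective.

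First I would introduce a \emph{ghost sample} $S' = \{(\bx_i', y_i')\}_{i=1}^n$ drawn i.i.d.\ from $P$ and independent of $S$, and establish the symmetrization inequality
\[
\Psf\Big[\sup_{h \in \Hcal}\big(R_{\Xcal}(h) - R_S(h)\big) > \epsilon\Big] \;\leq\; 2\,\Psf\Big[\sup_{h \in \Hcal}\big(R_{S'}(h) - R_S(h)\big) > \tfrac{\epsilon}{2}\Big],
\]
valid once $n\epsilon^2 \geq 2$. The purpose is to replace the population quantity $R_{\Xcal}(h)$, which depends on the whole distribution, by the empirical error $R_{S'}(h)$ on a finite second sample; the factor $2$ and the halved threshold arise from a Chebyshev/variance argument showing that, whenever $R_{\Xcal}(h)-R_S(h)$ is large, the ghost empirical error $R_{S'}(h)$ stays close to $R_{\Xcal}(h)$ with probability at least $\tfrac12$.

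Next I would condition on the pooled multiset $T = S \cup S'$ of $2n$ points and exploit that the event depends on $h$ only through its labeling of these $2n$ points. By Sauer's lemma the number of distinct such labelings is at most the growth function $\Pi_{\Hcal}(2n) \leq \big(\tfrac{2en}{d_{\Hcal}}\big)^{d_{\Hcal}}$, so the supremum over $\Hcal$ collapses to a maximum over a finite set. Introducing independent random swaps $\sigma_i \in \{-1,+1\}$ of the pairs $(\bx_i,y_i)$ and $(\bx_i',y_i')$—which leaves the conditional law invariant—the centered quantity $R_{S'}(h)-R_S(h)$ becomes $\tfrac1n\sum_i \sigma_i\beta_i$ with $\beta_i \in \{-1,0,1\}$, to which Hoeffding's inequality applies. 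A union bound over the at most $\big(\tfrac{2en}{d_{\Hcal}}\big)^{d_{\Hcal}}$ labelings (and over both tails) then yields
\[
\Psf\Big[\sup_{h \in \Hcal}\big|R_{\Xcal}(h) - R_S(h)\big| > \epsilon\Big] \;\leq\; 4\Big(\tfrac{2en}{d_{\Hcal}}\Big)^{d_{\Hcal}} e^{-n\epsilon^2/8},
\]
which dominates the one-sided deviation of interest.

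Setting the right-hand side equal to $\delta$ and solving for $\epsilon$ gives $\tfrac{n\epsilon^2}{8} = d_{\Hcal}\ln\tfrac{2en}{d_{\Hcal}} + \ln\tfrac{4}{\delta}$, i.e.\ $\epsilon = \sqrt{\big(8d_{\Hcal}\ln\tfrac{2en}{d_{\Hcal}} + 8\ln\tfrac{4}{\delta}\big)/n}$, which is precisely the claimed bound; here the prefactor $4$ is exactly what produces the $\ln\tfrac{4}{\delta}$ term. I expect the main obstacle to be the symmetrization step together with the swapping argument: verifying the exchangeability that makes the conditional distribution invariant under the $\sigma_i$, and handling the boundary regime where $n\epsilon^2$ is small (which is where the hypothesis $\delta < \tfrac12$ enters), both require care. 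By contrast, the Sauer's-lemma reduction, the Hoeffding tail estimate, the union bound, and the final inversion are routine.
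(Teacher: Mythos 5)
Your proof sketch is correct, but there is nothing in the paper to compare it against: the paper states this theorem as a known result and cites the Mohri--Rostamizadeh--Talwalkar textbook rather than proving it, so the "paper's own proof" simply does not exist. Your outline is the classical Vapnik--Chervonenkis argument: symmetrization by a ghost sample (valid once $n\epsilon^2 \geq 2$, via the Chebyshev step you describe), collapse of the supremum to finitely many labelings via Sauer's lemma, the random-swap plus Hoeffding step giving the $e^{-n\epsilon^2/8}$ tail, and a union bound over labelings and both tails producing the prefactor $4$. The inversion $4\big(\tfrac{2en}{d_{\Hcal}}\big)^{d_{\Hcal}} e^{-n\epsilon^2/8} = \delta$ then reproduces the stated bound exactly, including the $\ln\tfrac{4}{\delta}$ term (a purely one-sided treatment would give the slightly tighter $\ln\tfrac{2}{\delta}$; your choice of the two-sided bound is what matches the theorem as written). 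Your reading of the hypothesis $0<\delta<\tfrac12$ is also sound: for the solved value of $\epsilon$ one has $n\epsilon^2 \geq 8\ln\tfrac{4}{\delta} > 2$, so the symmetrization precondition is automatically satisfied in this regime. In short, the proposal is a correct, standard proof of a result the paper imports as a black box.
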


The work of \cite{Speicher18}  proposes to use $I_{\alpha}(\bb;n)$ for some $\alpha >0, ~\alpha  \neq 1$ 
as a metric assessing  the fairness degree  of $h$  
by converting prediction results of $h$ into benefit values $b_i$ as follows,   
$b_i = h(\bx_i) - y_i + 1.$   
The last term, adding one, makes $b_i$  non-negative 
so that $I_{\alpha}(\bb;n)$ can be defined 
for finite $n$ and $\alpha >0, \alpha \neq 1$. 
The  philosophy behind the definition of $b_i$ can be understood 
by  an  example of  a bank's lending system  
where the label of 1 
corresponds to the acceptance of a loan application  
and the label 0 the rejection of it. 
Each loan applicant is either creditworthy and can pay back the loan, denoted by the label 1, 
or   not creditworthy and will default, denoted by the  label 0. 
For an applicant with true label $y_i =1$, she would think the decision is unfair if $h(\ux_i)=0$.  
For an  applicant with her true label  $y_j =0$, she would get  more  benefit than she deserves 
if $h(\ux_j) = 1$; others would think it  unfair.  

Similarly  as  \cite{Speicher18} does, we define $b_h(\bx)$   as   
\be 
b_h(\bx) =  a(h(\bx)- y) + c \label{eqn:ours}
\ee     
with $c > a >0$ and $c-a \geq 1$, for $h \in \Hcal$.  
By the   definition of $b_h(\bx)$ in (\ref{eqn:ours}), 
the benefit of an individual is $c$ for correct prediction, 
$ c + a $ for false positive (FP) prediction, and $c-a$ for false negative (FN) prediction. 
Note that FP  labeling is a favorable error  to an individual
and FN labeling a harmful error  to an individual. 
We will drop the subscript $h$ if $h$ is clear in context.

For a given  hypothesis $h \in \Hcal$ and $\alpha \in [0, \infty)$,  
we can measure its \emph{empirical algorithmic  unfairness}  
from the sample data set $\{\ux_i, y_i\}_{i=1}^n$ 
by  $I_{\alpha}(\ub_h;n)$  as in Definition \ref{def:gen_entropy} 
where $\ub_h = (b_h(\ux_1), \cdots, b_h(\ux_n))$ 
(Appendix \ref{appendix:add-decom} 
provides a  computation example of $I_{\alpha}$ for a given $\{b_h(\ux_i)\}_{i=1}^n$.)
\footnote{All appendices can be found in the supplementary document.} 
 
 \subsection{GE as an Individual Fairness}

We assume that the whole population is partitioned to $G$ groups. 
Partitioning of a population into several groups  is made by using features; for example, 
using the gender feature, we can partition the whole population into two groups,  
a group of females  and a group of males, 
if gender has only two components, male and female.

One of the most prominent properties of GE is additive decomposability of Axiom 4; 
\be
 I(\bb^1, ..., \bb^G;n)= \sum_{g=1}^G w_g^G(\bmu, \bn)I(\bb^g;n_g) + V \label{eqn:add_decomp}
\ee
where  $V = I(\mu_1 \mathbf{1}_{n_1},\cdots,\mu_G \mathbf{1}_{n_G};n) 
= \sum_{g=1}^G \frac{n_g}{n} f_{\alpha}\Big( \frac{\mu_g}{\mu}\Big)$ 
and   $w_g^G(\bmu, \bn) = \frac{n_g }{n} \Big(\frac{\mu_g}{\mu}\Big)^{\alpha}$. 
The first term in (\ref{eqn:add_decomp}),  $\sum_g w_g^G(\bmu, \bn)I(\ub^g;n_g)$, 
called  by    \emph{within-group term},  
is the weighted sum of inequality over the groups  $\bb^g$s. 
The second term, $V$, called  by \emph{between-group term},  
is the inequality of the population with size $n$  
where each group consists of $n_g$ members and every member of group $g$ 
has  equal benefit $\mu_g$, 
which implies that in each group, perfect equality is achieved 
(refer to Appendix \ref{appendix:add-decom} for the computation of additive decomposability.) 

Many group fairness definitions  partition 
the whole population into several groups and compare 
statistical measures over the groups. 
Typical examples are DP, equal opportunity, 
and EO \citep{Feldman15, Hardt16}.  
Such group fairness definitions implicitly assume 
that the individuals in a same group are treated equally.    
From this perspective, between-group term  $V$ in (\ref{eqn:add_decomp}) can be 
regarded as a metric quantifying the degree of   group fairness.

Recall that  $I_{\alpha}(\bb;n) = 0$ only when all $b_i$s are equal. 
Non-zero value of generalized entropy ensures 
the existence of (at least two) individuals whose classification result 
(predicted value - ground truth value) 
is different from the other ones.

Consider a special case where each group, $g$,  consists of only one single  element (or individual).   
Hence $I(\bb^g;n_g)=I(b_i;1)=0$, which results in  
$\sum_g w_g^G(\bmu, \bn)I(\bb^g;n_g) = 0$, that is,  
within-group term becomes $0$. 
Therefore, we have  $ I(\bb^1, ..., \bb^G;n)= V$, 
i.e., GE gets equal to between-group term $V$, 
which implies that GE is an extreme case of  between-group term (group  
fairness) when each individual is a group, the finest level of groups. 
From this perspective, we regard GE as individual fairness as   group fairness 
at the finest level.  
It  is a  natural view point that   individual  fairness is 
a special extreme case of group fairness. 
Kearns et al.  propose in \cite{Kearns19} the notion of 
``average" individual fairness 
that seeks  equal  averaged  error rates 
over individuals, when there are sufficiently many classification 
tasks so that each individual takes an averaged  error rate over multiple classification tasks. 
The average individual fairness of \cite{Kearns19} can be also  
regarded as an extreme case of  group fairness 
if  we regard each individual as a group. 
In the papers of \cite{Kearns18, Kearns18a}, they  propose a notion of 
rich group fairness to bridge the gap between group fairness 
and individual fairness by considering very large number of 
groups through the combination of setting feature values 
rather than a small number of groups. 
In the context  of this rich group fairness, the finest level is 
the special case that each individual is a group with size one. 

The view point (that individual  fairness is a special extreme case of group 
fairness) is different from the   notion of individual fairness  of  \cite{Dwork12}   
that similar individuals should be treated similarly which is theoretically attractive but 
requires  in practice a daunting task to find a metric quantifying similarity 
between individuals in the feature space. 
When adopting the additive decomposability of GE and the view point 
that individual  fairness is a special extreme case of group fairness, 
then using GE, we  can investigate how group fairness,$V$,  
is affected by the control of individual fairness, $I_{\alpha}$.

\subsection{Problem Formulation}
We consider   an ERM with  a fairness constraint  specified by  GE 
for $\alpha \in [0, \infty)$, 
which we call a fair empirical risk minimization with GE (FERM-GE); 
\be \label{eqn:FERM}
 \mbox{FERM-GE:} ~~ \min_{h \in \Hcal}  R_S(h)  ~~\mbox{s. t. } I_{\alpha}(\ub_h;n) \leq \gamma. \ee 
Let $h^*$ be the optimal solution of FERM-GE in (\ref{eqn:FERM}). 
We investigate if the   true error $R_{\Xcal}$ and  the fairness degree  of $h^*$  
over   $\Xcal$ (where $\ux_i$ belongs to) do not deviate much from  
the empirical ones over a sample data set with high probability 
when the training set is sufficiently large. 
We  find  $h^*$ of FERM-GE  for a given $\gamma$ 
and   examine the value of between-term $V$ of $h^*$ on the sample space.   
By changing the values of $\gamma$, we  control the degree of individual fairness degree, GE, 
and find a set of classifiers satisfying the various degree of GE. 
By examining the corresponding values of  between-term $V$ of the set of classifiers, 
we investigate   how  between term $V$ changes when we control  $\gamma$ constraint on GE . 
 
\section{Deviation Bounds of Empirical Fairness}  \label{sec:problem_form}
This section considers the PAC learnability of our fair ERM. 
That is we need a   similar result on the degree of fairness to Theorem \ref{thm:gen-error}:  
 with high probability, the degree of fairness of a hypothesis on a given sample 
data set does not deviate much from that on the original space 
from which the sample data set is drawn, 
%

For this, we extend the original definition of GE defined on a finite population $\{\ux_i\}_{i=1}^n$ so that
GE  can work on  $\Xcal$, an arbitrary  space. 
The parametric family of GE in Definition  \ref{def:gen_entropy}   
is originally defined over a finite population 
under the premise that 
$n$ individuals are separately identified 
and each has the same weight $\frac{1}{n}$. 
Hence Definition \ref{def:gen_entropy} is easily applied to  the sample data set 
$S= \{\bx_i, y_i \}_{i=1}^n$ with finite size   
but  does not work on some space $\Xcal$,   like $ \mathbb{R}^m$, 
that has uncountably many elements. 
We extend  the generalized entropy $I_{\alpha}$ so that the extended one can work even  
on a space with uncountably many elements while still satisfying all of the axioms, 
especially  additive decomposability property, after the extension. 
Let $\real^+$ be the set of non-negative real numbers. 
 
\begin{definition}(Extension of GE) 
\label{def:ext_gen_entropy} 
Let $b: \Xcal \rightarrow \real^+$ and  $P_x$ be a probability distribution on $\Xcal$. 
For a constant $\alpha \in [0, \infty)$, 
 GE  of $b(\Xcal)$
with respect to $P_x$ 
is  defined by
\be \label{eqn:ext_gen_entropy}
  I_{\alpha}(b,\Xcal, P_x) = {\displaystyle  \int_{\Xcal}    f_{\alpha}\Big(\frac{b(\ux)}{\Esf[b(\ux)]}\Big) dP_x}  
\ee
where $dP_x$ is the probability density function of $P_x$
and $\Esf[b(\ux)] =    \int_{\Xcal} b(\bx) dP_x $. 
\end{definition} 

After slight modification of Axioms 2, 4, and 6, 
generalized entropy can be  extended   with a mild condition on $b(\ux)$ 
as in Theorem \ref{thm:prop_ext_gen_entropy}. 
(The 
proof of Theorem \ref{thm:prop_ext_gen_entropy} 
can be found in Appendix \ref{append:ext_gen_entropy}. All of 
the proofs in the paper can be found in Appendices.)
 \begin{theorem} \label{thm:prop_ext_gen_entropy}
If $b(\ux)$ is bounded on $\Xcal$, then
the extension of generalized entropy satisfies all (modified) Axioms 1-7.
\end{theorem}  
Obviously $b_h(\ux)$  is bounded over $\Xcal$, 
since    $b_h(\ux) = a(h(\ux)-y)+c$ with $c > a > 0$  and $c-a \geq 1$. 
Hence the extended generalized entropy $I_{\alpha}(b_h, \Xcal, P_x)$ 
meets all of the Axioms 1-7, including the property of additive decomposability,  
for any $\alpha \in [0, \infty)$.  

Recall  that for any distribution $P$ defined on $\Xcal \times \Ycal$, 
$P_x$ denotes the marginal distribution of $P$ over $\Xcal$.  
To emphasize a hypothesis $h$ and a probability distribution $P$ on $\Xcal \times \Ycal$, 
we will use $I_{\alpha}(h, P)$ instead of $I_{\alpha}(b_h, \Xcal, P_x)$ from now on,  
even though the generalized entropy definition needs $P_x$ not $P$.   
For the sample data set $S$, we still use $I_{\alpha}(\ub_h;n)$. 
Hence for a given $h$, $I_{\alpha}(\ub_h;n)$ denotes the empirical fairness of $h$ over $S$ 
and  $I_{\alpha}(h,\Xcal)$ the true fairness of $h$ over \emph{ the whole domain} $\Xcal$.
Theorem \ref{thm:gen-fair-constraint} provides the deviation bounds of the empirical fairness 
from the true fairness (the proof of Theorem \ref{thm:gen-fair-constraint}  can be found in \ref{appendix:proof-gen-fair-constraint}.)

\begin{theorem}\label{thm:gen-fair-constraint} 
For any distribution $P$ over $\Xcal \times \Ycal$, let $S =\{(\bx_i, y_i)\}_{i=1}^n$ 
be a sample data set  i.i.d according to $P$. 
Let $\alpha \geq 0$ and   $r =\frac{c}{a}$. 
For any $0 < \delta < 1$ ,
with probability  at least $1-\delta$, for each $h \in \mathcal{H}$ and  $ \alpha  \in [0, \infty)$,  
it holds that 
\beqa 
\Big |I_{\alpha}(h, P) - I_{\alpha}(\ub_h;n)  \Big|  & \leq &  \psi_{\alpha}(a,r) \sqrt{\frac{1}{2n}\ln \frac{4}{\delta}}
\eeqa
where $\psi_{\alpha}(a,r)$ is defined as follows 
\beqa
\psi_{\alpha}(a,r) = \left\{ \begin{array}{ll }
       \frac{2}{r-1} + \ln\big(\frac{r+1}{r-1}\big) &\mbox{if }    \alpha =0,\\
       \frac{2+4\ln(ar+a)}{r-1}+\ln \big(\frac{r+1}{r-1}\big)  &\mbox{if }    \alpha =1,\\
       \frac{\big(1+\frac{2\alpha}{r-1}\big)\big(\frac{r+1}{r-1}\big)^{\alpha}-1}{|\alpha(\alpha-1)|}&\mbox{otherwise.}
        \end{array} \right.
\eeqa
\end{theorem}

For fixed $\alpha$, note that 
$\psi_{\alpha}(a, r)\sqrt{\frac{1}{2n}\ln \frac{4}{\delta}}$ 
gets close to 0 as $n$ goes $\infty$.
From Theorem \ref{thm:gen-fair-constraint},    
large $\alpha$ may have large deviation of   empirical fairness degree from  true one, 
which implies that  small $\alpha$ is preferred in practice.  
From the fact that $\psi_{\alpha}(a, r)$ is  decreasing with $r$ for fixed $a$ 
(which can be easily checked), 
we learn that   large  $r$    is preferred 
for small deviation of empirical fairness from true one. 
However, if  $r$ is too big, then it yields a very small value of $I_{\alpha}$, 
which may cause difficulty in discerning the existence of unfairness.
 
The deviation bound  $\psi_{\alpha}$  of Theorem \ref{thm:gen-fair-constraint}
is independent of classifier's accuracy. 
In most cases,  we are not interested   in inaccurate hypotheses but  in accurate ones. 
Since $I(\ub_h;n)=0$, if $h$ has no error,
accurate hypotheses may have smaller deviation than inaccurate ones. 
Indeed, we can find   deviation bound  $\tilde{\psi}_{\alpha} $ 
that   depends on empirical error $R_S(h)$ 
such that hypothesis $h$ with small   $R_S(h)$ has 
small deviation bound of empirical fairness  from true one.   
Theorem \ref{thm:gen-fair-constraint-tight} in Appendix  \ref{appendix:proof-simple} tells us 
that hypothesis $h$ with small empirical error $R_S(h)$ has small deviation bound 
of empirical fairness from true one.
Theorem \ref{thm:simple-gen-fair-constraint-tight} is a simplifed version of it 
and provides $\psi_{\alpha}$   that   depend  on empirical error $R_S(h)$, 
for   $\alpha=0, ~1,$ and 2. 
(refer to Appendix \ref{appendix:proof-simple} for the proof.  )

\begin{theorem} \label{thm:simple-gen-fair-constraint-tight}
For any distribution $P$ over $\Xcal \times \Ycal$, let $S =\{(\bx_i, y_i)\}_{i=1}^n$ 
be a sample data set i.i.d  according to 
$P$. 
For any $0 < \delta < 1$,
with probability  at least $1-\delta$,    
if $R_S(h) < \frac{1}{2}$ and $n$ is sufficiently large so that $\varepsilon_2< \frac{1}{5}$,  
then  for  each   $h \in \mathcal{H}$ and $\alpha \in \{0,1,2\}$  
it holds that
\beqa 
\Big |I_{\alpha}(h, P) - I_{\alpha}(\ub_h;n)  \Big|  & \leq &  \tilde{\psi}_{\alpha}(r, \varepsilon_2) \cdot \varepsilon_2
\eeqa
where $\varepsilon_2 =  \sqrt{ \frac{   8d_{\Hcal}\ln \big(\frac{2en}{d_{\Hcal}}\big) +8\ln \frac{8}{\delta}}{n}}$,  
\beqa
\tilde{\psi}_{\alpha}  =
   \left \{ \begin{array}{l}
     \frac{1}{ r-R_S(h)-  \varepsilon_2} + \ln\Big( \frac{r}{r-1}\Big)  \qquad \qquad ~\mbox{for $\alpha=0$}, \\
     \frac{1}{r-R_S(h)-\varepsilon_2}\Big[1+  \frac{ r\big(1+  2\ln(ar+a)\big)}{r-R_S(h)}\Big] ~\mbox{for $\alpha=1$} ,\\
     \frac{1}{(\alpha-1)}\Big( \frac{r}{r-R_S(h)}\Big)^{2} V_2
            \qquad \qquad \quad ~~ \mbox{for } \alpha =2,
\end{array} \right.     
\eeqa
and $V_2=  \frac{1}{r} +\frac{3}{r^2}  + \frac{1}{r-R_S(h)}(12+\frac{3R_S(h)}{2r})$.
\end{theorem}

\section{FERM-GE} \label{sec:FERM-GE}

\begin{algorithm}[t!]
\caption{Hedge algorithm  for  randomized FERM-GE} \label{alg:fair}
\begin{algorithmic}
   \STATE {\bfseries Input:} A sample data set with size $n$,  
          $\lambda_{\max}$, $\nu$,  and
          an oracle finding $\arg \min_{h\in \Hcal} L(h, \lambda)$.
   \STATE Initialize $w_0^{(0)}=w_1^{(0)} =1$.  
   \STATE Set  $\lambda_0 ~=~ 0, ~~ \lambda_1  ~=~ \lambda_{\max},
          ~~ T ~=~ \frac{4 A_{\alpha}^2 \ln 2}{\nu^2},  
          ~~ \kappa ~=~ \frac{\nu}{2A_{\alpha}}. $
   \FOR{$t=1$ {\bfseries to} $T$}
   \STATE 1. Nature chooses   $\hat{\lambda}^{(t)}$:  
     \beqa
        \hat{\lambda}^{(t)} = \left \{ \begin{array} {ll} 
                               \lambda_0  & \mbox{with prob. } \frac{w_0^{(t-1)} }{ w_0^{(t-1)} + w_1^{(t-1)}},  \\
                               \lambda_1   & \mbox{with prob.} \frac{w_1^{(t-1)} }{ w_0^{(t-1)} + w_1^{(t-1)}}. \\
                              \end{array} \right.
     \eeqa   
  where $\lambda_0=0$ and $\lambda_1 =\lambda_{\max}$.  
   \STATE  2. The learner chooses a hypothesis 
   \beqa 
   \hat{h}^{(t)} = \arg \min_{h \in \Hcal}~L(h, \hat{\lambda}^{(t)})
   \eeqa 
   \STATE  3. Nature updates the weight vector:  for  $j \in \{0,1\}$, 
         \beqa
         w_j^{(t)} =  w_j^{(t-1)}\cdot (1+\kappa)^{l(\hat{h}^{(t)}, \lambda_j)}
         \eeqa 
         where $l(h,\lambda) =  \frac{L(h, \lambda)+ B}{A_{\alpha}}$.

   \ENDFOR
   \STATE {\bfseries Output:} $\bar{D} = \frac{1}{T} \sum_{t=1}^{T} \hat{h}^{(t)}$ and 
 $\bar{\lambda} = \frac{1}{T} \sum_{t=1}^{T} \hat{\lambda}^{(t)}$
\end{algorithmic}
\end{algorithm}

This section considers an ERM with a fairness constraint specified by $I_{\alpha}$ 
 to find an optimal (\emph{randomized}) hypothesis  among a given $\Hcal$. 
We consider randomized  hypotheses for good accuracy-fairness tradeoff. 
A randomized hypothesis $D$ is a probability distribution on $\Hcal$, 
that is $D = \sum_{h \in \Hcal} D_h h$ with $\sum_{h\in\Hcal} D_h =1$. 
The sample error of $D$ is given by
$R_S(D) = \sum_{h \in \Hcal} D_h R_S(h)$ 
and the corresponding generalized entropy is  
$I_{\alpha}(D;n) = \sum_{h \in \Hcal} D_h I_{\alpha}(\ub_h;n)$ 
for some $\alpha \in[0, \infty)$. 
Let $\Delta \Hcal$ be  the set of all probability distributions over $\Hcal$. 
For given $\alpha$ and $\gamma$, we consider the following (randomized) FERM-GE problem:   
\be
      \min\limits_{D \in \Delta \mathcal{H}} ~R_S(D)  
    ~~\mbox{s. t. }   I_{\alpha}(D;n)   \leq \gamma. \label{eqn:randomized_ferm}
\ee  
The above problem (\ref{eqn:randomized_ferm}) is a linear optimization with linear objective function and linear constraints. 
We want to find an approximated optimal solution of (\ref{eqn:randomized_ferm}) whose error and fairness 
degree are sufficiently close to the optimal one. 
This can be done by considering Lagrangian of  (\ref{eqn:randomized_ferm}), 
$L(D, \lambda)= R_S(D) + \lambda (I_{\alpha}(D;n)- \gamma)$ with $\lambda \in \real^+$.  
With the assumption $\gamma >\inf_{h \in \Hcal} I_{\alpha}(\ub_h;n)$,  
FERM-GE  is a feasible linear optimization, 
which guarantees the strong duality   for $\Hcal$ with finite VC dimension \cite{Boyd04} 
\footnote{Even though $\Hcal$ has infinitely many hypotheses, since   
the hypotheses in $\Hcal$ are applied to the finite sample space $S$, 
the number of   different labellings of $h \in \Hcal$ 
is finite, i.e., cardinality of $\{ h(S)~|~ h \in \Hcal\}$ is finite by Sauer's Lemma \cite{Mohri2018} 
(i.e., count only once if $h_1(S) = h_2(S)$ for $h_1 \neq h_2$)
when VC dimension of  $\Hcal$ is finite.}  
;
\be 
L^* =  \max_{\lambda \in \real^+} ~\min_{D \in \Delta{\Hcal}} L(D, \lambda) 
=   \min_{D \in \Delta{\Hcal}} ~ \max_{\lambda \in \real^+} L(D, \lambda).  \label{eqn:minmax}
\ee  
For the practical issue of  convergence,  we put a bound for 
the  dual variable,  
$ \lambda \in \Lambda = [0, \lambda_{\max}]$. 
After putting the bound for  $\lambda$,  the duality gap is still zero 
by the compactness and convexity of $\Lambda$;   
\be 
L^*_{\Lambda} =  \max_{\lambda \in \Lambda} ~  \min_{D \in \Delta {\Hcal }} L(D, \lambda) 
= \min_{D \in \Delta {\Hcal}} ~ \max_{\lambda \in \Lambda} L(D, \lambda).\label{eqn:bounded-minmax}
\ee 
The  optimal solution $(D^*_{\Lambda}, \lambda^*_{\Lambda})$ of (\ref{eqn:bounded-minmax}), 
can be  found 
as the equilibrium of a  repeated zero sum game  of two players, 
the learner  seeking  $D$ that minimizes  $ L(D, \lambda)$ and 
Nature  seeking  $\lambda$  that maximizes  $ L(D, \lambda)$ \cite{Boyd04}. 

The seminal paper  of \cite{Freund96} propose 
how  to find an approximated solution of (\ref{eqn:bounded-minmax}), denoted by $(\bar{D}, \bar{\lambda})$, 
using Hedge algorithm.  
The nice property of $(\bar{D}, \bar{\lambda})$ is 
that   each of its error and fairness degree is close to 
that of the unconstrained optimal solution of (\ref{eqn:minmax}), respectively as in Theorem \ref{thm:compare-opt}  
(Appendix \ref{append:learning-alg} provides detail explanations on finding an approximated optimal solution as well as the proof of Theorem  \ref{thm:compare-opt}.)
\begin{theorem}  \label{thm:compare-opt} 
Suppose that  $\gamma > \inf_{h \in \Hcal}  I_{\alpha}(\ub_h;n)$. 
For any given $\nu$,  after $T=\frac{4 A_{\alpha}^2 \ln 2}{\nu^2}$ iterations,
the randomized hypothesis $\bar{D}$   satisfies  
$~~R_S(\bar{D}) ~ \leq ~  L^* + 2 \nu$ and $I_{\alpha}(\bar{D};n) ~ \leq ~  \gamma + \frac{1+2\nu}{\lambda_{\max}}. 
$
where $L^*$ is the optimal value of (\ref{eqn:randomized_ferm}) and $A_{\alpha}$ is a constant depending on $r, ~\alpha, ~\gamma, ~\lambda_{\max}$. 
\end{theorem} 
 
\section{Experiments} \label{sec:experiments}

This section shows our experimental results  of  the approximated solution $\bar{D}$ on   real data sets, 
"Adult income data set" \cite{Lichman13} and 
"COMPAS"  recidivism data set \cite{Angwin16} 
The task of Adult income data set is  to predict if a person's income is no less than $\$ 50$k per year.  
For COMPAS data set, we use data samples whose race is either Caucasian or African-American. 
The task of COMPAS data set is to predict if an individual is rearrested within two years after the first arrest. 
In our setting, the case of recidivism within two years corresponds to label ``0", since 
the label ``0" indicates an undesirable decision to an individual and nobody wants re-arrest. 
(see  Appendix  \ref{appendix:experiments}  for detail explanations of data sets  and 
detail description of experiments. Appendix  \ref{appendix:experiments} provides additional experiments, too.)     
We use $\lambda_{max} = 20, \nu = 0.005$, 
which is empirically found  
so that the set $\Lambda = [0, \lambda_{\max}]$ is sufficiently large 
and that $\gamma >\inf_{h \in \Hcal} I_{\alpha}(\ub_h;n)$. 
Regarding to  $b_i = a(h(x_i) - y_i) +c$, 
we fix the value of $a$ as 5, and change the values 
of $c \in \{8, 9, 10\}$ to investigate 
the effect of $r= \frac{c}{a}$ on the performance.

\subsection{ Individual Fairness $I_{\alpha}$ vs. Group Fairness  $V$} \label{subsec:vs}

We investigate how between-group term $V$ behaves 
as   $\gamma$, the fairness constraint on GE (individual fairness), changes.

Fig. \ref{fig:ge_v_alpha1} shows the graphs of  $I_{\alpha}$ and $V$ for  various $\gamma$  values and the datasets. 
Recall that  the fairness constraint is  $I_{\alpha} \leq \gamma$ and  
small $I_{\alpha}$ implies    low degree of  unfairness (i.e., high degree of fairness).

\begin{figure}[bt]
  \centering
  \captionsetup{justification=centering}
  \subfigure[Adult income]
  {\label{subfig:adult_1_ge_v0}
  \includegraphics[width=0.42\columnwidth]{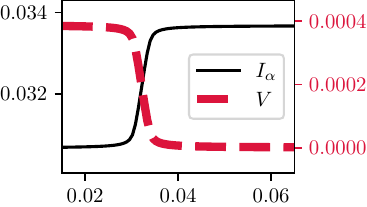}
  }
  \hskip 3.5mm
  \subfigure[COMPAS]
  {\label{subfig:compas_1_ge_v0}
  \includegraphics[width=0.42\columnwidth]{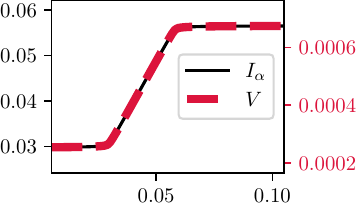}
  }  
  \subfigure[Law school]
  {\label{subfig:law_school_1_ge_v0}
  \includegraphics[width=0.38\columnwidth]{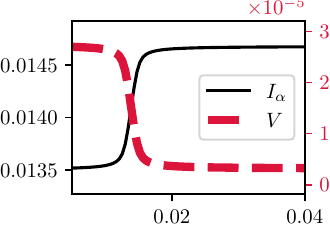}
  }  
  \subfigure[Dutch census]
  {\label{subfig:dutch_census_1_ge_v0}
  \includegraphics[width=0.44\columnwidth]{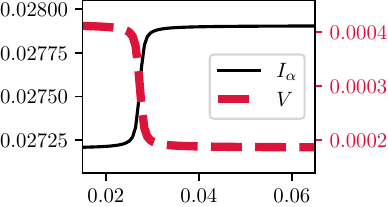}
  }
  \caption{   $I_{\alpha}$ and $V$  when $\alpha =1$\\
        ($x$ axis is $\gamma$, left $y$ axis  $I_{\alpha}$, and right $y$ axis $V$)} 
  \label{fig:ge_v_alpha1}
\end{figure}

\begin{table}[bth]
\centering
   \begin{NiceTabular}{lrr}  
      \toprule
      \bfseries Adult inc.  & $n_g$ & $r^{POS}_g$ \\
      \midrule 
        $g=0$ (Female)   & 4217             & 0.114 \\   
        $g=1$ (Male)   & \textbf{\textit{8723}} &  \textbf{\textit{0.310}} \\
      \toprule  
      \bfseries COMPAS   & $n_g$ & $r^{POS}_g$  \\
      \midrule  
        $g=0$ (African-American)   & \textbf{966} & 0.491  \\ 
        $g=1$ (Caucasian)       & 618 &  \textbf{0.607} \\
      \toprule
      \bfseries Law school  & $n_g$ & $r^{POS}_g$ \\
      \midrule  
        $g=0$ (Female)   & 2464              & 0.884 \\ 
        $g=1$ (Male)   & \textbf{\textit{3144}}  &  \textbf{\textit{0.912}}\\  
      \toprule
      \bfseries Dutch census   & $n_g$ & $r^{POS}_g$ \\ 
      \midrule  
        $g=0$ (Female)  & \textbf{\textit{9145}} & 0.332  \\
        $g=1$ (Male)   & 8981 &  \textbf{\textit{0.625}} \\
      \bottomrule 
   \end{NiceTabular}
\captionsetup{justification=centering}
\caption{ $r^{POS}_g$ for Datasets and Groups} \label{table:total_r_g_POS}
\end{table}

\begin{figure*}[t] 
  \centering 
  \subfigure[$\alpha =0$]{\label{subfig:err_0} 
  \includegraphics[width=0.65\columnwidth]{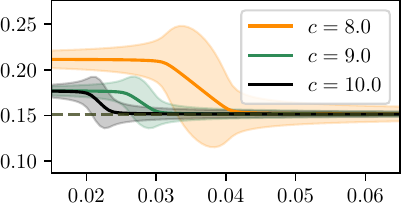}
  }
  \hfill
  \subfigure[$\alpha=1$]{\label{subfig:err_1} 
  \includegraphics[width=0.65\columnwidth]{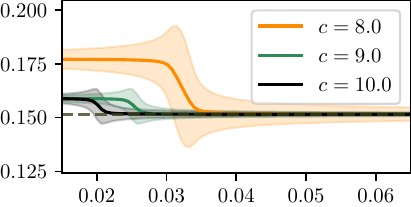}
  }
  \hfill
  \subfigure[$\alpha=2$]{\label{subfig:err_2}
  \includegraphics[width=0.65\columnwidth]{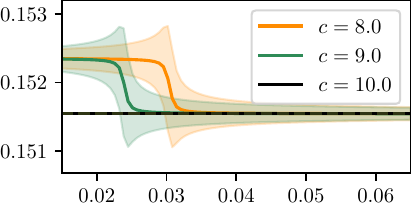}
  }
  \caption{Adult income:  test error when $a=5$ ($x$ axis is $\gamma$)}  \label{fig:test_error} 
  \vskip 1mm
  \subfigure[$\alpha =0$]{\label{subfig:ge_0}
  \includegraphics[width=0.65\columnwidth]{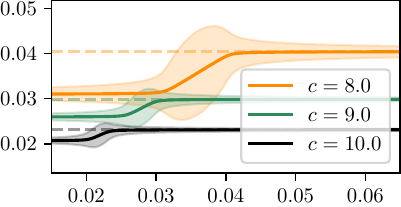}
  }
  \hfill
  \subfigure[$\alpha=1$]{\label{subfig:ge_1} 
  \includegraphics[width=0.65\columnwidth]{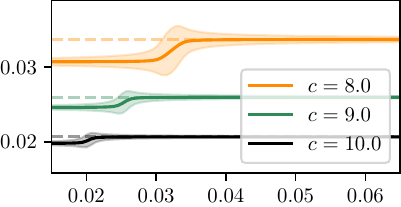}
  }
  \hfill
  \subfigure[$\alpha=2$]{\label{subfig:ge_2}
  \includegraphics[width=0.65\columnwidth]{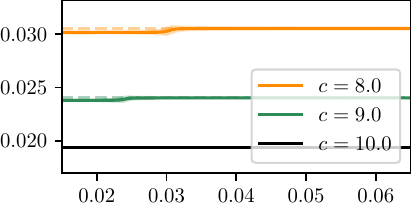}
  } 
  \caption{Adult income:  $I_{\alpha}$ when $a=5$ ($x$ axis is $\gamma$)}\label{fig:test_ge}   
\end{figure*}

In Fig. \ref{fig:ge_v_alpha1},   only COMPAS data set has 
the positive correlation between $I_{\alpha}$ and $V$ 
and other data sets have the negative correlation. 
In Fig. \ref{subfig:compas_1_ge_v0},  
$V$ and $I_{\alpha}$   are  decreasing as  $\gamma$ is decreasing: 
strengthening individual fairness indeed yields enhancing group fairness.  
However, in Fig.  \ref{subfig:adult_1_ge_v0}, \ref{subfig:law_school_1_ge_v0}, and \ref{subfig:dutch_census_1_ge_v0},  
as $\gamma$ decreases,    
between-group term $V$ increases even though $I_{\alpha}$ decreases;   
 when we strengthen the individual fairness degree,  
the degree of group fairness   gets degraded. 
Based on the experiments, we conclude that strengthening individual fairness degree 
does not always lead to enhancement of group  fairness. 
This observation has not been found in existing literature as far as we know 
since there is little work investigating 
how group fairness   changes when individual fairness is controlled. 

To  investigate  when there is a positive correlation between $I_{\alpha}$  and $V$   for binary group cases, 
we examine the cardinality $n_g$ and the value of  base rate  $r^{POS}_g$  of group $g$   for the datasets, 
which  are listed in Table \ref{table:total_r_g_POS},   
where  $r^{POS}_g$  of group $X^g$  is defined as  $X^g$ $r^{POS}_g = \Psf[y_i=1|x_i \in X^g]$. 
The female or African-American group  corresponds to $g=0$ 
and  the male or Caucasian group corresponds to   $g=1$.  
From Table \ref{table:total_r_g_POS}, we observe  that 
i)  $r^{POS}_1 > r^{POS}_0$ for all datasets; the group $g=1$ 
has higher number of true labels than the group $g=0$, and    
ii) $n_1$ is greater than or almost equal to $n_0$ except in COMPAS dataset;   
COMPAS dataset has $\frac{n_1}{n_0} =0.640$ while other datasets  has $\frac{n_1}{n_0} > 0.98$.
Based on the  observations, for binary groups, it seems  that 
$I_{\alpha}$  and $V$ have a positive correlation   
when 
  $r_g^{POS}$    of the  group with small cardinality  is  larger  than that  of the other  group.

Another observation  is that  
 the  values of between group term $V$ are very small compared to $I_{\alpha}$. 
(recall that   V is always smaller than $I_{\alpha}$   
from the property of additive decomposability),   
which implies that when we control individual unfairness $I_{\alpha}$ so that it takes a small value, 
the group unfairness is also kept as a small value. 
The reason   why  $V$ is very small compared to $I_{\alpha}$   s 
is that  there are only two groups. 
 From the equation (\ref{eqn:add_decomp}) of additive decomposability, 
we know that $V \leq I_{\alpha}$ and that $V$ approaches to $I_{\alpha}$ 
as  the number of groups is increasing (i.e., group size is decreasing) by taking intersection over features;  
in the ultimate case when  each individual becomes a group with group size one, 
it holds that  $V=I_{\alpha}$. 
The observation that  the degree of group unfairness, 
between-group term $V$,  increases as  
 the number of groups increases (or the cardinality of each group is decreasing)  
 is consistent with  the well known fact 
 that the degree of unfairness gets larger as the number of subgroups is increasing \cite{Kearns18,Kearns18a,Foulds20}.

\subsection{Trade-off between Fairness and Accuracy} 
Fig. \ref{fig:test_error} shows the trade-off between fairness and efficiency for Adult income data set. 
Each graph shows the averaged value of  test error of the randomized hypothesis $\bar D$ 
and the shaded regions over the 
graphs represent  95\% confidence intervals. 
The dotted lines represent  the  test error of the empirical risk minimizer, 
$ h_0 = \arg \min_{h \in \Hcal} L(h, 0)$. 
In Fig. \ref{fig:test_error}, all the graphs of test error exhibit the decreasing behavior 
as $\gamma$ increases. 
Based on this observation, in general, test error gets low (i.e., accuracy is enhanced) 
as fairness constraint gets loose. 
Fig. \ref{fig:test_ge} illustrate 
how  $I_{\alpha}$ of $\bar{D}$ varies  as $\gamma$  changes  
for each $\alpha =0, 1, 2$.  
Every graph in Fig. \ref{fig:test_ge} shows 
that $I_{\alpha}$ increases as $\gamma$  increases (i.e., the fairness constraint gets loose.)  
For each fixed $\alpha$, as the value of $c$ increases, 
i.e., $r = \frac{c}{a}$  increases, we observe that 
the unfairness degree $I_{\alpha}$ decreases   
since  the quantity $\frac{c-a}{c}$ which is the relative difference among benefits, 
$(c-a, c, c+a)$, gets diminishing. 
From our experiments, 
we assure that fairness is achieved at the cost of accuracy and 
that trade-off between accuracy and fairness is more sensitive to low $\alpha$ than to high $\alpha$.

\section{Summary} \label{summary-sec}
We examine the impact of controlling individual fairness ($I_{\alpha}$)  
on group fairness (between-group term) $V$ through FERM-GE, a  (randomized)   ERM 
with a fairness constraint given by GE.  
We theoretically and experimentally analyze  FERM-GE  and  prove 
that  the   randomized FERM-GE  is PAC learnable.     
Our experiments show that enhancing individual fairness, $I_{\alpha}$, does not always improve group fairness $V$  
and that   controlling individual fairness  makes group unfairness  small enough. 
 
\bibliography{aaai2026,../../bibtex_files/MLfairness,../../bibtex_files/algorithm,../../bibtex_files/economics,../../bibtex_files/math}

\begin{thebibliography}{28}
\providecommand{\natexlab}[1]{#1}

\bibitem[{Agarwal et~al.(2018)Agarwal, Beygelzimer, Dudík, Langford, and
  Wallach}]{Agarwal18}
Agarwal, A.; Beygelzimer, A.; Dudík, M.; Langford, J.; and Wallach, H. 2018.
\newblock A reductions approach to fair classification.
\newblock In \emph{Proc. of the 35th {I}nternational {C}onference on {M}achine
  {L}earning (I{CML} 2018)}.

\bibitem[{Angwin et~al.(2016)Angwin, Larson, Mattu, and Kircher}]{Angwin16}
Angwin, J.; Larson, J.; Mattu, S.; and Kircher, L. 2016.
\newblock Machine bias.
\newblock ProPublica.

\bibitem[{Boyd and Vandenberghe(2004)}]{Boyd04}
Boyd, S.; and Vandenberghe, L. 2004.
\newblock \emph{Convex {O}ptimization}.
\newblock New York: Cambridge {U}niversity {P}ress.

\bibitem[{Cousins(2021)}]{Cousins21}
Cousins, C. 2021.
\newblock An axiomatic theory of provably-fair welare-centric machine learning.
\newblock In \emph{Proc. of the 35th {C}onference on {N}eural {I}nformation
  {P}rocessing {S}ystems (Neur{IPS} 2021)}.

\bibitem[{der Laan(2017)}]{Laan00}
der Laan, P.~V. 2017.
\newblock The 2001 census in the {N}etherlands.
\newblock In \emph{Proc. of the Census of Population}.

\bibitem[{Donini et~al.(2018)Donini, Oneto, Ben-David, Shawe-Taylor, and
  Pontil}]{Donini18}
Donini, M.; Oneto, L.; Ben-David, S.; Shawe-Taylor, J.; and Pontil, M. 2018.
\newblock Empirical risk minimization under fairness constraints.
\newblock In \emph{Proc. of the 32nd Neural Information Processing Systems
  (Neur{IPS} 2018)}, 2796--2806.

\bibitem[{Dwork et~al.(2012)Dwork, Hardt, Pitaasi, Reingold, and
  Zemel}]{Dwork12}
Dwork, C.; Hardt, M.; Pitaasi, T.; Reingold, O.; and Zemel, R. 2012.
\newblock Fairness through awareness.
\newblock In \emph{Proc. of the 3rd {I}nnovations in {T}heoretical {C}omputer
  {S}cience {C}onference}, 214--226.

\bibitem[{Feldman et~al.(2015)Feldman, Fiedler, Moeller, Scheidegger, and
  Venkatasubramanian}]{Feldman15}
Feldman, M.; Fiedler, S.~A.; Moeller, J.; Scheidegger, C.; and
  Venkatasubramanian, S. 2015.
\newblock Certifying and removing dispate impact.
\newblock In \emph{Proc. of the 21th {ACM SIGKDD} of {I}nternational
  {C}onference on {K}nowledge {D}iscovery and {D}ata {M}ining (K{DD} 2015)},
  259--268.

\bibitem[{Foulds et~al.(2020)Foulds, Islam, Keya, and Pan}]{Foulds20}
Foulds, J.~R.; Islam, R.; Keya, K.~N.; and Pan, S. 2020.
\newblock An Intersectional Definition of Fairness.
\newblock In \emph{Proc. of the 36th {C}onference on {D}ata {E}ngineering
  (I{CDE} 2020)}.

\bibitem[{Freund and Schapire(1996)}]{Freund96}
Freund, Y.; and Schapire, R.~E. 1996.
\newblock Game theory, on-line prediction and boosting.
\newblock In \emph{Proc. of the 9th Annual Conference on Computational Learning
  Theory (COLT 1996)}, 325--332.

\bibitem[{Freund and Schapire(1997)}]{Freund97}
Freund, Y.; and Schapire, R.~E. 1997.
\newblock A decision-theoretic generalization of on-line learning and
  application to boosting.
\newblock \emph{Journal of Computer and System Sciences}, 55(1): 119--139.

\bibitem[{Hardt, Price, and Srebor(2016)}]{Hardt16}
Hardt, M.; Price, E.; and Srebor, N. 2016.
\newblock Equality of opportunity in supervised learning.
\newblock In \emph{Proc. of the 30th {C}onference on {N}eural {I}nformation
  {P}rocessing {S}ystems (Neur{IPS} 2016)}.

\bibitem[{Heidari et~al.(2018)Heidari, Ferrari, Gummandi, and
  Krause}]{Heidari18}
Heidari, H.; Ferrari, C.; Gummandi, K.~P.; and Krause, A. 2018.
\newblock Fairness behind a veil of Ignorance: a welfare analysis for automated
  decision making.
\newblock In \emph{Proc. of the 32nd {C}onference on {N}eural {I}nformation
  {P}rocessing {S}ystems (Neur{IPS} 2018)}, 1273--1283.

\bibitem[{Kearns et~al.(2018)Kearns, Neel, Roth, and Wu}]{Kearns18}
Kearns, M.; Neel, S.; Roth, A.; and Wu, Z.~S. 2018.
\newblock Preventing fairness gerrymandering: auditing and learning for
  subgroup fairness.
\newblock In \emph{Proc. of the 35th {I}nternational {C}onference on {M}achine
  {L}earning (I{CML} 2018)}.

\bibitem[{Kearns et~al.(2019)Kearns, Neel, Roth, and Wu}]{Kearns18a}
Kearns, M.; Neel, S.; Roth, A.; and Wu, Z.~S. 2019.
\newblock An empirical study of rich subgroup fairness for machine learning.
\newblock In \emph{Proc. of the {C}onference on {F}airness, {A}ccountablity,
  and {T}ransparency}.

\bibitem[{Kearns, Roth, and Sharifi-Malvajerdi(2019)}]{Kearns19}
Kearns, M.; Roth, A.; and Sharifi-Malvajerdi, S. 2019.
\newblock Average Individual Fairness: Algorithms, Generalization and
  Experiments.
\newblock In \emph{Proc. of the 33rd {C}onference on {N}eural {I}nformation
  {P}rocessing {S}ystems (Neur{IPS} 2019)}.

\bibitem[{Kleinberg, Mullainathan, and Raghavan(2017)}]{Kleinberg17}
Kleinberg, J.; Mullainathan, S.; and Raghavan, M. 2017.
\newblock Inherent trade-offs in the fair determination of risk scores.
\newblock In \emph{Proc. of the 8th Innovations in Theoretical Computer Science
  Conference}.

\bibitem[{Kusner et~al.(2017)Kusner, Loftus, Russel, and Silva}]{Kusner17}
Kusner, M.; Loftus, J.; Russel, C.; and Silva, R. 2017.
\newblock Counterfacutal fairness.
\newblock In \emph{Proc. of the 31st Conference on Neural Information
  Processing Systems (Neur{IPS} 2017)}.

\bibitem[{Lichman(2013)}]{Lichman13}
Lichman, M. 2013.
\newblock {UCI} machine learning repository.
\newblock \url{http://archive.ics.uci.edu/ml}.

\bibitem[{Lowy et~al.(2022)Lowy, Baharlouei, Pavan, Razaviyayn, and
  Beirami}]{Lowy22}
Lowy, A.; Baharlouei, S.; Pavan, R.; Razaviyayn, M.; and Beirami, A. 2022.
\newblock A stochastic optimization framework for fair risk minimization.
\newblock \emph{Transactions on {M}achine {L}earning {R}esearch}, 2022.

\bibitem[{Mohr, Rostamizadeh, and Talwalkar(2018)}]{Mohri2018}
Mohr, M.; Rostamizadeh, A.; and Talwalkar, A. 2018.
\newblock \emph{Foundations of {M}achine {L}earning}.
\newblock The MIT Press, 2nd edition.

\bibitem[{Rudin(1976)}]{Rudin76}
Rudin, W. 1976.
\newblock \emph{Principles of {M}athmatical {A}nalysis}.
\newblock McGraw-Hill, 3rd edition.

\bibitem[{Rudin(1987)}]{Rudin87}
Rudin, W. 1987.
\newblock \emph{Real and {C}omplex {A}nalysis}.
\newblock McGraw-Hill, 3rd edition.

\bibitem[{Sheng and Ling(2006)}]{Sheng06}
Sheng, V.~S.; and Ling, C.~X. 2006.
\newblock Thresholding for making classifiers cost-sensitive.
\newblock In \emph{Proc. of the 21st National Conference on American
  Association for Artificial Intelligence (A{AAI} 2006)}, 476--481.

\bibitem[{Shorrocks(1980)}]{Shorrocks80}
Shorrocks, A.~F. 1980.
\newblock The class of additively decomposable inequalty measures.
\newblock \emph{Econometrica: Journal of the Econometric Society}, 48(3):
  613--625.

\bibitem[{Speicher et~al.(2018)Speicher, Heidari, Grgic-Hlaca, Gummandi,
  Singla, Weller, and Zafar}]{Speicher18}
Speicher, T.; Heidari, H.; Grgic-Hlaca, N.; Gummandi, K.~P.; Singla, A.;
  Weller, A.; and Zafar, M.~B. 2018.
\newblock A unified approach to quantifying algorithmic unfairness: measuring
  individual \& group fairness via inequality indices.
\newblock In \emph{Proc. of the 24th {ACM SIGKDD} of {I}nternational
  {C}onference on {K}nowledge {D}iscovery \& {D}ata {M}ining (K{DD} 2018)},
  2239--2248.

\bibitem[{Wightman(1998)}]{Wightman98}
Wightman, L. 1998.
\newblock L{SAC} national longitudinal bar passage study.
\newblock LSAC research report series.

\bibitem[{Zafar et~al.(2017)Zafar, Valer, Rodriguez, and Gummadi}]{Zafar17}
Zafar, M.~B.; Valer, I.; Rodriguez, M.~G.; and Gummadi, K. 2017.
\newblock Fairness beyond disparate treatment \& disparate impact: learning
  classification without disparate mistreatment.
\newblock In \emph{Proc. of International World Wide Web Conference Commitee
  (I{W}3{C}2)}, 1171--1180.

\end{thebibliography}

\newpage
\appendix
\onecolumn
\section{Detail Explanation of Algorithm \ref{alg:fair}: A Learning Algorithm Achieving FERM} \label{append:learning-alg}

This section studies how to find an optimal (\emph{randomized}) hypothesis  satisfying 
the fairness constraint $I_{\alpha} \leq \gamma$ for given $\alpha$ and $\gamma$.  
By randomizing hypotheses, we can achieve better accuracy-fairness tradeoffs than  using only the pure hypotheses. 
A randomized hypothesis $D$ is a probability distribution on $\Hcal$, 
that is $D = \sum_{h \in \Hcal} D_h h$ with $\sum_{h\in\Hcal} D_h =1$. 
The sample error of $D$ is given by
$R_S(D) = \sum_{h \in \Hcal} D_h R_S(h)$ 
and the corresponding generalized entropy is  
$I_{\alpha}(D;n) = \sum_{h \in \Hcal} D_h I_{\alpha}(\ub_h;n)$ 
for some $\alpha \in[0, \infty)$. 
Let $\Delta \Hcal$ be  the set of all probability distributions over $\Hcal$. 
We consider the following (randomized) FERM-GE problem: 
\be
   & &  \min\limits_{D \in \Delta \mathcal{H}} ~R_S(D) \label{eqn:randomized_ferm_app}\\
   & & ~~\mbox{subject to }   I_{\alpha}(D;n)   \leq \gamma.  \nonumber
\ee  
The above problem (\ref{eqn:randomized_ferm_app}) is 
a convex optimization: 
the  objective function is linear in $D_h$,  $\sum_{h\in\Hcal} D_h R_S(h)$  
and the linear constraint is also linear in $D_h$, $\sum_{h\in\Hcal} D_h I_{\alpha}(\ub_h;n) \leq \gamma$, 
and we we want to find $\{D_h\}_{h\in \Hcal}$ with $\sum_h D_h =1$.   

We assume that $\gamma >\inf_{h \in \Hcal} I_{\alpha}(\ub_h;n)$. 
Since (\ref{eqn:randomized_ferm_app}) is a feasible convex problem defined on a finite dimensional space, 
\footnote{Even though $\Hcal$ has infinitely many hypotheses, since   
the hypotheses in $\Hcal$ are applied to the finite sample space $S$, the number of   different labelings of $h \in \Hcal$ 
is finite, i.e., cardinality of $\{ h(S)~|~ h \in \Hcal\}$ is finite by Sauer's Lemma \cite{Mohri2018} 
(i.e., count only once if $h_1(S) = h_2(S)$ for $h_1 \neq h_2$)
when VC dimension of is finite.}    
the duality gap is zero, i.e., 
\be 
L^* =  \max_{\lambda \in \real^+} ~\min_{D \in \Delta{\Hcal}} L(D, \lambda) 
=   \min_{D \in \Delta{\Hcal}} ~ \max_{\lambda \in \real^+} L(D, \lambda)  \label{eqn:minmax_app}
\ee  
where  $L(D, \lambda)$   is the Lagrangian of (\ref{eqn:randomized_ferm_app}), 
$L(D, \lambda) = R_S(D) + \lambda \Big(I_{\alpha}(D;n)- \gamma \Big)$.    
Note that $\lambda \in [0, \infty)$ in (\ref{eqn:minmax_app}).   
We bound  the range of $\lambda$  so that  
$ \lambda \in \Lambda = [0, \lambda_{\max}]$ to ensure the convergence to an equilibrium. 
Since $\Lambda$ is   compact and convex, it holds that 
\be 
L^*_{\Lambda} =  \max_{\lambda \in \Lambda} ~  \min_{D \in \Delta {\Hcal }} L(D, \lambda) 
= \min_{D \in \Delta {\Hcal}} ~ \max_{\lambda \in \Lambda} L(D, \lambda).\label{eqn:bounded-minmax_app}
\ee 
The  optimal solution $(D^*_{\Lambda}, \lambda^*_{\Lambda})$ of (\ref{eqn:bounded-minmax_app}), 
usually called the saddle point of $L^*_{\Lambda}$, 
can be  found 
as the equilibrium of a  repeated zero sum game  of two players, 
the learner seeking $D$ that minimizes  $ L(D, \lambda)$ and 
Nature seeking $\lambda$ that maximizes $ L(D, \lambda)$ \cite{Boyd04}.  

 In \cite{Freund96,Freund97}, the authors have proposed a provable method to find an approximate solution of 
(\ref{eqn:bounded-minmax_app}) using Hedge  algorithm.   
Exploiting  Hedge  algorithm\footnote{We modify the original Hedge algorithm for our objective 
that Nature finds $\lambda \in [0, \lambda_{\max}]$ maximizing $L(D, \lambda)$, 
since   the original Hedge algorithm is to find a randomized hypothesis minimizing loss.},
we will  find  an approximate equilibrium   $(\bar{D}, \bar{\lambda})$ 
of  $L^*_{\Lambda}$ such that  
for given $\nu >0$, it holds that 
\be 
L^*_{\Lambda}- \nu & \leq & \min_{D}~L(D, \bar{\lambda}), \label{eqn:mu-approximate-min}\\  
\max_{\lambda \in \Lambda} ~L(\bar{D},  \lambda) & \leq & L^*_{\Lambda} + \nu. \label{eqn:mu-approximate-max}
\ee

Hedge algorithm assumes the existence of an oracle yielding 
the best response of the learner: 
in our case, 
it is the item 2, $\hat{h}(t) = \arg \min_{h \in \Hcal} L(h, \hat{\lambda}^{(t)})$, 
in the for loop of Algorithm \ref{alg:fair}. 
Another assumption in Hedge algorithm is that  the amount of gain  of a strategy,   
which corresponds to  $L(h, \lambda)$, should take values between 0 and 1. 
This  amount of gain is used in item 3, the process of updating $w^{(t)}_j$. 
Since   $L(h, \lambda) \notin [0,1]$,  
we will find  $A_{\alpha}$ and $B$ that makes  
$0 \leq  l(h, \lambda) \leq 1$ where $  l(h, \lambda)=  \frac{L(h, \lambda) + B}{A_{\alpha}}$.  
Because  $L(h, \lambda) = R_S(h) + \lambda \big(I_{\alpha}(\ub_h;n) -\gamma \big)$, 
it holds that  
\beqa
\lambda (  I_{\alpha}^{\min}   -\gamma) \leq L(h,\lambda) \leq 1 + \lambda ( I_{\alpha}^{ \max}-\gamma)
\eeqa
where $I_{\alpha}^{\min} = \min_{\ub} I_{\alpha}(\ub;n)$ and $I_{\alpha}^{\max} = \max_{\ub} I_{\alpha}(\ub;n)$. 
We have $I_{\alpha}^{\min}=0$ because $I_{\alpha}(\ub;n) \geq 0$ . 
It can be easily checked  that $I_{\alpha}^{\max} \leq I_{\alpha}^{UP}(r)$ 
where 
\beqa
I_{\alpha}^{UP}(r) 
:= \left\{ \begin{array}{ll}
        \ln \Big( \frac{r+1}{r-1} \Big) & \mbox{for } \alpha =0, \\
        \Big(\frac{r+1}{r-1}\Big)\ln \Big(\frac{r+1}{r-1}\Big)   & \mbox{for } \alpha =1,\\
        \frac{1}{|\alpha(\alpha-1)|} \Big[ \Big(\frac{r+1}{r-1}\Big)^{\alpha} -1 \Big] 
                 & \mbox{for } \alpha \neq 0,1. 
   \end{array} \right.
\eeqa
Therefore, by setting  $A_{\alpha}= 1+ \lambda_{\max}(\gamma + I_{\alpha}^{UP}(r))$ 
and $B= \gamma\lambda_{\max}$, 
we have $l(h, \lambda) \in [0,1]$. 
We will use $l(h, \lambda)$ instead of $L(h, \lambda)$ 
in updating the weight vector, $(w^{(t)}_0,~w^{(t)}_1)$, in Hedge algorithm.
Note that $\arg \min_{h \in \Hcal} L(h, \lambda) = \arg \min_{h \in \Hcal} l(h, \lambda)$ and 
$\arg \max_{\lambda \in \Lambda} L(D, \lambda) = \arg \max_{\lambda \in \Lambda} l(D, \lambda)$ 
since adding and multiplying a positive  constant has no effect on optimization. 

Applying Hedge Algorithm to our case,  we have Theorem \ref{thm:approx-eq},  
a direct result of the analysis of \cite{Freund97}.  
\begin{theorem} \label{thm:approx-eq}
After $T=\frac{4 A_{\alpha}^2 \ln 2}{\nu^2}$ iterations, 
the output of the algorithm,  $(\bar{D},~\bar{\lambda})$, 
satisfies   (\ref{eqn:mu-approximate-min}) and (\ref{eqn:mu-approximate-max}). 
\end{theorem}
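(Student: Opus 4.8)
The plan is to follow the Freund--Schapire analysis of learning in repeated zero-sum games via multiplicative weights \cite{Freund96, Freund97}, adapted to the present setting in which Nature (the maximizer) is the Hedge player over only the two pure strategies $\{\lambda_0,\lambda_1\}=\{0,\lambda_{\max}\}$ while the learner best-responds through the oracle. First I would pass from $L$ to the rescaled payoff $l(h,\lambda)=\frac{L(h,\lambda)+B}{A_{\alpha}}\in[0,1]$. Since this map is affine and increasing in $L$, it commutes with both $\min$ and $\max$: writing $\ell^{*}_{\Lambda}$ for the value of the rescaled game, we have $L^{*}_{\Lambda}=A_{\alpha}\ell^{*}_{\Lambda}-B$, and (\ref{eqn:mu-approximate-min})--(\ref{eqn:mu-approximate-max}) for $L$ with slack $\nu$ are \emph{equivalent} to the same two inequalities for $l$ with slack $\nu/A_{\alpha}$. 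Hence it suffices to prove the approximate-equilibrium bounds in the rescaled game with error $\nu/A_{\alpha}$.

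The key quantitative input is the regret guarantee of Nature's Hedge updates. Setting $\Phi_{t}=w^{(t)}_{0}+w^{(t)}_{1}$ and letting $p^{(t)}=(w^{(t-1)}_{0},w^{(t-1)}_{1})/\Phi_{t-1}$ be the mixture played at round $t$, the standard potential argument uses $(1+\kappa)^{x}\le 1+\kappa x$ for $x\in[0,1]$ to obtain $\Phi_{T}\le 2\exp\!\big(\kappa\sum_{t}\langle p^{(t)},g^{(t)}\rangle\big)$, where $g^{(t)}_{j}=l(\hat h^{(t)},\lambda_{j})$, together with $\Phi_{T}\ge(1+\kappa)^{\,\sum_{t}g^{(t)}_{j}}$ for each $j$ and $\ln(1+\kappa)\ge\kappa-\kappa^{2}/2$. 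Dividing by $\kappa T$ yields, for both $j$,
\beqa
\frac{1}{T}\sum_{t=1}^{T}g^{(t)}_{j}\;-\;\frac{1}{T}\sum_{t=1}^{T}\langle p^{(t)},g^{(t)}\rangle\;\le\;\frac{\kappa}{2}+\frac{\ln 2}{\kappa T}.
\eeqa
Substituting $\kappa=\nu/(2A_{\alpha})$ and $T=4A_{\alpha}^{2}\ln 2/\nu^{2}$ makes the right-hand side $\tfrac{\nu}{4A_{\alpha}}+\tfrac{\nu}{2A_{\alpha}}\le \nu/A_{\alpha}$, exactly the per-round regret budget required above.

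It then remains to convert this regret bound into (\ref{eqn:mu-approximate-min}) and (\ref{eqn:mu-approximate-max}). I would exploit that $l(D,\lambda)$ is linear in $D$ and affine in $\lambda$, so $\max_{\lambda\in\Lambda}l(\bar D,\lambda)$ is attained at an endpoint and equals $\max_{j}\frac1T\sum_{t}g^{(t)}_{j}$, while the learner's best response satisfies $\langle p^{(t)},g^{(t)}\rangle=l(\hat h^{(t)},\tilde\lambda^{(t)})=\min_{h}l(h,\tilde\lambda^{(t)})\le\ell^{*}_{\Lambda}$, where $\tilde\lambda^{(t)}=p^{(t)}_{1}\lambda_{\max}$ is the scalar equivalent of the mixture $p^{(t)}$. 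The regret bound then gives $\max_{\lambda}l(\bar D,\lambda)\le\frac1T\sum_{t}\langle p^{(t)},g^{(t)}\rangle+\nu/A_{\alpha}\le\ell^{*}_{\Lambda}+\nu/A_{\alpha}$, which is (\ref{eqn:mu-approximate-max}). For (\ref{eqn:mu-approximate-min}), writing $\bar\lambda=\frac1T\sum_{t}\tilde\lambda^{(t)}$ and using affinity together with $\min$-of-sum $\ge$ sum-of-$\min$, I get $\min_{h}l(h,\bar\lambda)\ge\frac1T\sum_{t}\min_{h}l(h,\tilde\lambda^{(t)})=\frac1T\sum_{t}\langle p^{(t)},g^{(t)}\rangle\ge\max_{j}\frac1T\sum_{t}g^{(t)}_{j}-\nu/A_{\alpha}\ge\ell^{*}_{\Lambda}-\nu/A_{\alpha}$, the last step using $\max_{j}\frac1T\sum_{t}g^{(t)}_{j}=\max_{\lambda}l(\bar D,\lambda)\ge\min_{D}\max_{\lambda}l(D,\lambda)=\ell^{*}_{\Lambda}$ from the zero duality gap (\ref{eqn:bounded-minmax}). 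Multiplying through by $A_{\alpha}$ recovers the two claimed inequalities for $L$.

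I expect the main obstacle to lie at the interface between Nature's \emph{sampled} play $\hat\lambda^{(t)}$ and the best-response step: the regret analysis naturally controls the mixture gains $\langle p^{(t)},g^{(t)}\rangle=l(\hat h^{(t)},\tilde\lambda^{(t)})$, whereas the oracle as written responds to the realized sample and $\bar\lambda$ is the average of samples. Because $L$ is affine in $\lambda$ and $\Lambda$ has a single free coordinate, responding to the expected multiplier $\tilde\lambda^{(t)}$ is equivalent to responding to the mixture; the cleanest route is therefore to argue that the best response and the weight update may be taken against $\tilde\lambda^{(t)}$, with $\bar\lambda=\frac1T\sum_t\tilde\lambda^{(t)}$, and otherwise to add an expectation/concentration step (using $\Esf[\hat\lambda^{(t)}\mid\text{history}]=\tilde\lambda^{(t)}$) to transfer from the sampled trajectory to $\langle p^{(t)},g^{(t)}\rangle$. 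The only remaining care is the constant bookkeeping so that the regret genuinely lands at $\nu/A_{\alpha}$, which the choice $T=4A_{\alpha}^{2}\ln2/\nu^{2}$ is tuned to deliver.
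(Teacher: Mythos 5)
Your proposal is correct and follows the same Freund--Schapire skeleton as the paper's proof: rescale $L$ to $l=(L+B)/A_{\alpha}\in[0,1]$, invoke Nature's Hedge regret bound with $\kappa=\nu/(2A_{\alpha})$, use linearity of $l$ in $D$ and affinity in $\lambda$ to exchange the min with the time average, and sandwich everything against the game value $L^*_{\Lambda}$ from (\ref{eqn:bounded-minmax}). Two differences are worth recording. First, you derive the Hedge regret inequality from scratch via the potential argument ($\Phi_T\le 2\exp(\kappa\sum_t\langle p^{(t)},g^{(t)}\rangle)$ versus $\Phi_T\ge(1+\kappa)^{\sum_t g_j^{(t)}}$), whereas the paper imports it wholesale as Theorem \ref{thm:no-regret} and Corollary \ref{cor:no-regret} from \cite{Freund96, Freund97}; your route is self-contained, the paper's is shorter. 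Second, and more substantively, you run the argument against the conditional-mean multiplier $\tilde\lambda^{(t)}=p_1^{(t)}\lambda_{\max}$ with $\bar\lambda=\frac{1}{T}\sum_t\tilde\lambda^{(t)}$, while the paper's proof (and Algorithm \ref{alg:fair}) uses the sampled $\hat\lambda^{(t)}$ throughout. The obstacle you flag at the end is real: the regret analysis controls the mixture payoffs $\langle p^{(t)},g^{(t)}\rangle$, and identifying these with the realized quantities $l(\hat h^{(t)},\hat\lambda^{(t)})$ is exactly the step the paper absorbs silently in passing from Theorem \ref{thm:no-regret} (stated with the mixture $\sum_j w_j^{(t)}\lambda_j/W^{(t)}$) to Corollary \ref{cor:no-regret} (stated with the sample $\hat\lambda^{(t)}$); strictly speaking that passage needs either your derandomized variant or a martingale/expectation argument using $\Esf[\hat\lambda^{(t)}\mid\mathrm{history}]=\tilde\lambda^{(t)}$. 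So your treatment is, if anything, more careful than the paper's on this point; the only price is that what you literally prove is the guarantee for the mixture-playing variant of the algorithm, with the sampled version recovered via the expectation step you describe.
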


\begin{theorem}[Repetition of Theorem \ref{thm:compare-opt}]
Suppose that  $\gamma > \inf_{h \in \Hcal}  I_{\alpha}(\ub_h;n)$. 
For any given $\nu$,  
the randomized hypothesis $\bar{D}$   satisfies\\ 
\beqa
~~~~~~~~R_S(\bar{D}) ~ \leq ~  L^* + 2 \nu,~~~~
I_{\alpha}(\bar{D};n) ~ \leq ~  \gamma + \frac{1+2\nu}{\lambda_{\max}} 
\eeqa
after $T=\frac{4 A_{\alpha}^2 \ln 2}{\nu^2}$ iterations.
\end{theorem}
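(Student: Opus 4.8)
The plan is to read both inequalities off the approximate--saddle--point guarantee of Theorem~\ref{thm:approx-eq}, i.e.\ the conditions (\ref{eqn:mu-approximate-min}) and (\ref{eqn:mu-approximate-max}) that $(\bar{D},\bar{\lambda})$ satisfies after $T=\frac{4A_{\alpha}^2\ln 2}{\nu^2}$ iterations, after first pinning down a constant upper bound on the minmax value $L^*_{\Lambda}$. The single ingredient that makes everything work is the hypothesis $\gamma > \inf_{h\in\Hcal} I_{\alpha}(\ub_h;n)$: it supplies a strictly feasible pure hypothesis $h_0$ with $I_{\alpha}(\ub_{h_0};n) < \gamma$. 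For the point mass $D_0$ on $h_0$ one has $L(D_0,\lambda) = R_S(D_0) + \lambda\big(I_{\alpha}(\ub_{h_0};n)-\gamma\big) \le R_S(D_0) \le 1$ for every $\lambda \ge 0$, since the bracket is negative and $R_S \le 1$. Hence $L^*_{\Lambda} = \min_{D}\max_{\lambda\in\Lambda} L(D,\lambda) \le \max_{\lambda\in\Lambda} L(D_0,\lambda) \le 1$, and because $\Lambda \subseteq \mathbb{R}^+$ we also record $L^*_{\Lambda} \le L^*$.

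First I would establish the error bound. Evaluating the Lagrangian at $\lambda=0$ gives $L(\bar{D},0)=R_S(\bar{D})$, so $R_S(\bar{D}) \le \max_{\lambda\in\Lambda} L(\bar{D},\lambda)$. Applying (\ref{eqn:mu-approximate-max}) and then $L^*_{\Lambda}\le L^*$ yields $R_S(\bar{D}) \le L^*_{\Lambda}+\nu \le L^*+\nu \le L^*+2\nu$, which is the first claim (in fact with $\nu$ to spare).

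Next I would handle the fairness bound. If $I_{\alpha}(\bar{D};n)\le\gamma$ the inequality is immediate, so assume the constraint is violated, i.e.\ $I_{\alpha}(\bar{D};n)-\gamma>0$. Then the inner maximization in $\max_{\lambda\in\Lambda} L(\bar{D},\lambda)$ is attained at $\lambda=\lambda_{\max}$, giving $R_S(\bar{D}) + \lambda_{\max}\big(I_{\alpha}(\bar{D};n)-\gamma\big) \le L^*_{\Lambda}+\nu \le 1+\nu$ by (\ref{eqn:mu-approximate-max}) together with $L^*_{\Lambda}\le1$. Dropping the nonnegative term $R_S(\bar{D})$ and dividing by $\lambda_{\max}$ gives $I_{\alpha}(\bar{D};n) \le \gamma + \frac{1+\nu}{\lambda_{\max}} \le \gamma + \frac{1+2\nu}{\lambda_{\max}}$, which is the second claim.

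The step that deserves the most care---and where the real content lies---is the uniform bound $L^*_{\Lambda}\le 1$: it is exactly here that the strict-feasibility hypothesis $\gamma > \inf_{h} I_{\alpha}(\ub_h;n)$ is indispensable, since without a feasible distribution the Lagrangian value could grow with $\lambda_{\max}$ and both conclusions would become vacuous. One should also confirm that the iteration count $T=\frac{4A_{\alpha}^2\ln 2}{\nu^2}$ quoted in the statement is precisely the one for which Theorem~\ref{thm:approx-eq} certifies (\ref{eqn:mu-approximate-min})--(\ref{eqn:mu-approximate-max}); the extra $\nu$ of slack in each bound (writing $2\nu$ rather than $\nu$) is harmless and comfortably absorbs the gap between $L^*_{\Lambda}$ and $L^*$ and the dropped $R_S(\bar{D})$ term.
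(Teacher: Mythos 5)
Your proof is correct, and it is a streamlined version of the paper's own argument: both derive the two bounds from the approximate-equilibrium guarantee of Theorem \ref{thm:approx-eq} and both use the feasibility hypothesis to cap the relevant saddle value at $1$, but the bookkeeping is genuinely different. The paper invokes \emph{both} halves of the guarantee, (\ref{eqn:mu-approximate-min}) and (\ref{eqn:mu-approximate-max}), splits into the cases $I_{\alpha}(\bar{D};n)\le\gamma$ and $I_{\alpha}(\bar{D};n)>\gamma$, and in each case relates $\max_{\lambda\in\Lambda}L(\bar{D},\lambda)$ to $L^*$ through the chain $\max_{\lambda\in\Lambda}L(\bar{D},\lambda)\le L^*_{\Lambda}+\nu\le\min_{D}L(D,\bar{\lambda})+2\nu\le L(D^*_{\Lambda},\bar{\lambda})+2\nu\le L^*+2\nu$, where $D^*_{\Lambda}$ is the optimal feasible randomized hypothesis; the same $D^*_{\Lambda}$, via $L^*\le R_S(D^*_{\Lambda})\le 1$, produces the $1+2\nu$ in the fairness estimate. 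You use only (\ref{eqn:mu-approximate-max}), replace the chain through $D^*_{\Lambda}$ by the monotonicity observation $L^*_{\Lambda}\le L^*$ (the outer maximum is over $\Lambda\subseteq\mathbb{R}^+$), and cap $L^*_{\Lambda}\le 1$ by evaluating the Lagrangian at a strictly feasible pure hypothesis $h_0$ instead of at $D^*_{\Lambda}$. What this buys is a case-free, one-line derivation of the error bound and slightly sharper constants ($L^*+\nu$ and $\gamma+\frac{1+\nu}{\lambda_{\max}}$, which you then relax to match the statement); the paper's longer route through $D^*_{\Lambda}$ adds nothing essential here beyond making explicit that feasibility guarantees the constrained optimum exists. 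Both arguments correctly locate the crux in the same place: without a feasible point the Lagrangian value cannot be bounded independently of $\lambda_{\max}$, and both conclusions would degenerate.
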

\begin{proof}(Note that this is the Proof of Theorem \ref{thm:compare-opt}):   
From the assumption  $ \inf_{D \in \Hcal} I_{\alpha}(\ub_h;n)  \leq \gamma$, 
we can find  an  optimal $D^*_{\Lambda} \in \Delta \Hcal$ such that 
\beqa
  D^*_{\Lambda} &\in &  \arg \min_{D \in \Delta \Hcal} R_S(D)   \quad \mbox{subject to } I_{\alpha}(D;n) \leq \gamma.
\eeqa 
Since $I_{\alpha}(D^*_{\Lambda};n) -\gamma  \leq 0,$ we have for any $\lambda \in [0, \lambda_{\max}]$, 
\be 
  L(D^*_{\Lambda}, \lambda) = R_S(D^*_{\Lambda}) + \lambda \big(I_{\alpha}(D^*_{\Lambda};n) -\gamma \big)  
                             \leq  R_S(D^*_{\Lambda}).  \label{eqn:aa}
\ee

\noindent Case i)  $I_{\alpha}(\bar{D};n) -\gamma  \leq 0$: \\
In this case, it is enough to  check 
$R_S(\bar{D}) \leq L^*+ 2\nu$. 
From the assumption   $I_{\alpha}(\bar{D};n) -\gamma  \leq 0$,  
we have that 
\beqa 
   L(\bar{D}, \lambda) ~= ~ R_S(\bar{D}) + \lambda(I_{\alpha}(\bar{D};n) -\gamma) ~\leq ~R_S(\bar{D}) 
\eeqa 
and know that $\max_{\lambda \in \Lambda} L(\bar{D}, \lambda)  = R_S(\bar{D})$  when $\lambda=0$. 
Therefore 
\beqa
  R_S(\bar{D}) &=&  \max_{\lambda} L(\bar{D}, \lambda) \\
               &\leq & L^* + \nu  
                        \qquad \qquad \qquad \qquad \qquad ~~ (\mbox{by  Theorem  \ref{thm:approx-eq}})\\
               &\leq & \min_{D} L(D, \bar{\lambda}) +2\nu  
                        \qquad \qquad \qquad  (\mbox{by  Theorem  \ref{thm:approx-eq}})\\
               &\leq & L(D^*_{\Lambda}, \bar{\lambda}) +2\nu \\
               &\leq & \max_{\lambda \in \Lambda} L(D^*_{\Lambda},  \lambda ) + 2\nu\\
               &=&   L(D^*_{\Lambda},  \lambda^*_{\Lambda}) +2\nu ~~=~~L^* + 2\nu . 
\eeqa  

\noindent Case ii)    $I_{\alpha}(\bar{D};n) -\gamma >0$: \\ 
Since $(\bar{D}, \bar{\lambda})$ satisfies (\ref{eqn:mu-approximate-min}) and (\ref{eqn:mu-approximate-max}) 
by Theorem  \ref{thm:approx-eq}, 
we have 
\be 
\max_{\lambda \in \Lambda} L(\bar{D}, \lambda) - \nu 
~~\leq~~ L^*
~~\leq ~~ \min_{D \in \Delta \Hcal} L({D}, \bar{\lambda}) +  \nu. \label{eqn:barD}
\ee 
By (\ref{eqn:barD}), we  have
\be 
 \max_{\lambda \in \Lambda} L(\bar{D}, \lambda) 
  \leq   \min_{D \in \Delta \Hcal} L({D}, \bar{\lambda}) +  2\nu 
  \leq   L(D^*_{\Lambda},  \bar{\lambda}) + 2\nu  
  \leq   \max_{\lambda \in \Lambda} L(D^*_{\Lambda}, \lambda) + 2\nu 
  =  L^*+ 2\nu. \label{eqn:bb}
\ee 
By the assumption $I_{\alpha}(\bar{D};n) >\gamma$, we have 
\be 
 \max_{\lambda \in \Lambda} L(\bar{D}, \lambda) 
&=& R_S(\bar{D}) + \lambda_{\max}(I_{\alpha}(\bar{D};n) -\gamma) \label{eqn:kk}\\
& \geq &  R_S(\bar{D}). \label{eqn:cc}
\ee 
By (\ref{eqn:bb}) and (\ref{eqn:cc}), 
we have 
\beqa
   R_S(\bar{D}) ~~\leq~~  \max_{\lambda \in \Lambda} L(\bar{D};n) ~~\leq~~ L^* + 2\nu. 
\eeqa 
Moreover,  
\beqa
  \lambda_{\max}(I_{\alpha}(\bar{D};n) -\gamma) 
& \leq & \max_{\lambda \in \Lambda} L(\bar{D}, \lambda) \qquad \qquad  \big(\mbox{by  }  (\ref{eqn:kk}) \big)\\
& \leq &  L^* + 2 \nu   \qquad  \qquad \qquad \big(\mbox{by  }  ~(\ref{eqn:bb}) \big) \\ 
& \leq & R_S(D^*_{\Lambda}) + 2\nu    \qquad  ~~~~~~  \big(\mbox{by  (\ref{eqn:aa})  and }  L^*= L(D^*_{\Lambda}, \lambda^*_{\Lambda})  \big) \\
& \leq & 1 + 2\nu.  \qquad  \qquad \qquad ~~\big(\mbox{since   } R_S(D^*_{\Lambda}) \leq 1 \big)
\eeqa 
Hence, $I_{\alpha}(\bar{D};n) < \gamma + \frac{2\nu}{\lambda_{\max}}$. 
\end{proof}

\section{Experimental Setup and Supplementary Experiments} \label{appendix:experiments}
\subsection{Data and Implementation} 
The data sets we used are 
\begin{itemize}
\item Adult income data set  of \cite{Lichman13} : The task is  to predict 
      if a person's income is no less than $\$ 50$K per year. 
      The population is partitioned to male, corresponding to $g=0$, 
      and female, corresponding to $g=1$. 
      The label value 1 indicates that the income of an individual is greater 
      than or equal to $\$ 50$K per year. 
\item COMPAS recidivism data set  of\cite{Angwin16}: 
      Our experiments have used  data samples  whose race attribute is either Caucasian 
      or African-American.  
      The task is to predict 
      if an individual is rearrested within two years after the first arrest. 
      The population is partitioned to   Caucasian, corresponding to $g=0$,  
      and African-American attributes, corresponding to $g=1$. 
      The label value 1 indicates no re-arrest within two years and 
      the label value 0 indicates re-arrest within two years.
\item Law school data  set of \cite{Wightman98}: The task is to predict if a student passes the bar exam. 
      The population is partitioned to male, corresponding to $g=0$, 
      and female, corresponding to $g=1$. 
      The label value 1 indicates that a student passes the bar exam.      
\item Dutch census data set of \cite{Laan00}: 
      The task is to predict if an individual has a prestigious occupation. 
      The population is partitioned to male, corresponding to $g=0$, 
      and female, corresponding to $g=1$. 
      The label value 1 indicates that an individual has a prestigious job.
\end{itemize}

Algorithm \ref{alg:fair} assumes the existence of an oracle $W(\lambda)$. 
For the implementation of  an oracle, finding 
$\hat{h}^{(t)} = \min\arg_{h \in \Hcal} L(h, \lambda)$ for $\lambda \in \{0, \lambda_{\max}\}$, 
in item 2 of Algorithm \ref{alg:fair},  
we use thresholding   of  \cite{Sheng06},  a simple  technique to directly find the best decision threshold 
for a given objective from the training data and use this 
predict the class label for test data. 
a smaller probability than this threshold then it is classified as 0, 
otherwise as  1.    
Logistic regression has been used as a base classifier for thresholding. 
We describe below  the  thresholding technique used for the implementation of  oracle 
finding $\hat{h}^{(t)} =\min \arg_{h \in \Hcal} L(h, \lambda)$ for $\lambda \in \{0, \lambda_{\max}\}$, 
in item 2 of Algorithm \ref{alg:fair}.  
First, we train logistic regression with the training data set of adult. 
During the training,  $0.5$ is the typical threshold value of logistic regression for decision: the label of an instance is predicted as 1
if its predicted probability for the positive class is higher than 
or equal to $\frac{1}{2}$, 
otherwise, its label is decided as 0.  
Second, we divide the interval $[0, ~1]$ into 201  points 
with step size $\frac{5}{1000}$. 
Each point of  201 points is used a threshold and plays the role of a hypothesis.  
Third, for each threshold, we predict the labels of instances 
by comparing the threshold value and the probability of the positive class:  
for a give instance, 
if its probability of the positive class is higher or equal to the threshold value, 
then the label of the instance is 1, otherwise, the label is 0. 
Finally,  to find the oracle for a given $\lambda$, 
we examine the value of $L(h, \lambda)$  
for every $h \in \Hcal$ and take $\hat{h} = \arg \min_{h \in \Hcal} L(h, \lambda)$. 

Each data set is split into training examples (70\%) and test examples (30\%).  
We use $\lambda_{max} = 20, \nu = 0.005$. 
Regarding to the value of $b_i = a(h(x_i) - y_i) +c$, 
we fix the value of $a$ as 5, and change the values 
of $c \in \{8, 9, 10\}$ to investigate 
the effect of $r = \frac{c}{a}$ on the performance for all data sets except Dutch census data set. 
For Dutch data set, we use $ \{9, 9.5, 10\}$ for $c$ values.  
For the figures showing the trade-off between efficiency and fairness, 
we varies the value of $ \gamma \in [0.02,  ~0.11]$ with step size 0.002. 
All figures  are obtained after  $10^4$ times running $\bar{D}$,  
on the test data set.  

{\bf Computational Resources} 
All experiments were run on a server with about  250GB RAM. 
The server is not equipped  with GPU acceleration. 
About 10 minutes is the run-time of an experiment to  generate 
all related graphs for a fixed $\alpha$ and the set of $c$ values. 
 
Appendix \ref{appendix:comparison} discusses experiments comparing our FERM-GE and existing algorithms seeking group fairness. 
 
Appendices \ref{appendix:compas-dutch} show the experimental results for the tradeoff between  fairness and accuracy 
for  COMPAS, Law school,  and Dutch census data sets, respectively. 

\subsection{Comparison  with Existing Algorithms Seeking Group Fairness} \label{appendix:comparison}

We have conducted several  experiments comparing FERM-GE and existing algorithms 
seeking   traditional 
fairness definitions such as demographic parity (DP), equalized odds (EO). 
Before stating main experimental results, we consider 
 difference between generalized entropy and traditional 
fairness definitions including DP and equalized odds, 
which helps understanding  the experimental results.  

~\\
{\bf i) Intrinsic Difference between generalized entropy and traditional fairness definitions} \\
It is worthwhile to note the intrinsic differences between generalized entropy and 
traditional fairness definitions such as  demographic parity (DP), equalized odds (EO), and equal opportunity. 
We mainly focus on DP and EO here. 

First, generalized entropy is   individual fairness    
but DP and EO  are group fairness concepts. 

Second,   FERM-GE and existing algorithms have different objectives. 
Generalized entropy  quantifies the degree of   inequality of individuals' benefits 
resulting from prediction of a hypothesis $h$. Our FERM-GE find a hypothesis whose empirical error  is small 
and empirical   $I_{\alpha}$ is low 
so that there exists small degree of inequality of  individuals' benefits. 
It is well-known that   DP and  EO  seek   (conditional) 
independence between sensitive attributes and prediction results (given ground truth values), 
as we mention  in Section \ref{sec:formulation}.  
Existing algorithms pursuing DP/EO fairness mitigates the degree of dependence between 
sensitive attributes and prediction results. 
Recall the definitions of demographic parity and equalized odds. 
For simplicity, we consider only   binary classification. 
Regarding to DP, a hypothesis $h$ achieves demographic parity fairness if 
$\Psf[h(x)= 1 | x \in X^{g} ] = \Psf[h(x)=1 ]$ for all $g$ 
where $g$ is a  value of  sensitive attribute such as gender or race;  
$g$ is either male or  female, 
if a sensitive attribute is gender and there are only male and female. 
Hence DP pursues independence between sensitive attributes and prediction results. 
Regarding to EO, a hypothesis $h$ satisfies equalized odds fairness if 
$\Psf[h(x)=1, x \in X^g | y=0] = \Psf[h(x)=1| y=0]$ and 
$\Psf[h(x)=0, x \in X^g | y=1] = \Psf[h(x)=0 | y=1]$  for all $g$ 
(which is equivalent to $\Psf[h(x)=1| y=0,   x \in X^g ] = \Psf[h(x)=1| y=0]$ and 
$\Psf[h(x)=0 | y=1,  x \in X^g] = \Psf[h(x)=0 | y=1]$  for all $g$.) 
Equalized odds seeks conditional independence between sensitive attributes and prediction results.  
Hence algorithms seeking DP or EO  fairness  reduce    (conditional) 
dependence between sensitive attributes and prediction results.

\begin{table*}
\centering
\begin{NiceTabular}{ rr|    r r r r r|r r r r r |     c|c }
\toprule 
\multicolumn{2}{r| }{Group}  &\multicolumn{5}{c|}{Male} &  \multicolumn{5}{c| }{Female} &  GE ($I_{\alpha}$) & $V$ \\
\toprule
 Individuals & $\bm{x}_i$  &$\bm{x}_1$ & $\bm{x}_2$ & $\bm{x}_3$ & $\bm{x}_4$ & $\bm{x}_5$ & $\bm{x}_6$ & $\bm{x}_7$ & $\bm{x}_8$ & $\bm{x}_9$ & $\bm{x}_{10}$ &  & \\
\midrule
True label & $y_i$    &1 & 1 & 0 & 0& 0 & 1 & 1 & 0  & 0 & 0 & - &- \\ 
\toprule
\multirow{2}{*}{  $h_0$} & Prediction    &  TP & FN & TN & FP & \textbf{\textit{TN}}  & TP & FN & TN  & FP  & \textbf{\textit{FP}} & \multirow{2}{*}{0.078498} & \multirow{2}{*}{0.003204}\\ 
& $b_i $  & 5 & 2 & 5 & 8 &\textbf{\textit{5}} & 5 & 2 & 5 & 8 & \textbf{\textit{8}} &  &  \\
\midrule
\multirow{2}{*}{  $h_1$} & Prediction    &  TP & FN & TN & FP &\textbf{FP} & TP & FN & TN  & FP  & \textbf{\textit{FP}} & \multirow{2}{*}{0.080367} & \multirow{2}{*}{0}\\ 
& $b_i $  & 5 & 2 & 5 & 8 & \textbf{8} & 5 & 2 & 5 & 8 & \textbf{\textit{8}} & &\\
\midrule  
\multirow{2}{*}{  $h_2$} & Prediction    &  TP & FN & \textbf{FP} & FP &\textbf{FP} & TP & FN & TN  & FP  & \textbf{\textit{FP}} & \multirow{2}{*}{0.078557} & \multirow{2}{*}{0.002585}\\ 
& $b_i $  & 5 & 2 &\textbf{8} & 8 & \textbf{8} & 5 & 2 & 5 & 8 & \textbf{\textit{8}} & & \\
\bottomrule
\end{NiceTabular}
\caption{Generalized entropy (GE)  and group fairness Equalized Odds of hypotheses $h_0, ~h_1,$ and $h_2$} \label{table:toy}
\end{table*}

We examine how  the degree of  fairness based on  existing group fairness definitions   behaves when we control GE,  
using a  toy example.  
Table \ref{table:toy}   shows   how GE $I_{\alpha}$,   between-group term V and EO change  
depending on hypotheses $h_0,  ~h_1,$ and $h_2$.  
We use  $(a,c)= (3,5) $ and $\alpha =2$  
for the computation of $I_{\alpha}$  of the hypotheses.  
We first consider hypothesis $h_0$. 
Obviously $h_0$ does not meet EO fairness conditions.  
The value of $I^{h_0}_{\alpha} = 0.078498$ and between-group term  $V=0.003204$. 

Consider hypothesis $h_1$  
that is identical to $h_0$ except   the prediction of $\ux_5$: 
$h_1$ has one  more error than $h_0$. 
Under  $h_1$, the prediction  of male group is  identical to that of female group. 
Hence $h_1$  meets EO fairness condition.
The degree of group fairness in EO (or $V$) is enhanced; 
we can say that group fairness EO (or $V$)  is achieved at the cost of accuracy.  
However, regarding to GE,  
we observe that $I^{h_1}_{\alpha} > I^{h_0}_{\alpha}$, 
where $I^h_{\alpha}$ denotes the value of  GE for hypothesis $h$. 
This is an example showing that enhancing group fairness (EO or $V$) 
does not guarantee the improvement of individual fairness ($I_{\alpha}$).   

Finally, consider hypothesis $h_2$ which is identical to  $h_1$ except the prediction of $\ux_3$. 
Obviously, $h_2$ does not meet the group fairness EO conditions; hence the degree of  group fairness in EO gets deteriorated. 
Note that $I^{h_2}_{\alpha} =0.078557 < I^{h_1}_{\alpha}$. 
Comparing $h_1$ and $h_2$, we know that 
the degree of group fairness EO gets degraded but individual fairness degree  $I_{\alpha}$ 
is improved.  
This example shows that enhancing individual fairness does not result in the improvement of group fairness.

~\\
{\bf ii) Comparisons between FERM-GE and  Existing Fair Classification Algorithms} \\
\begin{figure*}[th]  
\centering
  \hspace*{7mm} \includegraphics[trim=0 50 0 0, clip, totalheight=0.033\textheight]{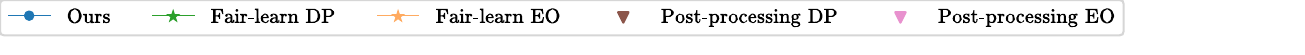}
  \captionsetup{justification=centering}
  \subfigure[EO Violation - Error Comparison]
  {\label{fig:EOV} 
  \includegraphics[width=0.28\columnwidth,  height=1.in]{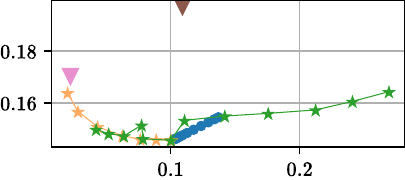} 
  } 
  \hfill
  \subfigure[DP Violation - Error Comparison]
  {\label{fig:DP}
  \includegraphics[ width=0.28\columnwidth,  height=1.in]{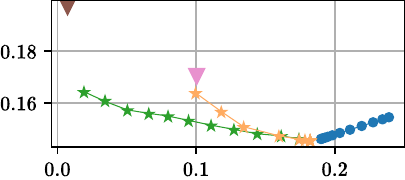}   
  }
  \hfill  
  \subfigure[GE - Error Comparison]
  {\label{fig:ge} 
  \includegraphics[width=0.28\columnwidth,  height=1.in]{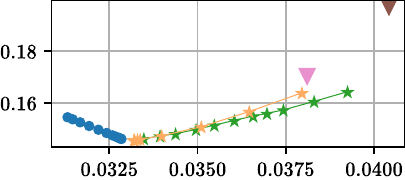}
  }
  \caption{Comparison with existing algorithms ($y$ axis is error)}  
  \label{fig:compare_ge_v_alpha1}
\end{figure*}

For comparisons with existing algorithms seeking group fairness 
such as demographic parity (DP) and equalized odds (EO), 
 we provide several experiments with the algorithms proposed by  \cite{Agarwal18} and \cite{Hardt16}. 
Our experiments are done with codes provided by ``fair-learn"   algorithm proposed by \cite{Agarwal18},  
which can be found in  https://github.com/fairlearn/fairlearn. 
The codes in fair-learn also perform  the approaches in  \cite{Hardt16} 
which are  post-processing fair-algorithms for demographic parity (DP) and equalized odds (EO). 
We used  Adult data set provided by the fair-learn package in github 
and logistic regression as the base classifier for all the new experiments. 

Five algorithms are considered;
\begin{enumerate}[label=\arabic*)]
\item our FERM-GE: minimize empirical error with fairness condition $I_{\alpha}<\gamma$ with $ 0.03 \leq \gamma \leq 0.037$, $a=5, ~c=8$.
\item  the fair-learn algorithm for DP (Fair-learn DP) : minimize empirical error with DP fairness constraints
\item  the fair-learn algorithm for EO (Fair-learn EO) :  minimize empirical error with EO fairness constraints
\item  the post-processing algorithm in \cite{Hardt16} for DP (Post-processing DP), and 
\item  the post-processing algorithm in \cite{Hardt16} for EO (Post-processing EO).
\end{enumerate}

Figure \ref{fig:EOV} shows test error and EO violations for  Fair-learn DP,  Fair-learn EO, Post-processing EO, and ours. 
The algorithm Post-processing EO is represented by a down-pointing triangle. 
Since Fair-learn EO  has the  objective (or constraints) to mitigate EO unfairness, 
its graphs has the behavior that test error is decreasing as EO violation is increasing. 
The graph of our algorithm exhibits the opposite behavior to the graph of Fair-learn EO. 
Remark the graph of Fair-learn DP with green circles  does not show any specific behavior 
even though Fair-learn DP and Fair-learn EO are fair-learn algorthms in \cite{Agarwal18};  
  the only difference between Fair-learn DP  and Fair-learn EO is the objective, mitigating DP or EO. 
These are not surprising nor strange, 
simply because our algorithm and Fair-learn DP do not aim to reduce Equalized odds unfairness degree: 
our FERM-GE targets diminishing GE and Fair-learn reducing DP unfairness.  
Since \cite{Hardt16} proposes a post-processing algorithm, 
it is represented by a single down-pointing triangle in Figure \ref{fig:EOV}.  
It  achieves the smallest EO deviation among all of the algorithms, but has the highest test error. 
This phenomena can also be interpreted as that enhancing group fairness (EO) 
may result in degradation of individual fairness (generalized entropy $I_{\alpha}$), which is well known. 

Figure \ref{fig:DP} shows test error and demographic parity (DP) violations for  Fair-learn DP,  Fair-learn EO, 
Post-processing DP, and ours. 
The algorithm, Post-processing DP is represented by a down-pointing triangle.  
Since Fair-learn DP has the  objective (or constraints) to mitigate DP unfairness, 
test error of its graph  is decreasing as demographic parity violation is increasing. 
But the graph of ours  shows an opposite behavior to the graph of Fair-learn DP. 
Another possible interpretation is that enhancing individual fairness may cause degradation of group fairness, 
which is a coherent interpretation as in Section \ref{subsec:vs}.

Figure \ref{fig:ge} shows test error and the values of GE for all of the five algorithms. 
Since our algorithm wants small GE values, the test error of  our algorithm  is decreasing as  $I_{\alpha}$ is increasing. 
But each of the test errors of  algorithms  Fair-learn DP and  Fair-learn EO is   increasing as $I_{\alpha}$ is increasing.  

As we mentioned above, 
the fairness metric $I_{\alpha}$ does not consider such independence  but difference 
among benefits of individuals, 
the intrinsic differences  between GE and existing fairness concepts including DP and EO 
  i) GE is a metric for individual fairness and DP/EO are group fairness concepts, 
ii) our FERM-GE pursues fairness of individuals benefit but 
DP and EO seek independence of sensitive attributes and prediction results. 
The intrinsic differences yield that enhancing DP/EO can result in large $I_{\alpha}$ 
and improving $I_{\alpha}$ may lead to degrade the degree of fairness DP/EO. 
As a result,  increasing group fairness may result in degradation of individual fairness and vice versa.
 
\subsection{Experiments on COMPAS, Law School Admissions, and Dutch Census data set} \label{appendix:compas-dutch}

\begin{figure}[h]
  \begin{center}
  \vspace{-3mm}
  \subfigure[$\alpha =0$]{\label{subfig:compas_err_0}
  \includegraphics[width=0.25\columnwidth]{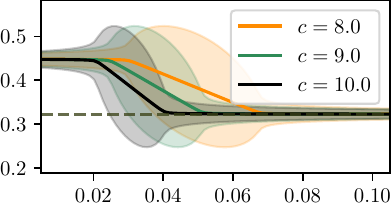}
  }
  \hfill
  \subfigure[$\alpha=1$]{\label{subfig:compas_err_1} 
  \includegraphics[width=0.25\columnwidth]{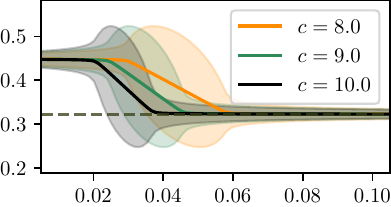}
  }
  \hfill
  \subfigure[$\alpha=2$]{\label{subfig:compas_err_2}
  \includegraphics[width=0.25\columnwidth]{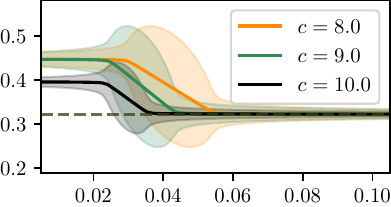}
  }       
  \vspace{-2mm}
  \caption{COMPAS: Averaged test error when $a=5$ ($x$ axis is $\gamma$) }  \label{fig:compas_test_error} 
  \vspace{5mm}
  \subfigure[$\alpha =0$]{\label{subfig:compas_ge_0}
  \includegraphics[width=0.28\columnwidth]{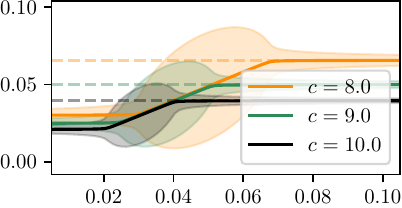}
  }
  \hfill
  \subfigure[$\alpha=1$]{\label{subfig:compas_ge_1} 
  \includegraphics[width=0.28\columnwidth]{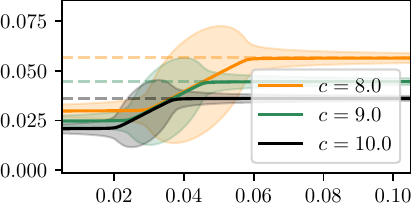}
  }
  \hfill
  \subfigure[$\alpha=2$]{\label{subfig:compas_ge_2}
  \includegraphics[width=0.28\columnwidth]{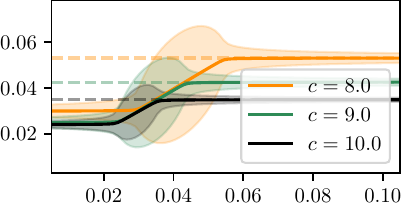}
  }
  \vspace{-2mm}
  \caption{COMPAS: Averaged test $I_{\alpha}$ when $a=5$ ($x$ axis is $\gamma$)}\label{fig:compas_test_ge}  
 \end{center} 
\end{figure}
This subsection provides the plots illustrating the change of test error 
and test $I_{\alpha}$ obtained by the hypothesis $\bar{D}$ of Algorithm \ref{alg:fair} 
on COMPAS, law school admissions and Dutch census data sets. 
Figures \ref{fig:compas_test_error} and \ref{fig:compas_test_ge} are for average test error and $I_{\alpha}$ for COMPAS data set. The behaviors of them are similar to those 
of adult income data set. 
Figures \ref{fig:law_school_err} and \ref{fig:law_school_ge} are for  law shool admissions  data set.
  Figures \ref{fig:dutch_census_err} and \ref{fig:dutch_census_ge} are for Dutch census  data set.

\begin{figure}[h]
  \begin{center} 
  \subfigure[$\alpha =0$]{\label{subfig:law_school_err_0}
  \includegraphics[width=0.28\columnwidth]{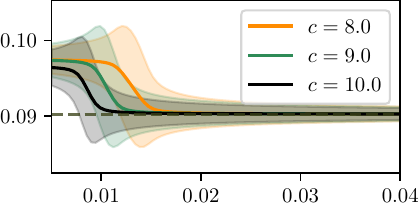}
  }
  \hfill
  \subfigure[$\alpha=1$]{\label{subfig:law_school_err_1} 
  \includegraphics[width=0.28\columnwidth]{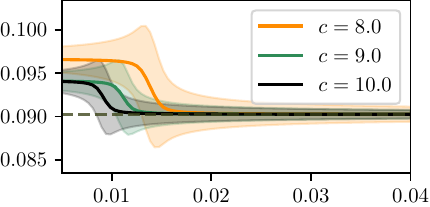}
  }
  \hfill
  \subfigure[$\alpha=2$]{\label{subfig:law_school_err_2}
  \includegraphics[width=0.28\columnwidth]{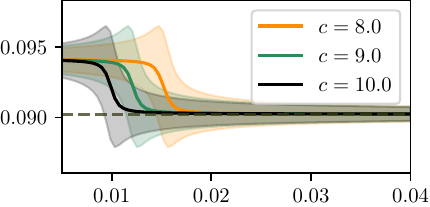}
  }     
  \caption{Law school: Averaged test error when $a=5$   ($x$ axis is $\gamma$)}  
  \label{fig:law_school_err} 
  \vspace{4mm}
  \subfigure[$\alpha =0$]{\label{subfig:law_school_ge_0}
  \includegraphics[width=0.28\columnwidth]{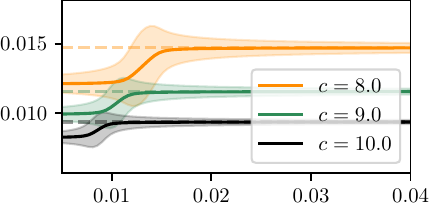}
  }
  \hfill
  \subfigure[$\alpha=1$]{\label{subfig:law_school_ge_1} 
  \includegraphics[width=0.28\columnwidth]{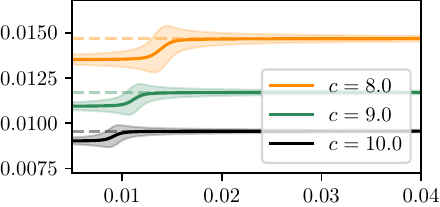}
  }
  \hfill
  \subfigure[$\alpha=2$]{\label{subfig:law_school_ge_2}
  \includegraphics[width=0.28\columnwidth]{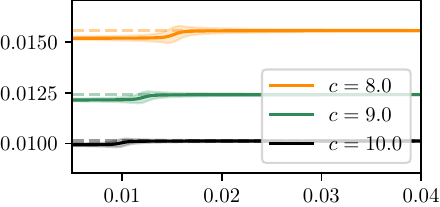}
  } 
  \caption{Law school: Averaged test $I_{\alpha}$ when $a=5$   ($x$ axis is $\gamma$)}
  \label{fig:law_school_ge}
 
\vspace{6mm}
  \subfigure[$\alpha =0$]{\label{subfig:dutch_census_err_0}
  \includegraphics[width=0.28\columnwidth]{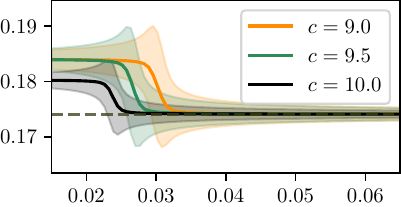}
  }
  \hfill 
  \subfigure[$\alpha=1$]{\label{subfig:dutch_census_err_1} 
  \includegraphics[width=0.28\columnwidth]{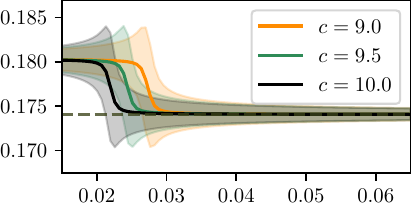}
  }
  \hfill
  \subfigure[$\alpha=2$]{\label{subfig:dutch_census_err_2}
  \includegraphics[width=0.28\columnwidth]{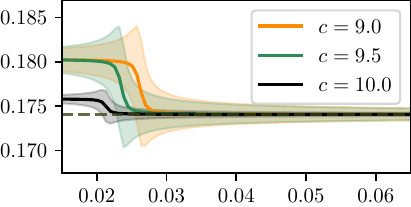}
  }   
  \caption{Averaged test error when $a=5$ for Dutch census data set  ($x$ axis is $\gamma$)}  
  \label{fig:dutch_census_err}  
  \vspace{4mm}
  \subfigure[$\alpha =0$]{\label{subfig:dutch_census_ge_0}
  \includegraphics[width=0.28\columnwidth]{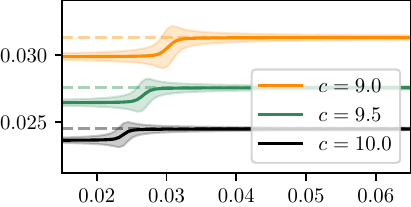}
  }
  \hfill
  \subfigure[$\alpha=1$]{\label{subfig:dutch_census_ge_1} 
  \includegraphics[width=0.28\columnwidth]{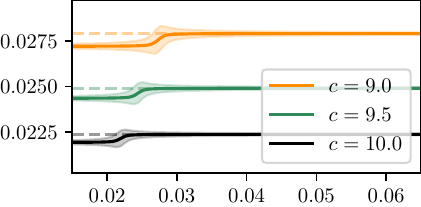}
  }
  \hfill
  \subfigure[$\alpha=2$]{\label{subfig:dutch_census_ge_2}
  \includegraphics[width=0.28\columnwidth]{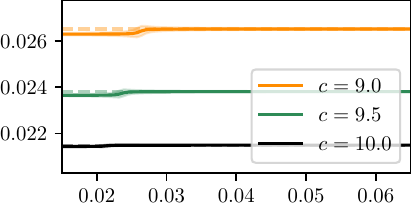}
  }   
  \caption{Dutch census:  Averaged test $I_{\alpha}$ when $a=5$ ($x$ axis is $\gamma$)}
  \label{fig:dutch_census_ge}
  \end{center} 
\end{figure}

\newpage
\section{A Computation Example of Additive Decomposability} \label{appendix:add-decom}
We explain with an example how to compute generalized entropy $I_{\alpha}(\ub;n)$ 
and check the additive decomposability, (\ref{eqn:add_decomp}),   
\beqa 
 I(\bm{b}^1, ..., \bm{b}^G;n)= \sum_g w_g^G(\boldsymbol{\mu}, \bm{n})I(\bm{b}^g;n_g) + V.
\eeqa  
In this example, we set $\alpha =1$  and use $I$ instead of $I_{\alpha}$.
Consider 9 individuals $\{\bm{x}_1, \bm{x}_2,...,\bm{x}_{9}\}$ and 
two groups $X^1$ and $X^2$ such that 
$X^1 = \{\bm{x}_1,...\bm{x}_5\}$ and $X^2 = \{\bm{x}_6, .., \bm{x}_{9}\}$. 
The true label of $i$ is denoted by $y_i$ and its predicted label by $h(\bm{x}_i)$. 
The benefit value $b_i$ is defined by  $b_i =  h(\bm{x}_i) - y_i  + 3 $ for 
a given classifier $h$. 
Then $\bm{b} = (\bm{b}^1, \bm{b}^2) = (b_{\bm{x}_1}, ..., b_{\bm{x}_{9}})$. 
For groups $X^1$ and $X^2$,  
we have $\bm{b}^1= (b_1, ...,b_5)$ 
and $\bm{b}^2=(b_6, ..., b_{9})$.
For this special case of groups $X^1$ and $X^2$, 
if we use a simple notation $w_i$ instead of  $w^G_i(\boldsymbol{\mu}, \bm{n})$, 
then Axiom 4 (the additivie decomposability) is written by 
\beqa
I(b_1, ...,b_{9};9) = w_1 I(b_1,...,b_5;5) + w_2 I(b_6,...,b_{9};4) +V
\eeqa
Note that $I(b_1,...,b_5;5)$ is the generalized entropy 
for group $X^1$ whose members have benefits 
$b_1, b_2,...,b_5$. 
Similarly $I(b_6, ...,b_{9})$  
is the generalized entropy for  group $X^2$ 
with the benefits, $b_6, b_7,...,b_{9}$.  
 
Consider a classifier $h$ whose prediction $h(x_i)$ is given in Table \ref{table:appendix}.  
Let $n_C$ be the number of correct labels, $n_{FP}$ the number of false positive labels, and $n_{FN}$ the number of false negative labels. 
Then the average of the value $b_i$'s and 
generalized entropy $I^{h}(\bm{b}_{h};10)$ 
for the whole population  are  
\beqa
 \mu &=& \frac{1}{9} \sum_{i=1}^{9} b_i  
        ~=~ \frac{1}{9}\Big(n_C*3 +  n_{FP}*4 + n_{FN}*2 \Big) 
        ~=~  \frac{26}{9},\\
  I(\bm{b}_{h};9) &=& \frac{1}{9}\Big( n_C * \frac{3}{\mu} \ln\frac{3}{\mu}
+  n_{FP} * \frac{4}{\mu} \ln\frac{4}{\mu}   
+ n_{FN} * \frac{2}{\mu} \ln\frac{2}{\mu}\Big) 
~=~  0.032869. 
\eeqa

Let's compute the generalized entropy of  group $X^1$ 
that has five individuals $x_1, .., x_5$. 
Consider the average of $b_i$ for the individuals of $X^1$, 
i.e., 
\beqa
\mu_1 = \frac{1}{5} \sum_{i=1}^5 b_i 
= \frac{2*3 + 1*4 + 2*2}{5} = \frac{14}{5}.
\eeqa 
The generalized entropy for $X^1$ is 
\beqa 
 I(b_{x_1},.., b_{x_5};5)  
&=& \frac{1}{5}\Big(2* \frac{3}{\mu_1} \ln \frac{3}{\mu_1}  
+ 1* \frac{4}{\mu_1} \ln \frac{4}{\mu_1}  + 2* \frac{2}{\mu_1} \ln \frac{2}{\mu_1}  \Big)\\
&=& 0.035341.
\eeqa

\begin{table} 
\begin{center}
\begin{NiceTabular}{ c|r r r r r|r r r r }
\toprule 
Group & \multicolumn{5}{c|}{$X^1$} &  \multicolumn{4}{c}{$X^2$} \\
\midrule 
$\bm{x}_i$ & $\bm{x}_1$ & $\bm{x}_2$ & $\bm{x}_3$ & $\bm{x}_4$ & $\bm{x}_5$ & $\bm{x}_6$ & $\bm{x}_7$ & $\bm{x}_8$ & $\bm{x}_9$ \\
\midrule
$y_i$ & 1 & 1 & 1 & 1& 0 & 1 & 0 & 1   & 1 \\ 
\midrule
$h(\ux_i)$  & 1 & 0 & 1 & 0 & 1 & 1 & 1 & 1 &0  \\ 
\midrule
C/FP/FN & C & FN & C & FN & FP & C & FP & C &FN  \\ 
\midrule
$b_i $  & 3 & 2 & 3 & 2 & 4 & 3 & 4 & 3 & 2 \\
\bottomrule  
\end{NiceTabular} 
\end{center}
\caption{Classifier $h$} \label{table:appendix}
\end{table}

Similarly, for group $X^2 = \{x_6, ..., x_{9}\}$, 
we have $\mu_2 = 3$ and 
$I(b_{x_6}, ...,b_{x_{9}};4) = 0.028317. $
Now we can find $w_1 = \frac{n_1}{n}\frac{\mu_1}\mu = \frac{5}{9}\frac{\frac{14}{5}}{\frac{26}{9}} = \frac{14}{26}$ and 
$w_2 =  \frac{4}{9}\frac{3}{\frac{26}{9}} = \frac{12}{26}$. 

Now compute $V$. For this, 
we consider two groups,  
$\{j_1, j_2, ..., j_5\}$ and $\{j_6, j_7, ..., j_{9}\}$,  and 
the benefit of each  individual is   such that 
$b_{j_1} = b_{j_2} = \cdots= b_{j_5} = \mu_1 = \frac{14}{5}$  
and $b_{j_6} = b_{j_7} =\cdots= b_{j_{9}} = \mu_2  =3$.
Note that every member in $\{j_1,.., j_5\}$ have 
the identical benefit $\frac{14}{5}$  and 
every member member in $\{j_6,.., j_{9} \}$ have 
identical benefit $3$.  
Definitely the average value of the benefit 
for the individuals $j_1,...j_{9}$  is $\mu$.  
Hence 
\beqa 
V &=& I( b_{j_1},..,b_{j_{9}};9)  
= \frac{1}{9}\Big(5 *\frac{ b_{j_1} }{\mu} \ln  \frac{ b_{j_1} }{\mu} 
    + 4* \frac{b_{j_6} }{\mu} \ln \frac{ b_{j_6} }{\mu} \Big)\\
&=& \frac{1}{9} \Big( 5* \frac{14/5}{26/9} \ln \frac{14/5}{26/9} + 4* \frac{3}{26/9} \ln \frac{3}{26/9}\Big) \\
&=& 0.00059.
\eeqa 

Indeed we can check that 
$I(b_{x_1}, ..., b_{x_{9}};9) = w_1  I(X^1;5)  +  w_2  I(X^2;4) + V$ 
with the values of $w_1 I(X^1;5)=0.01903$,  
$w_2 I(X^2;4) = 0.013069$, and $V = 0.00059$.

\section{Proof of Theorem \ref{thm:gen-fair-constraint} } \label{appendix:proof-gen-fair-constraint}
\begin{theorem}[McDiarmid's Inequality]
Let  $X_1, X_2, \ldots, X_n$ be i.i.d random variables defined on $\Xcal$.   
Consider a  function $ \phi:\Xcal^n \rightarrow \real$.  
 For all $ 1 \leq i \leq n$ and all $\ux_1, \ldots, \ux_{i-1}, \ux_{i+1},\ldots, \ux_n \in \Xcal$, 
if the function satisfies 
\be  
\sup_{\ux \in V} \phi(\ux, \ux_{-i}) - \inf_{\ux \in V}\phi(\ux, \ux_{-i}) \leq q_i  \label{eqn:bounded_diff}
\ee   
 with   $q_i>0$ and $(\ux, \ux_{-i}) = (\ux_1, \ldots, \ux_{i-1}, \ux, \ux_{i+1}, \ldots, \ux_n)$,  then 
\beqa
  \Psf \Big[ \big |\phi(X_1,.., X_n) - \Esf[\phi(X_1,.., X_n)] \big| \geq \varepsilon \Big] 
  \leq 2 \exp\Big(\frac{-2\varepsilon^2}{\sum_{i=1}^n q_i^2}\Big).
\eeqa    
\end{theorem}

\begin{lemma} \label{lemma:McDiarmid-b} With probability  at least $1 - \frac{\delta}{2}$, each of the followings holds:\\
i)   $\Big|\Esf[\beta]-\mu \Big|  < 2a\sqrt{\frac{1}{2n} \ln \frac{4}{\delta}}$, 
\\
ii)  $ \Big|  \Esf[\ln \beta] -\frac{1}{n}\sum_{i=1}^n \ln b_i \Big|   
 \leq \ln \Big(\frac{c+a}{c-a} \Big)\sqrt{\frac{1}{2n} \ln \frac{4}{\delta}}$,  
\\
iii) 
$\Big| \frac{1}{n}\sum_{i=1}^n b_i \ln b_i-  \Esf[\beta  \ln \beta] ~\Big|
  \leq     \Delta_1 \sqrt{\frac{1}{2n}\ln \frac{4}{\delta}}$, \\ 
\\
iv) 
$\Big|\Esf[\beta^{\alpha}]-\frac{1}{n} \sum_i b_i^{\alpha}\Big| \leq \Delta_{\alpha} \sqrt{\frac{1}{2n} \ln\frac{4}{\delta}}  $ 
 for $\alpha \neq 0, 1$\\
where $\Delta_1 = (c+a)\ln(c+a)-(c-a)\ln(c-a)$ and $\Delta_{\alpha} =(c+a)^{\alpha} -(c-a)^{\alpha}$.
\end{lemma}
\begin{proof}
We will apply McDiarmid's Inequality  for i) and ii). \\
i): \\
We set $\phi(\ux_1, \ldots, \ux_n) = \frac{1}{n} \sum_{i=1}^n  b(\ux_i)$. 
Then  $\phi(\ux_1, \ldots, \ux_n)$ satisfies (\ref{eqn:bounded_diff}) with  $q_i = \frac{2a}{n}$, 
since $b(\ux) \in \{c-a, c, c+a\}$. 
By McDiarmid's Inequality, we have  
\beqa 
\Psf \Big[~ \Big|  \frac{1}{n}\sum_{i=1}^n b_i   - \Esf[\beta]\Big| > \varepsilon \Big] & \leq & 2\exp \Big(\frac{-2n \varepsilon^2}{4a^2} \Big). 
\eeqa
If we take $2\exp \Big(\frac{-2n \varepsilon^2}{4a^2} \Big) = \frac{\delta}{2}$, 
then   i) holds. 
\\
ii): \\
We set function $\phi(\ux_1, \ldots, \ux_n) = \frac{1}{n} \sum_{i=1}^n  b(\ux_i)$. 
Then, $\phi(\ux_1, \ldots, \ux_n)$ satisfies (\ref{eqn:bounded_diff})  
with $q_i = \frac{\ln\big(\frac{c+a}{c-a}\big)}{n}$. 
Moreover 
$\Esf[\phi(\ux_1, \ldots, \ux_n)] = \frac{1}{n} \Esf\Big[\sum_{i=1}^n \ln b(\ux_i)\Big] = \Esf[\ln b(\ux)] =\Esf[\ln \beta].$  
By McDiarmid's Inequality,  we have 
\beqa 
\Psf \Big[~~\Big|  \frac{1}{n}\sum_{i=1}^n \ln b_i - \Esf[\ln \beta]~\Big| > \varepsilon ~~\Big] 
 & \leq & 2 \exp \Big( \frac{-2 n \varepsilon ^2}{\ln^2 \big( \frac{c+a}{c-a}\big)} \Big) 
\eeqa  
If we take $2 \exp\Big(\frac{-2 n \varepsilon ^2}{\ln^2 \big( \frac{c+a}{c-a}\big)}\Big)= \frac{\delta}{2}$, 
then ii) holds. 
\\
iii): \\
Apply  McDiarmid's Inequality by setting 
$\phi(\ux_1, \ldots, \ux_n) = \frac{1}{n} \sum_{i=1}^n b(\ux_i) \ln b(\ux_i)$ 
and  $q_i = \frac{\Delta_1}{n}$  
and  taking $ 2 \exp \Big(\frac{-2n \varepsilon^2}{\Delta_1^2} \Big) =  \frac{\delta}{2}$.
\\
iv):\\
Apply  McDiarmid's Inequality
for $\phi(\ux_1, \ldots, \ux_n) = \frac{1}{n} \sum_{i=1}^n b(\ux_i)^{\alpha}$ with 
$q_i=  \frac{\Delta_{\alpha}}{n}$. 
\end{proof}

For the purpose of  distinguishing $b_i$ and $b(\bx)$ and simple notation,  
we use $\beta$ for $b(\ux)$ instead of  $b(\bx)$: for example,  $\Esf[\beta]$  stands for  $\Esf[b(\bx)]$ 
and $\Esf[\beta \ln(\beta)]$ for $\Esf[b(\bx) \ln b(\bx)]$. 

Let  $M = \max(\mu, \Esf[\beta])$ and $m= \min(\mu, \Esf[\beta])$. 
Note that $ c-a \leq m \leq M \leq c+a$. 

\subsection{When $\alpha=0$}~\\
Recall that  $\mu  = \frac{1}{n} \sum_{i=1}^n b_i$  
where $b_i = a(h(\ux_i) - y_i) + c$ with  $c>a>0$ and $c-a\geq 1$. 
Consider $ \Big| I_0(h, P) - I_0(\ub;n) \Big|$:
\be  
\Big| I_0(h, P) - I_0(\ub;n) \Big| 
 &=& \Big| \ln \Esf[\beta] - \ln \mu - \Big( \Esf[\ln \beta]- \frac{1}{n}\sum_{i=1}^n \ln b_i \Big) \Big| \nonumber\\
 &\leq &  \Big| \ln \Esf[\beta] - \ln \mu  \Big| + \Big| \Esf[\ln \beta]- \frac{1}{n}\sum_{i=1}^n \ln b_i \Big| 
        \label{eqn:pre_diff_I0}\\
 &\leq &
       \frac{|\Esf[\beta]-\mu|}{c-a} + \Big| \Esf[\ln \beta]- \frac{1}{n}\sum_{i=1}^n \ln b_i \Big|.  \label{eqn:diff_I0} 
\ee 
The last inequality holds since 
\be
\Big| \ln \Esf[\beta] - \ln \mu  \Big| 
 & =  & \ln \Big(1+ \frac{M-m}{m}\Big)    \nonumber \\
 & \leq & \frac{M-m}{m}      \quad (\because \ln(1+x) < x) \label{eqn:before_diff_lnEb} \\
 & \leq & \frac{|\Esf[\beta]-\mu|}{c-a} \quad (\because m \geq c-a).   \label{eqn:diff_lnEb}
\ee  
By Lemma \ref{lemma:McDiarmid-b}  and union bounds, 
(\ref{eqn:diff_I0}) becomes   
\beqa
  \Big| I_0(h, P) - I_0(\ub;n) \Big| 
  & \leq & \Big( \frac{2a}{c-a} +  \ln \Big(\frac{c+a}{c-a}\Big) \Big)\sqrt{\frac{1}{2n} \ln \frac{4}{\delta}}. 
\eeqa
with probability at least $1-\delta$.

\subsection{When $\alpha=1$}~\\
Recalling   $ \Esf[\beta]=\Esf[b(\ux)], ~\Esf[\beta \ln \beta] = \Esf[b(\ux)\ln b(\ux],$ 
and  $I_1(\ub;n) = \frac{1}{n}\sum_{i=1}^n \frac{b_i}{\mu} \ln \frac{b_i}{\mu}$,  
we have  
\be 
I_1(h, P)  &=& \frac{\Esf[\beta \ln \beta)]}{\Esf[\beta]}  - \ln \Esf[\beta] \label{eqn:I_1(h,P)} \\ 
I_1(\ub;n) &=& \Big(\frac{1}{n \mu}\sum_{i=1}^n  b_i \ln b_i \Big)- \ln \mu. \label{eqn:I_1(b;n)}
\ee    
It holds that 
\beqa   
|I_1(h, P) - I_1(\ub;n) 
&\leq & \Big| \frac{\Esf [\beta \ln \beta]}{\Esf[\beta]} - \frac{1}{n \mu}\sum_{i=1}^n  b_i \ln b_i \Big| 
   ~+~ \Big|\ln\Esf[\beta] - \ln \mu\Big|   \\
&\leq &  \Big| \frac{\Esf [\beta \ln \beta]}{\Esf[\beta]} - \frac{1}{n \mu}\sum_{i=1}^n  b_i \ln b_i \Big| 
   ~+~ \frac{| \Esf[\beta]-\mu|}{c-a}  ~~ (\mbox{by } (\ref{eqn:diff_lnEb}))   \\
&\leq &   \frac{(1+\ln (c+a))|\mu -\Esf[\beta]|  }{ c-a}   
       + \frac{ | \Esf[\beta \ln \beta] - \frac{1}{n}\sum_{i=1}^n  b_i \ln b_i|}{c-a}.
\eeqa  
The second inequality holds since     
\beqa
\Big|\frac{\Esf [\beta \ln \beta]}{\Esf[\beta]} - \frac{1}{n \mu}\sum_{i=1}^n  b_i \ln b_i \Big|
&=&  \Big|\frac{(\mu-\Esf[\beta]) \Esf[\beta \ln \beta]}{\mu \Esf[\beta]} 
     + \frac{\Esf[\beta](\Esf[\beta \ln \beta]- \frac{1}{n}\sum_{i=1}^n  b_i \ln b_i  \Big)}{\mu\Esf[\beta]} \Big|   \\
&\leq & \Big|\mu -\Esf[\beta]\Big| \Big| \frac{\Esf[\beta \ln \beta]}{ \mu \Esf[\beta]} \Big|  
       + \frac{1}{\mu} \Big| \Esf[\beta \ln \beta] - \frac{1}{n}\sum_{i=1}^n  b_i \ln b_i\Big| \\
&\leq &    \frac{\ln (c+a)|\mu -\Esf[\beta]|}{ c-a}   
       + \frac{| \Esf[\beta \ln \beta] - \frac{1}{n}\sum_{i=1}^n  b_i \ln b_i|}{c-a}  \\
& & \quad (\because \Esf[\beta \ln \beta] \leq \Esf[\beta \ln(c+a)],  ~c-a \leq b(\ux) \leq   c+a ).
\eeqa
By Lemma \ref{lemma:McDiarmid-b} and union bounds,  with probability at least $1-\delta$, it holds that  
\beqa  
|I_1(h, P) - I_1(\ub;n)|
&\leq & \frac{2a(1+\ln(c+a)) + \Delta_1}{c-a}\sqrt{\frac{1}{2n} \ln \frac{4}{\delta}}     \\
&=& \big(\frac{2a+4a\ln(c+a)}{c-a}+\ln \frac{c+a}{c-a} \big)\sqrt{\frac{1}{2n} \ln \frac{4}{\delta}} \\
&& \qquad (\because \Delta_1=(c-a)\ln \frac{c+a}{c-a}+2a \ln(c+a)). 
\eeqa

\subsection{When  $\alpha \neq 0, 1$}~\\
Note that $\alpha \neq 0, ~1$  means   $\alpha \in [0, 1 ) \cup (1, \infty)$, 
since $\alpha \in [0, \infty)$.   
If we let  $B_0=\Esf^{\alpha}[\beta]\Esf[\beta^{\alpha}]$,    it holds that  
\be 
\Big| I_{\alpha}(h, P) - I_{\alpha}(\ub;n) \Big| 
&=& \Big|\frac{ \mu^{\alpha} \Esf[\beta^{\alpha}] - B_0+B_0  
         - \Esf^{\alpha}[\beta] \frac{1}{n}\sum_{i} b_i^{\alpha}}{\alpha(\alpha-1)(\mu \Esf[\beta])^{\alpha}} \Big|  \nonumber\\
 & \leq &  \frac{\Esf[\beta^{\alpha}]|\mu^{\alpha}- \Esf^{\alpha}[\beta] |}{|\alpha(\alpha-1)|(\mu \Esf[\beta])^{\alpha}} 
            + \frac{|\Esf[\beta^{\alpha}] - \frac{1}{n} \sum_i b_i^{\alpha}|}{|\alpha(\alpha-1)|\mu^{\alpha}}. \label{eqn:diff_Ialpha}
\ee
Recall that  $c-a \leq m \leq M \leq c+a$ with $M = \max(\mu, \Esf[\beta])$ and $m= \min(\mu, \Esf[\beta])$. 
By Mean Value Theorem,  
there exists $z_0 \in (m, M)$ such that
\beqa
\Big| {\mu^{\alpha}- \Esf^{\alpha}[\beta]}\Big|  
        &=& \alpha |\Esf[\beta] - \mu| \cdot | z_0^{\alpha-1}|   \\
        &\leq &   \left\{ \begin{array}{lll}
              \alpha { m^{\alpha-1}} |\Esf[\beta] - \mu| 
                     & \mbox{for} & 0 < \alpha  <  1, \\
              \alpha M^{\alpha-1}  |\Esf[\beta] - \mu|  
                    & \mbox{for} & \alpha > 1. 
               \end{array} \right.
\eeqa 
The last inequality holds since  
$f(x) =x^{\alpha-1}$ is  decreasing (increasing)  with positive $x \geq 0$  
for $0< \alpha < 1$ (for $\alpha > 1$, respectively).
Since
$0< \frac{\Esf[\beta^{\alpha}]}{(\mu \Esf[\beta])^{\alpha}} \leq \frac{(c+a)^{\alpha}}{m^{\alpha}M^{\alpha}}$, 
we have 
\be 
\frac{\Esf[\beta^{\alpha}]}{(\mu \Esf[\beta])^{\alpha}} \Big|\mu^{\alpha}- \Esf^{\alpha}[\beta] \Big|
  \leq   \frac{\alpha}{c-a} \Big(\frac{c+a}{c-a}\Big)^{\alpha} |\Esf[\beta] - \mu|  
\label{eqn:Ekbeta}    
\ee 
Applying  (\ref{eqn:Ekbeta}) and   Lemma \ref{lemma:McDiarmid-b} to (\ref{eqn:diff_Ialpha}), 
  it holds at least with probability $1-\delta$ that 
\beqa 
\Big| I_{\alpha}(h, P) - I_{\alpha}(\ub;n) \Big| 
& \leq & \frac{2\alpha\frac{a}{c-a} \Big(\frac{c+a}{c-a}\Big)^{\alpha}+\frac{\Delta_{\alpha}}{\mu^{\alpha}}}{|\alpha(\alpha-1)|} \sqrt{\frac{1}{2n}\ln\frac{4}{\delta}} \\ 
&\leq & \frac{\Big(1+\frac{2\alpha}{r-1}\Big)\Big(1+\frac{2}{r-1}\Big)^{\alpha}-1}{|\alpha(\alpha-1)|} \sqrt{\frac{1}{2n}\ln\frac{4}{\delta}} \\
& & \quad (\because  r=\frac{c}{a}, ~ \frac{\Delta_{\alpha}}{\mu^{\alpha}} \leq \big(\frac{c+a}{c-a}\big)^{\alpha}-1 ).  
\eeqa

\section{Proof of Theorem \ref{thm:simple-gen-fair-constraint-tight}} \label{appendix:proof-simple}
Theorem  \ref{thm:simple-gen-fair-constraint-tight} is a simplified version of 
Theorem \ref{thm:gen-fair-constraint-tight} below.
\begin{theorem} \label{thm:gen-fair-constraint-tight}
For any distribution $P$ over $\Xcal \times \Ycal$, let $S =\{(\bx_i, y_i)\}_{i=1}^n$ 
be a sample data set  identically and independently  drawn according to 
$P$. 
For any $0 < \delta < 1$,
with probability  at least $1-\delta$,    
it holds that for each $h \in \mathcal{H}$ and  $\alpha \in [0, \infty)$,  
\beqa 
\Big |I_{\alpha}(h, P) - I_{\alpha}(\ub_h;n)  \Big|  & \leq &  \tilde{\psi}_{\alpha}(r, \varepsilon_2) \cdot \varepsilon_2
\eeqa
where $\varepsilon_2 =  \sqrt{ \frac{   8d_{\Hcal}\ln \big(\frac{2en}{d_{\Hcal}}\big) +8\ln \frac{8}{\delta}}{n}}$,  
\beqa
\tilde{\psi}_{\alpha}  =
   \left \{ \begin{array}{ll}
     \frac{1}{ r-R_S(h)-  \varepsilon_2} + \ln\Big( \frac{r}{r-1}\Big)  &\mbox{for $\alpha=0$}, \\
     \frac{1}{r-R_S(h)-\varepsilon_2}\Big[1+  \frac{ r\big(1+  2\ln(ar+a)\big)}{r-R_S(h)}\Big] &\mbox{for $\alpha=1$} ,\\
     \frac{U_{\alpha}}{|\alpha-1|}\Big( \frac{r}{r-R_S(h)}\Big)^{\alpha}  
             & \mbox{otherwise},
\end{array} \right.     
\eeqa
and
\beqa
U_{\alpha} = \left \{ \begin{array}{l}
     \frac{1}{r}\Big(\frac{r+1}{r}\Big)^{\alpha} 
        + \frac{\big(\frac{r-R_S(h)}{r-R_S(h)-\varepsilon_2}\big)^{\alpha} 
       }{r-R_S(h)}\Big[ 1+ \Big( \big( \frac{r+1}{r}\big)^{\alpha}-1 \Big)R_S(h) 
     + \varepsilon_2\Big(\big(\frac{r+1}{r}\big)^{\alpha}+1\Big)\Big]
      \quad \mbox{for }   \alpha >1,\\
     \frac{1}{r} +\frac{ 1+ \Big( \big( \frac{r+1}{r}\big)^{\alpha}-1 \Big)R_S(h) 
     + \varepsilon_2\Big(\big(\frac{r+1}{r}\big)^{\alpha}+1\Big)}{r-R_S(h)-\varepsilon_2}
            \qquad \qquad \qquad ~~ \mbox{for $0 <\alpha<1$}.          
\end{array} \right.     
\eeqa 
\end{theorem}

Suppose that Theorem \ref{thm:gen-fair-constraint-tight} holds. 
Note that the cases of $\alpha=0,~1$ of Theorem  \ref{thm:simple-gen-fair-constraint-tight} are 
identical with those of Theorem  \ref{thm:gen-fair-constraint-tight}. 
We can derive $\tilde{\psi}_{\alpha} $ for $\alpha=2$ of Theorem  \ref{thm:simple-gen-fair-constraint-tight} 
with the assumptions of $R_S(h)< \frac{1}{2}$ and sufficiently large $n$ 
such that $\varepsilon_2 \leq \frac{1}{5}$ 
as follows. 
For $\alpha =2$, $U_{\alpha}$ becomes 
\beqa
U_{\alpha} 
= \frac{1}{r} \Big(\frac{r+1}{r} \Big)^{2} 
   + \frac{1}{r-R_S(h)}\Big(\frac{r-R_S(h)}{r-R_S(h)-\varepsilon} \Big)^{2} C_{\tilde{\psi}}
\eeqa
where 
$C_{\tilde{\psi}} =   1+ \Big( \big( \frac{r+1}{r}\big)^{2}-1 \Big)R_S(h) 
     + \varepsilon_2\Big(\big(\frac{r+1}{r}\big)^{2}+1\Big)$. 

Now we fix $\alpha =2$ and assume that $n$ is sufficiently large that $\varepsilon_2 < \frac{1}{5}$. 
It is obvious that 
 $\Big(\frac{r+1}{r} \Big)^{2} =  1+\frac{2}{r}+\frac{1}{r^2} < 1+\frac{3}{r}$ 
since $r \geq 1$. 
Similarly, $\Big(\frac{r-R_S(h)}{r-R_S(h)-\varepsilon} \Big)^{2} = \Big(1+\frac{\varepsilon_2}{r-R_S(h)-\varepsilon}\Big)^2< 1+\frac{3\varepsilon_2}{r-R_S(h)-\varepsilon}$ 
since $\frac{\varepsilon_2}{r-R_S(h)-\varepsilon_2} \geq \Big(\frac{\varepsilon_2}{r-R_S(h)-\varepsilon_2} \Big)^2$ when $R_S(h)< \frac{1}{2}$, $\varepsilon_2 < \frac{1}{5}$ and $r \geq 1$. 

Consider  $C_{\tilde{\psi}}$; when $\alpha=2$, 
\beqa 
C_{\tilde{\psi}} 
&= &  1+R_S(h)\Big(\big(1+\frac{1}{r}\big)^{2}-1 \Big)+ \varepsilon_2\Big( \big(1+\frac{1}{r} \big)^2+1 \Big) \\
&\leq & 1+\frac{3}{r} R_S(h) + \varepsilon_2 \Big(2+\frac{3}{r}\Big)  \\
& \leq & 2+\frac{3}{r} R_S(h) \qquad (\mbox{since $2+\frac{3}{r} \leq 5$ and $\varepsilon_2 \leq \frac{1}{5}$)  }  
\eeqa 
Then $U_{\alpha}$ becomes 
\be 
U_{\alpha} 
&\leq & \frac{1}{r} \Big(1+\frac{3}{r}\Big) + \frac{1}{r-R_S(h)} \Big( 1+ \frac{3\varepsilon_2}{r-R_S(h)-\varepsilon_2} \Big)
   \Big(2+\frac{3}{r} R_S(h) \Big). \label{eqn:pre_U_alpha}
\ee 
Consider $ \Big( 1+ \frac{3\varepsilon_2}{r-R_S(h)-\varepsilon_2} \Big)
   \Big(2+\frac{3}{r} R_S(h)\Big)$; 
\beqa
\lefteqn{   \Big( 1+ \frac{3\varepsilon_2}{r-R_S(h)-\varepsilon_2} \Big)
   \Big(2+\frac{3}{r} R_S(h)\Big) }\\
&\leq &  
     2+\frac{3}{r}R_S(h)+  \frac{3}{r-R_S(h)-\varepsilon_2} \cdot \varepsilon_2 \Big(2+ \frac{3}{r}R_S(h) \Big)  \\
& \leq & 2+\frac{3}{r}R_S(h)+  \frac{3}{r-R_S(h)-\varepsilon_2} 
   \qquad  (\mbox{since $\varepsilon_2 \big(2 +\frac{3R_S(h)}{r}\big) < \varepsilon_2\big(2+\frac{3}{r}\big) < 1$}).
\eeqa
Hence (\ref{eqn:pre_U_alpha}) becomes
\be 
U_{\alpha} \leq \frac{1}{r}\Big(1+\frac{3}{r}\Big) + \frac{1}{r-R_S(h)}\Big( 2+ \frac{3R_S(h)}{r} + \frac{3}{r-R_S(h)-\varepsilon_2}\Big) \label{eqn:U_alpha}
\ee 
Using (\ref{eqn:U_alpha}), we have 
\beqa
  \tilde{\psi}_{\alpha}\big |_{\alpha=2} 
  &=& \frac{U_{\alpha}}{\alpha-1} \Big(\frac{r}{r-R_S(h)}\Big)^{\alpha} \\
  & \leq &   \Big(\frac{r}{r-R_S(h)}\Big)^2 \Big(\frac{1}{r} +\frac{3}{r^2}  
     + \frac{1}{r-R_S(h)}\Big( 2+\frac{3R_S(h)}{r}+ \frac{3}{r-R_S(h)-\varepsilon_2}\Big)  \Big)\\
  & \leq &  \Big(\frac{r}{r-R_S(h)}\Big)^2 \Big(\frac{1}{r} +\frac{3}{r^2}  
     + \frac{1}{r-R_S(h)}\Big(12+\frac{3R_S(h)}{2r}\Big)\Big).    
\eeqa
The last inequality holds since $r-R_S(h)-\varepsilon_2 > r-\frac{7}{10} \geq \frac{3}{10}$. 
We have proved Theorem \ref{thm:simple-gen-fair-constraint-tight}. 
 
Now we  prove Theorem \ref{thm:gen-fair-constraint-tight}. 
\begin{lemma}(See Problem 23 on page 101 of  \cite{Rudin76}) 
\label{lemma:convex-prop}
Suppose that $f(x)$ is a convex function over $(a_0,b_0)$ and 
$ a_0 < s< t< u< b_0$. 
Then, it holds that 
\beqa
\frac{f(t)- f(s)}{t-s} \leq \frac{f(u)-f(s)}{u-s} \leq \frac{f(u)-f(t)}{u-t}. 
\eeqa
\end{lemma}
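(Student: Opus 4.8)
The plan is to reduce all three quantities to a single application of the definition of convexity. Since $a_0 < s < t < u < b_0$, the middle point $t$ lies strictly between $s$ and $u$, so it can be written as the convex combination $t = \lambda s + (1-\lambda) u$ with the weight $\lambda = \frac{u-t}{u-s} \in (0,1)$ (and $1-\lambda = \frac{t-s}{u-s}$). One checks $\lambda s + (1-\lambda)u = t$ directly, and this convex-combination identity is the one nonroutine observation; everything afterwards is algebraic rearrangement.

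First I would apply convexity of $f$ to this combination to obtain the single inequality
\[
 f(t) \;\leq\; \frac{u-t}{u-s}\, f(s) + \frac{t-s}{u-s}\, f(u),
\]
and then extract both claimed inequalities from it. For the left inequality, I would subtract $f(s)$ from both sides and simplify the coefficient of $f(s)$ via $\frac{u-t}{u-s}-1 = -\frac{t-s}{u-s}$, so that the right-hand side collapses to $\frac{t-s}{u-s}\bigl(f(u)-f(s)\bigr)$; dividing by $t-s>0$ then gives $\frac{f(t)-f(s)}{t-s}\leq \frac{f(u)-f(s)}{u-s}$. For the right inequality, I would instead form $f(u)-f(t)$ by subtracting the right-hand side above from $f(u)$ and simplify the coefficient of $f(u)$ via $1-\frac{t-s}{u-s}=\frac{u-t}{u-s}$, obtaining $f(u)-f(t)\geq \frac{u-t}{u-s}\bigl(f(u)-f(s)\bigr)$; dividing by $u-t>0$ yields $\frac{f(u)-f(s)}{u-s}\leq \frac{f(u)-f(t)}{u-t}$.

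There is no substantive obstacle, since this is the classical ``increasing slopes of secants'' characterization of convexity and its entire content is encoded in the convex-combination identity for $t$. The only point demanding care is the bookkeeping: keeping the two strictly positive denominators $t-s$ and $u-t$ straight so that the final divisions preserve the inequality directions, and simplifying the two coefficients correctly so that both bounds collapse onto the common secant slope $\frac{f(u)-f(s)}{u-s}$.
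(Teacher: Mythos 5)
Your proof is correct. Note that the paper does not actually prove this lemma at all; it is cited as a known exercise from Rudin's \emph{Principles of Mathematical Analysis}, so there is no in-paper argument to compare against. Your argument is the standard one: writing $t = \frac{u-t}{u-s}\,s + \frac{t-s}{u-s}\,u$, applying the definition of convexity once, and extracting both secant-slope inequalities by algebraic rearrangement, with the divisions by the positive quantities $t-s$ and $u-t$ handled correctly. This is precisely the intended solution to the cited exercise, and it completely establishes the statement the paper takes for granted.
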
 

We denote by $n_{FP}$  the number of false positive labels and 
by $n_{FN}$ the number of false negative labels and 
let 
\beqa
 \qFP = \frac{\nFP}{n}, & & \qFN = \frac{\nFN}{n}.
\eeqa 
The empirical average $\mu$ becomes as below; 
\beqa
\mu ~=~ \frac{1}{n} \sum_{i=1}^n b_i ~=~ c+ a\frac{n_{FP}-n_{FN}}{n} 
~=~ c+a(\qFP -\qFN).
\eeqa 
Let 
\beqa
m_C &=& \Psf_{\bx  \sim P_x } \big[~\bx: ~h(\bx) = y ~\big],  \\
m_{FP} &=& \Psf_{\bx  \sim P_x } \big[~\bx: ~h(\bx) = 1, ~y=0 ~\big],\\ 
m_{FN} &=& \Psf_{\bx  \sim P_x } \big[~\bx: ~ h(\bx) = 0, ~y=1 ~\big]
\eeqa 
that is, $m_C$ is the probability (measure)  of the set of 
individuals  $\bx$ with  correct labels, 
$m_{FP}$ is the probability (measure) of the set of $\bx$  
with false positive labels and 
$m_{FN}$ the the probability (measure) of the set of $\bx$ 
with  false negative labels. Using these quantities, 
we express $\Esf[b(\ux)]$ as below; 
\beqa
\Esf[b(\ux)]&=& c+a(\mFP-\mFN) .
\eeqa
Let  $\varepsilon_2 = \sqrt{ \frac{   8d_{\Hcal}\ln \big(\frac{2en}{d_{\Hcal}}\big) +8\ln \frac{8}{\delta}}{n}}$ 
and
\beqa
 S_p = \qFP + \qFN, & & S_m = \qFP- \qFN,\\
 T_p = \mFP + \mFN, & & T_m = \mFP - \mFN.  
\eeqa 
\begin{lemma}\label{lemma:together} 
With probability at least $1-\delta$, the  following two inequalities  
`` simultaneously"  hold
\be 
|T_m-S_m| \leq  \varepsilon_2, \qquad |T_p-S_p| \leq \varepsilon_2. \label{eqn:together} 
\ee 
\end{lemma}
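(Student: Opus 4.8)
The plan is to recognize the two quantities as ordinary uniform–convergence gaps and then control them separately, each on its own half of the confidence budget, finishing with a union bound. First I would unpack the notation. Since every sample point is correct, false positive, or false negative, one has $S_p = \qFP + \qFN = R_S(h)$ and $T_p = \mFP + \mFN = R_{\Xcal}(h)$, so the second inequality $|T_p - S_p| \le \varepsilon_2$ is exactly $|R_{\Xcal}(h) - R_S(h)| \le \varepsilon_2$. Likewise $S_m = \frac{1}{n}\sum_{i=1}^n (h(\bx_i) - y_i)$ and $T_m = \Esf[h(\ux) - y]$, so the first inequality is the deviation of the empirical mean of the signed error $h(\ux) - y \in \{-1,0,1\}$ from its expectation. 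Thus both claims have the form ``an $h$-indexed sample average lies within $\varepsilon_2$ of its mean, uniformly over $h \in \Hcal$.''

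For the $p$-inequality I would invoke the two-sided growth-function bound that underlies Theorem \ref{thm:gen-error}, applied to the $0/1$ loss class $\{(\ux,y) \mapsto \mathds{1}\{h(\ux) \ne y\} : h \in \Hcal\}$, whose VC dimension equals $d_{\Hcal}$. Running it with confidence parameter $\delta/2$ replaces $\ln\frac{4}{\delta}$ by $\ln\frac{8}{\delta}$, which is precisely the constant appearing inside $\varepsilon_2$; hence $\Psf[\,\exists h : |T_p - S_p| > \varepsilon_2\,] \le \delta/2$.

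For the $m$-inequality I would repeat the symmetrization argument for the class $\{(\ux,y)\mapsto h(\ux) - y : h \in \Hcal\}$. The structural fact I rely on is that on any fixed sample the distinct behaviours of this class are in bijection with the distinct labelings $\{h|_S : h \in \Hcal\}$, since subtracting the fixed labels $y_i$ changes nothing combinatorially; so Sauer's Lemma bounds its growth function by $(2en/d_{\Hcal})^{d_{\Hcal}}$, and McDiarmid's Inequality supplies the per-labeling tail, a union over the polynomially many labelings then giving $\Psf[\,\exists h : |T_m - S_m| > \varepsilon_2\,] \le \delta/2$. I expect this step to be the main obstacle: the signed error ranges over $\{-1,0,1\}$, an interval of width $2$ rather than the width $1$ of the $0/1$ loss, so a naive bounded-difference constant of $2/n$ would yield $2\varepsilon_2$ instead of $\varepsilon_2$. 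To recover the stated rate I would either pass to the disjoint false-positive and false-negative indicators, each of width $1$, with the classes $\{\mathds{1}\{h(\ux)=1,y=0\}\}$ and $\{\mathds{1}\{h(\ux)=0,y=1\}\}$ still of VC dimension at most $d_{\Hcal}$, or exploit $\Esf[(h(\ux)-y)^2] = R_{\Xcal}(h)$ to run a variance-sensitive (Bernstein-type) concentration, which is natural here since the tight theorem targets the small-error regime.

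Finally I would combine the two events by a union bound: each fails with probability at most $\delta/2$, so with probability at least $1 - \delta$ both $|T_m - S_m| \le \varepsilon_2$ and $|T_p - S_p| \le \varepsilon_2$ hold simultaneously, which is exactly (\ref{eqn:together}).
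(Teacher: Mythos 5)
Your handling of the $p$-inequality matches the paper: identify $S_p = R_S(h)$ and $T_p = R_{\Xcal}(h)$, and invoke the two-sided version of the VC bound of Theorem \ref{thm:gen-error} at confidence $\frac{\delta}{2}$, whose $\ln\frac{8}{\delta}$ term is exactly the one inside $\varepsilon_2$. The gap is in your treatment of the $m$-inequality. By insisting on a bound that is uniform over $\Hcal$ at rate exactly $\varepsilon_2$, you create the factor-of-two problem you describe, and neither of your proposed repairs closes it. Splitting $T_m - S_m = (\mFP-\qFP)-(\mFN-\qFN)$ into two width-one indicator classes still forces you to add the two deviations at the end, so even granting each class the full VC rate you recover roughly $2\varepsilon_2$, not $\varepsilon_2$ (and running each class at confidence $\frac{\delta}{4}$ makes each term slightly \emph{larger} than $\varepsilon_2$, certainly not $\varepsilon_2/2$). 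The Bernstein alternative is left unworked, and a variance-sensitive bound does not produce the specific quantity $\varepsilon_2$ either: its deviation scales with $\sqrt{R_{\Xcal}(h)}$, which is of order one for general $h$, and a uniform Bernstein bound over a VC class carries its own, different constants.

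The missing observation is that no uniformity and no rate-$\varepsilon_2$ argument is needed for this term: $T_m$ and $S_m$ are evaluated at the single hypothesis $h$ under consideration, and the lemma only requires \emph{some} bound that is at most $\varepsilon_2$. This is what the paper does via Lemma \ref{lemma:McDiarmid-b}: McDiarmid's inequality applied to $\frac{1}{n}\sum_i b(\ux_i)$ for this fixed $h$ (bounded differences $\frac{2a}{n}$, since $b$ takes values in $\{c-a,c,c+a\}$) gives $|\Esf[b(\ux)]-\mu| = a\,|T_m-S_m| \le a\varepsilon_1$ with $\varepsilon_1 = \sqrt{\frac{2}{n}\ln\frac{4}{\delta}}$ at confidence $1-\frac{\delta}{2}$, and then one simply notes $\varepsilon_1 \le \varepsilon_2$, because the nonnegative VC-dimension term inside $\varepsilon_2$ only enlarges it. The slack between $\varepsilon_1$ and $\varepsilon_2$ absorbs the factor of two coming from the range $\{-1,0,1\}$, which is why the ``main obstacle'' you identify never actually arises. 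A union bound over the two events then yields the simultaneous statement, exactly as in your final step.
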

\begin{proof}
From the definitions of $S_p, ~S_m, ~T_p$, and $T_m$, the followings are hold;
\be 
 \qFP = \frac{S_p+S_m}{2}, & & \qFN = \frac{S_p-S_m}{2}, \label{eqn:q}\\
 \mFP = \frac{T_p+T_m}{2}, & & \mFN = \frac{T_p-T_m}{2}, \label{eqn:m}.
\ee

Let  $\varepsilon_1 = \sqrt{\frac{2}{n} \ln \frac{4}{\delta}}$ and 
$\varepsilon_2= \sqrt{\frac{8d_{\Hcal}\ln\big(\frac{2en}{d_{\Hcal}}+ 8\ln\frac{8}{\delta}\big)}{n}}$.
From Lemma (\ref{lemma:McDiarmid-b}), with probability at least $1-\frac{\delta}{2}$, 
it holds that $  \Big| \Esf[b(\ux)] -\mu \Big| ~\leq~ a\varepsilon_1$. 
By Theorem \ref{thm:gen-error}, with probability at least 
$1-\frac{\delta}{2}$, we have $|R_{\Xcal}(h) -R_S(h)| \leq \varepsilon_2.$ 
By using the union bound for the above two inequalities and the fact that 
$ \varepsilon_1 < \varepsilon_2 $,   
the   two inequalities  `` simultaneously"  hold 
\be  
 \Big|\Esf[b(\ux)]  - \mu\Big| & \leq & a\varepsilon_2,  \label{eqn:Eb-mu}  \\
 |R_{\Xcal}(h) -R_S(h)|  & \leq &  \varepsilon_2. \label{eqn:RX-RS}
\ee
Lemma \ref{lemma:together} follows from the observations;
\beqa
  \mFP-\mFN-(\qFP-\qFN) &=& T_m-S_m,\\
  \mFP+\mFN-(\qFP+\qFN) &=& T_P-S_p.
\eeqa  
and 
\be 
|\Esf[b(\ux)]-\mu| &=&  a|\mFP-\mFN-(\qFP-\qFN)| ~=~ a|T_m-S_m|, \label{eqn:FP-FN}\\
|R_{\Xcal}(h)- R_S(h)| &=&   |\mFP+\mFN-(\qFP+\qFN)| ~=~ |T_p-S_p|.  \label{eqn:FP+FN} 
\ee 

We have proved Lemma \ref{lemma:together}
\end{proof}

\subsection{When $\alpha =0$}
Consider $I_0(h,P) - I_0(\ub;n)$. 
From (\ref{eqn:pre_diff_I0}), we have 
\beqa
\Big|I_0(h,P) - I_0(\ub;n)\Big| \leq 
      \Big| \ln \Esf[\beta] - \ln \mu \Big| 
      + \Big|\Esf[\ln \beta] -\frac{1}{n} \sum_i \ln b_i \Big| 
\eeqa
We find each upper bound,  
which is expressed in terms of $|T_m-S_m|$ and $|T_p-S_p|$, 
of $\Big| \ln \Esf[\beta] - \ln \mu \Big|$ and 
$\Big|\Esf[\ln \beta] -\frac{1}{n} \sum_i \ln b_i \Big|$.
\begin{itemize}
\item Upper bound of $\Big| \ln \Esf[\beta] - \ln \mu \Big|$:\\
Let $\varepsilon_0 = |\Esf[\beta]-\mu|$. 
By (\ref{eqn:before_diff_lnEb}), it holds that  
\be
|\ln\Esf[\beta] - \ln \mu | 
~<~ \frac{|\Esf[\beta]- \mu| }{m} 
~=~ \frac{\varepsilon_0 }{m}  \label{eqn:pre-ln}
\ee 
where $m = \min(\Esf[\beta], \mu)$ and $M = \max(\Esf[\beta], \mu)$. 
If $m = \mu$, then  $m= \mu = c+a(\qFP-\qFN) > c-aR_S(h)$.  
If $m= \Esf[\beta]$, then $m= \mu-\varepsilon_0$; 
summarizing, it holds that 
\be
m = \min(\Esf[\beta],\mu) 
 &\geq&    \left\{ \begin{array}{ll}
         c-aR_S(h) & \mbox{if }   \mu < \Esf[\beta]\\
         c-aR_S(h) -\varepsilon_0 & \mbox{otherwise}.  
         \end{array}   \right. \label{eqn:mM}
\ee
In either case, we have $m > c-aR_S(h)- \varepsilon_0$. 
By (\ref{eqn:Eb-mu}),  
 (\ref{eqn:pre-ln}) becomes  
\be 
|\ln\Esf[\beta] - \ln \mu |
~<~  \frac{\varepsilon_0 }{c-aR_S(h) - \varepsilon_0}  
~=~ \frac{a|T_m-S_m|}{c-aR_S(h)-a|T_m-S_m|}. \label{eqn:new_b}
\ee 

\item Upper bound of $\Big|\Esf[\ln \beta] -\frac{1}{n} \sum_i \ln b_i \Big|$:\\
Using (\ref{eqn:q}) and  (\ref{eqn:m}), we can write 
$\Big| \frac{1}{n} \sum_i \ln b_i - \Esf[\ln \beta] \Big|$ 
as follows;
\beqa
\lefteqn{\Big| \frac{1}{n} \sum_i \ln b_i - \Esf[\ln \beta] \Big|} \\
& =& \Big|\Big(\mFP+\mFN- (\qFP+\qFN) \Big) \ln c +
         (\qFP-\mFP) \ln(c+a) + (\qFN-\mFN) \ln(c-a) \Big| \\
&=& \Big|(T_p-S_p) \ln c + \Big(\frac{S_p+S_m}{2}-\frac{T_p + T_m}{2} \Big) \ln(c+a) + \Big(\frac{S_p-S_m}{2}-\frac{T_p - T_m}{2} \Big) \ln(c-a)\Big| \\
&\leq & \Big| \frac{\ln(c+a)+\ln(c-a)-2\ln c}{2}  \cdot(S_p-T_p) 
       + \frac{\ln(c+a)-\ln(c-a)}{2}\cdot (S_m-T_m)\Big| \\
&\leq & \Big| \frac{\ln(c+a)+\ln(c-a)-2\ln c}{2}\Big| \cdot |S_p-T_p| 
       + \Big|\frac{\ln(c+a)-\ln(c-a)}{2}\Big| \cdot |S_m-T_m|. 
\eeqa 
Since  $\ln(c+a)+\ln(c-a) -2\ln c = \ln(\frac{c^2-a^2}{c^2}) < 0$, 
we have  $\Big| \frac{\ln(c+a)+\ln(c-a)-2\ln c}{2}\Big| =\frac{2 \ln c -\ln(c+a)-\ln(c-a)}{2}.$

Therefore, if we let $c_p =\frac{2\ln c-\ln(c+a)-\ln(c-a)}{2}$ 
and $c_m= \frac{\ln(c+a)-\ln(c-a)}{2}$, 
it holds that 
\be 
 \Big|\Esf[\ln \beta] -\frac{1}{n} \sum_i \ln b_i \Big| 
&\leq & c_p |S_p-T_p| + c_m |S_m-T_m|.  \label{eqn:new_b2}  
\ee 
\end{itemize}
Combining (\ref{eqn:new_b}),  (\ref{eqn:new_b2}), and  Lemma \ref{lemma:together}, 
with probability at least $1-\delta$,  we have 
\beqa
\lefteqn{|I_0(h,P) - I_0(\ub;n)| }\\
&\leq & \frac{a|T_m-S_m|}{c-aR_S(h)-a|T_m-S_m|} + c_p|S_p-T_p| +c_m|S_m-T_m| \\
&\leq&  \varepsilon_2 \Big[\frac{a}{c-aR_S(h) - a\varepsilon_2}  
      + \ln \Big(\frac{c}{c-a}\Big) \Big] \qquad(\mbox{by Lemma \ref{lemma:together})}\\
&=&  \varepsilon_2 \Big[\frac{1}{ r-R_S(h)-  \varepsilon_2}
      + \ln\Big(1+\frac{1}{r-1}\Big) \Big]
\qquad (\mbox{because }  r=\frac{c}{a}). 
\eeqa
Note that there is no need to take union bound when combining  
(\ref{eqn:new_b}) and (\ref{eqn:new_b2}), 
because the two inequalities  in Lemma \ref{lemma:together} simultaneously hold.

\subsection{When $\alpha =1$}
Let $\omega = \Esf[b(\ux)]$. 
From (\ref{eqn:I_1(h,P)}) and (\ref{eqn:I_1(b;n)}),     $I_1(h,P)-I_1(\ub;n)$ is written as  
\beqa
 I_1(h,P) - I_1(\ub;n)  
&=& \ln \mu - \ln \omega + c\ln c \Big(\frac{1}{\omega}-\frac{1}{\mu} + F_1 \Big)  
      + (c+a) \ln(c+a) F_2 + (c-a)\ln(c-a)F_3.
\eeqa
where   
\beqa
F_1 &=&\frac{\qFP+\qFN}{\mu} - \frac{\mFP+\mFN}{\omega}, \\
F_2 &=& \frac{\mFP}{\omega} - \frac{\qFP}{\mu}, \\
F_3 &=& \frac{\mFN}{\omega} - \frac{\qFN}{\mu}.
\eeqa 
After rewriting $F_1, ~F_2,$ and $F_3$ as follows,
\beqa
F_1 &=& -\Big(\frac{1}{\omega}-\frac{1}{\mu}\Big)(\qFP+\qFN) + \frac{1}{\omega}\Big[\qFP+\qFN-(\mFP+\mFN) \Big]\\
F_2 &=& \Big(\frac{1}{\omega} - \frac{1}{\mu}\Big) \qFP +\frac{\mFP-\qFP}{\omega},\\
F_3 &=& \Big(\frac{1}{\omega} - \frac{1}{\mu}\Big)\qFN + \frac{\mFN-\qFN}{\omega} ,
\eeqa 
we have  
\beqa   
I_1(h,P) - I_1(\ub;n) 
&=& \ln \mu - \ln \omega +  \Big(\frac{1}{\omega}-\frac{1}{\mu}\Big) H_1 + \frac{1}{\omega} H_2  
\eeqa
where 
\beqa
H_1 &=& c\ln c\big(1-(\qFP+\qFN) \big)+(c+a)\ln(c+a)\qFP+(c-a)\ln(c-a) \qFN, \\
H_2 &=& c\ln c(\qFP+\qFN-(\mFP+\mFN)) \\
    & & \quad +(c+a)\ln(c+a)(\mFP-\qFP)+ (c-a)\ln(c-a)(\mFN-\qFN).
\eeqa 
Therefore, we have 
\beqa
| I_1(h,P) - I_1(\ub;n)| 
&\leq & |\ln \mu -\ln \omega| +  \Big|\frac{1}{\omega}-\frac{1}{\mu} \Big| \cdot\Big|H_1 \Big| + \frac{1}{\omega} \cdot\Big|H_2\Big|. 
\eeqa
We find each upper bound  of $ |\ln \mu -\ln \omega|, ~  \Big|\frac{1}{\omega}-\frac{1}{\mu} \Big|, ~| H_1 |$ and $| H_2|$.  
\begin{itemize}
\item An upper bound of $ |\ln \mu -\ln \omega|$\\
By   (\ref{eqn:new_b}), we have  
\beqa 
|\ln \mu -\ln \omega| & \leq &   \frac{a|T_m-S_m|}{c-aR_S(h)-a|T_m-S-M|}.
\eeqa 
 
\item An upper bound of $\Big|\frac{1}{\mu} -\frac{1}{\omega} \Big|$\\
By Lemma \ref{lemma:together}, with probability at least $1-\delta$, 
it holds that 
\be 
\Big| \frac{1}{\mu} -\frac{1}{\omega} \Big| 
&=& \frac{|\mu-\omega|}{\mu \omega} ~\leq~ \frac{a|T_m-S_m|}{mM}.  \label{eqn:pre-inverse-mu}
\ee  
where  $m = \min(\mu, \Esf[\beta])$ and $M=\max(\mu, \Esf[\beta])$. 
By (\ref{eqn:mM}), $m > c-aR_S(h) -|\Esf[\beta]-\mu|=c-aR_S(h)-a|T_m-S_m|$. 
Note that $M \geq \mu  \geq c-aR_S(h)$.  
Therefore (\ref{eqn:pre-inverse-mu}) becomes
\be
\Big| \frac{1}{\mu} -\frac{1}{\omega} \Big|&\leq&    \frac{a|T_m-S_m|}{(c-aR_S(h))(c-aR_S(h) - a|T_m-S_m|)}. \label{eqn:inverse-mu}
\ee  
\item An upper bound of $H_1$\\

Let $\gamma_p = \frac{(c+a) \ln(c+a) + (c-a) \ln(c-a)}{2}$ and 
$\gamma_m=\frac{(c+a) \ln(c+a) - (c-a) \ln(c-a) }{2}$. 
Using (\ref{eqn:q}) and (\ref{eqn:m}), we compute an upper bound of  $|H_1|$   as below;
\be 
|H_1| &=& |(1-S_p)c\ln c +\gamma_p  S_p + \gamma_m S_m| \nonumber \\
      &=& |c\ln c +(\gamma_p-c\ln c)S_p +  \gamma_m S_m| \nonumber\\
      &=&  | c\ln c +  (\gamma_p-c\ln c) S_p +  \gamma_m S_m | \nonumber\\
      &\leq & c\ln c +  |\gamma_p-c\ln c|\cdot |S_p| + |\gamma_m|\cdot|S_m|. \label{eqn:pre_pre_H_1}
\ee 
We will show that $\gamma_p-c\ln c \geq 0$. 
For $\gamma_p-c\ln c \geq 0$, it is enough to show that 
\be 
 \frac{(c+a)\ln(c+a)-c\ln c}{a} ~\geq ~ \frac{c\ln c -(c-a)\ln(c-a)}{a}. \label{eqn:gamma_p-clnc}
\ee 
It can be checked that (\ref{eqn:gamma_p-clnc}) holds  
by applying  Lemma \ref{lemma:convex-prop} for $f(x)=x\ln x$ since $f(x)=x\ln x$ is a convex function over $x\geq \frac{1}{e}$.   
It obviously holds that  $\gamma_m \geq 0$. 
Hence (\ref{eqn:pre_pre_H_1}) becomes 
\be
|H_1| &\leq & c\ln c +  (\gamma_p-c\ln c) |S_p| + \gamma_m |S_m| \nonumber\\
      &\leq & c\ln c+ (\gamma_p-c\ln c) R_S(h) + \gamma_m R_S(h) 
          \qquad (\mbox{because }  S_p=R_S(h), |S_m|< S_p) \nonumber\\
          &\leq & c\ln c+ (\gamma_p+\gamma_m -c\ln c) R_S(h). \label{eqn:H_1}
\ee

\item An upper bound  of $\frac{1}{\omega}$: \\
Recall that $\omega =\Esf[b(\ux)]$ and  $m= \min[\mu, \Esf[b(\ux)]]$. 
\beqa
   \frac{1}{\omega} < \frac{1}{m} < \frac{1}{c-aR_S(h)-a|T_m-S_m|}. \label{eqn:overomega}
\eeqa

\item An upper bound of $|H_2|$\\ 
Using (\ref{eqn:q}) and (\ref{eqn:m}), we rewrite   $H_2$ as below 
\beqa
H_2 &=& c\ln c(S_p-T_p) + \gamma_p (T_p-S_p) + \gamma_m (T_m-S_m).
\eeqa
Recall that $\gamma_p = \frac{(c+a) \ln(c+a) + (c-a) \ln(c-a)}{2}$ and 
$\gamma_m=\frac{(c+a) \ln(c+a) - (c-a) \ln(c-a) }{2}$. 
Similarly as we have done in finding an upper bound of $|H_1|$, with probability at least $1-\delta$, 
we have 
\be 
 |H_2| &=& \Big| c\ln c(S_p-T_p) + \gamma_p (T_p-S_p) + \gamma_m (T_m-S_m)\Big| \nonumber\\
       &=& \Big|(\gamma_p - c\ln c)(T_p-S_p)+\gamma_m(S_m-T_m) \Big| \nonumber\\
       &\leq & (\gamma_p-c\ln c)|T_p-S_p| + \gamma_m|S_m-T_m| 
          \qquad (\mbox{because }  \gamma_p-c\ln c \geq 0, ~\gamma_m \geq0 ). \label{eqn:H_2} 
\ee 
\end{itemize}

By (\ref{eqn:new_b}), (\ref{eqn:inverse-mu}), (\ref{eqn:H_1}), (\ref{eqn:overomega}),and (\ref{eqn:H_2}), 
an upper bound of $|I_1(h, P) - I_1(\ub;n)|$ is
\beqa
 |I_1(h, P) - I_1(\ub;n)|  &\leq & S_1 + S_2 + S_3 
\eeqa
where 
\beqa
S_1 &=& \frac{a|T_m-S_m|}{c-aR_S(h)-a|T_m-S_m|},\\
S_2 &=& \frac{a|T_m-S_m|\big( c\ln c+(\gamma_p+\gamma_m-c\ln c) R_S(h)\Big)}{(c-aR_S(h))(c-aR_S(h)-a|T_m-S_m|}, \\
S_3 &=& \frac{(\gamma_p-c\ln c) |T_p-S_p|+\gamma_m|T_m-S_m|}{c-aR_S(h)-a|T_m-S_m|}. 
\eeqa
Applying Lemma \ref{lemma:together} to $S_1+S_2+S_3$, we have
\beqa
S_1 + S_2 +S_3  
    &\leq & \frac{a\varepsilon_2}{c-aR_S(h)- a \varepsilon_2}
            \Big[1+\frac{c\ln c + \big(\gamma_p+\gamma_m-c\ln c)R_S(h) }{c-aR_S(h)} 
            + \frac{\gamma_p+\gamma_m-c\ln c}{a}\Big]  \\
  &\leq & \frac{a\varepsilon_2}{c-aR_S(h)- a \varepsilon_2}
      \Big[1+ \frac{c\ln c}{c-aR_S(h)}  +\Big(\frac{R_S(h)}{c-aR_S(h)}+\frac{1}{a}\Big) (\gamma_p+\gamma_m-c\ln c )\Big]\\
  & = &  \frac{\varepsilon_2}{r-R_S(h)-\varepsilon_2}
      \Big[ 1+\frac{r\ln(ar)}{r-R_S(h)} 
           + \Big(\frac{R_S(h)}{r- R_s(h)}+1\Big)\Big(r\ln\big(1+\frac{1}{r}\big) + \ln\big(a(r+1)\big)\Big) \Big]\\
   & & \qquad (\mbox{since $r=\frac{c}{a}$ and $\gamma_p+\gamma_m-c\ln c= c\ln\Big(\frac{c+a}{c}\Big)+a\ln(c+a)$}).
\eeqa
Note that 
\be 
\lefteqn{1+\frac{r\ln(ar)}{r-R_S(h)} 
           + \Big(\frac{R_S(h)}{r- R_s(h)}+1\Big)\Big(r\ln\big(1+\frac{1}{r}\big) + \ln\big(a(r+1)\big)\Big)} \nonumber\\
& = & 1+  \frac{r\big( 2\ln a + \ln r +\ln(r+1) +r\ln\big(1+\frac{1}{r}\big)\Big)}{r-R_S(h)}\nonumber\\
&=& 1+  \frac{ r\big(1+ 2\ln a  + 2\ln(r+1)\big)}{r-R_S(h)} 
   \qquad (\mbox{since $\ln(1+x) \leq x$}). \label{eqn:simple}
\ee 
Using (\ref{eqn:simple}), we have 
\beqa
S_1+S_2+S_3 
&\leq & \frac{\varepsilon_2}{r-R_S(h)-\varepsilon_2}
     \Big[1+  \frac{ r}{r-R_S(h)}\big(1+ 2\ln a  + 2\ln(r+1)\big)\Big].
\eeqa
Recalling $\varepsilon_2= \sqrt{\frac{8d_{\Hcal}\ln\big(\frac{2en}{d_{\Hcal}}+ 8\ln\frac{8}{\delta}\big)}{n}}$, 
we have proved the case of $\alpha=1$ of Theorem \ref{thm:gen-fair-constraint-tight}.

\subsection{When $  0 < \alpha <1$}
After simple algebra, we have  
\beqa
I_{\alpha}(h, P)  - I_{\alpha}(\ub;n) 
&=& \frac{1}{\alpha(\alpha-1)}
   \Big(\frac{J_n}{J^{\alpha}} -\frac{K_n}{K^{\alpha}}\Big).
\eeqa 
where  $\theta=\frac{a}{c} =\frac{1}{r}$ and 
\beqa
 J = 1+\theta (\mFP-\mFN), & & 
J_n = 1-(m_{FP}+m_{FN}) + (1+\theta)^{\alpha}m_{FP}+(1-\theta)^{\alpha}m_{FN},\\
 K = 1+\theta (\qFP-\qFN), & & K_n = 1-(q_{FP}+q_{FN})+(1+\theta)^{\alpha}q_{FP}  +(1-\theta)^{\alpha}q_{FN}.    
\eeqa
Note that all of $J, ~J_n, K$, and $K_n$ are positive 
since $\theta= \frac{a}{c}$ with $c > a >0$ and $c-a\geq 1$, $ 0 \leq \mFP+\mFN \leq 1$, and $ 0 \leq \qFP+\qFN \leq 1$.  
Moreover, 
\beqa
\frac{J_n}{J^{\alpha}} -\frac{K_n}{K^{\alpha}} &=&   T_1 + T_2 + T_3 + T_4 
\eeqa
where   
\beqa
T_1 &=& \frac{1}{J^{\alpha}} - \frac{1}{K^{\alpha}}, \\ 
T_2 &=& \frac{\qFP+\qFN}{K^{\alpha}} -\frac{\mFP+\mFN}{J^{\alpha}},  \\
T_3 &=& (1+\theta)^{\alpha} \Big[\frac{\mFP}{J^{\alpha}} - \frac{\qFP}{K^{\alpha}}\Big],\\
T_4 &=& (1-\theta)^{\alpha} \Big[\frac{\mFN}{J^{\alpha}} - \frac{\qFN}{K^{\alpha}}\Big] .  
\eeqa
After rewriting  $T_2$, $T_3$, and $T_4$ as follows, 
\beqa
T_2 &=& \frac{\qFP+\qFN-(\mFP+\mFN)}{K^{\alpha}}- \Big(\frac{1}{J^{\alpha}} - \frac{1}{K^{\alpha}} \Big)(\mFP+\mFN), \\
T_3 &=& (1+\theta)^{\alpha}\Big(\frac{1}{J^{\alpha}} - \frac{1}{K^{\alpha}} \Big)\mFP + (1+\theta)^{\alpha}\frac{\mFP-\qFP}{K^{\alpha}} \\
T_4 &=& (1-\theta)^{\alpha}\Big(\frac{1}{J^{\alpha}} - \frac{1}{K^{\alpha}} \Big)\mFN + (1-\theta)^{\alpha}\frac{\mFN-\qFN}{K^{\alpha}}, 
\eeqa 
we have 
\beqa 
\frac{J_n}{J^{\alpha}} - \frac{K_n}{K^{\alpha}} 
&=& \Big(\frac{1}{J^{\alpha}} - \frac{1}{K^{\alpha}} \Big) (Z_1+Z_2) + \frac{1}{K^{\alpha}} (Z_3+Z_4) 
\eeqa
where 
\beqa
Z_1 &=& 1-(\mFP+\mFN) \\
Z_2 &=& (1+\theta)^{\alpha}\mFP + (1-\theta)^{\alpha} \mFN, \\
Z_3 &=& \qFP+\qFN -(\mFP+\mFN),\\
Z_4 &=& (1+\theta)^{\alpha}(\mFP-\qFP)+(1-\theta)^{\alpha}(\mFN-\qFN). 
\eeqa
Therefore, it holds that 
\beqa
| I_{\alpha}(h, P)  - I_{\alpha}(\ub;n)|   
&\leq &  \Big|\frac{1}{J^{\alpha}} - \frac{1}{K^{\alpha}}\Big| \cdot \frac{ |Z_1| + |Z_2|}{|\alpha(\alpha-1)|}   ~ + ~
      \frac{1}{|K^{\alpha}|} \cdot \Big|\frac{Z_3+Z_4}{\alpha(\alpha-1)}\Big|.  
\eeqa

We will find  each  upper bound  of 
$\Big|\frac{1}{J^{\alpha}} - \frac{1}{K^{\alpha}}\Big| , ~|Z_1|, ~|Z_2|, ~  \frac{1}{|K^{\alpha}|}$, 
and $\big|\frac{Z_3+Z_4}{\alpha(1-\alpha)}\big|$.   
Recall that $K = 1+\theta(\qFP-\qFN)$. 
Obviously $1-\theta R_S(h) \leq K \leq 1+\theta R_S(h)$. Hence 
\be 
\frac{1}{|K^{\alpha}|} \leq \frac{1}{(1-\theta R_S(h))^{\alpha}}. 
\label{eqn:K^alpha_final}
\ee 
\begin{itemize}
\item An upper bound of $\big|\frac{1}{J^{\alpha}} - \frac{1}{K^{\alpha}}\big|$\\ 
We compute  $\big|\frac{1}{J^{\alpha}} - \frac{1}{K^{\alpha}}\big|$;
\beqa  
\Big|\frac{1}{J^{\alpha}} - \frac{1}{K^{\alpha}}\Big| 
= \frac{|K^{\alpha} - J^{\alpha}|}{J^{\alpha} K^{\alpha}}. 
\eeqa
By Mean Value Theorem, 
there exists $x_0  \in (m_0, M_0)$ such that 
$
K^{\alpha} - J^{\alpha}= (K-J) \cdot \alpha x_0^{\alpha-1}  
$
where   $m_0= \min(K, J)$, and $M_0=\max(K, J)$. 
Since $0 < \alpha < 1$, the function $f(x) = x^{\alpha-1}$ is 
a decreasing function of $x$. 
Hence $x_0^{\alpha-1} < m_0^{\alpha -1}$. 
Recall that $J=1+\theta(\mFP-\mFN)$ and $K= 1+\theta(\qFP-\qFN)$.  
Therefore $|K-J|=\theta|\mFP-\mFN-(\qFP-\qFN)|=\theta|T_m-S_m|$. 
Summarizing these, we have 
\beqa  
\Big|\frac{1}{J^{\alpha}} - \frac{1}{K^{\alpha}}\Big|  
 =   \frac{|K-J| \cdot \alpha x_0^{\alpha-1}}{J^{\alpha}  K^{\alpha}}  
 \leq   \frac{\theta |T_m-S_m| \cdot \alpha m_0^{\alpha-1}}{m_0^{\alpha}  M_0^{\alpha}}  &=& \frac{\alpha \theta |T_m-S_m|}{m_0 M_0^{\alpha}}. 
\eeqa

Consider the case that  $J \geq  K$.
Note that $K=1+\theta(\qFP-\qFN) > 1-\theta R_S(h)$ 
and $J = K+|K-J|=K+ \theta|T_m-S_m| > 1+\theta(|T_m-S_m|-R_S(h))$.
Hence $m_0 M_0^{\alpha} = K J^{\alpha} > (1-\theta R_S(h))(1-\theta R_S(h)+ \theta|T_m-S_m|)^{\alpha}$.   
  
Consider the case that $ J< K$. 
In this case, we have  $J = K-|K-J|=K - \theta|T_m-S_m| > 1-\theta R_S(h) -\theta |T_m-S_m|$ 
and  $m_0 M_0^{\alpha} = J K^{\alpha} > (1-\theta R_S(h)- \theta|T_m-S_m|)(1-\theta R_S(h))^{\alpha}$.

In either case, we have $m_0 M_0^{\alpha} > (1-\theta R_S(h)- \theta|T_m-S_m|)(1-\theta R_S(h))^{\alpha}$.
Therefore, $\Big|\frac{1}{J^{\alpha}} - \frac{1}{K^{\alpha}}\Big|$ becomes;
\be 
\Big|\frac{1}{J^{\alpha}} - \frac{1}{K^{\alpha}}\Big|  
   < \frac{\alpha \theta |T_m-S_m|}{(1-\theta R_S(h) - \theta|T_m-S_m|) (1-\theta R_s(h))^{\alpha}}. \label{eqn:T1_final}
\ee 

\item  An upper bound of $|Z_1|$\\ 
Recall that $|Z_1| = |1-(\mFP + \mFN)|$. 
\be 
|Z_1| &=& |1-(\mFP+\mFN)| \nonumber\\
      &\leq & |1-(\qFP+\qFN)| + |\qFP+\qFN-(\mFP+\mFN)| \nonumber \\
      &= & 1-R_S(h) + |T_p-S_p| \label{eqn:Z1_final}       
\ee  

\item  An upper bound of $|Z_2|$\\ 
Recall that $Z_2= (1+\theta)^{\alpha}\mFP + (1-\theta)^{\alpha} \mFN$. 
Using (\ref{eqn:m}), we rewrite $Z_2$ as follows,
\beqa 
Z_2 = \frac{(1+\theta)^{\alpha}+(1-\theta)^{\alpha}}{2} T_p + \frac{(1+\theta)^{\alpha}-(1-\theta)^{\alpha}}{2} T_m. 
\eeqa 
Hence
\be  
|Z_2| &\leq & \Big|\frac{(1+\theta)^{\alpha}+(1-\theta)^{\alpha}}{2}|\cdot |T_p| + \Big|\frac{(1+\theta)^{\alpha}-(1-\theta)^{\alpha}}{2}\Big| \cdot |T_m| \nonumber\\
&\leq & \frac{(1+\theta)^{\alpha}+(1-\theta)^{\alpha}}{2} \cdot R_{\Xcal}(h) +  \frac{(1+\theta)^{\alpha} -(1-\theta)^{\alpha}}{2}  \cdot R_{\Xcal}(h) \nonumber\\
&= &  (1+\theta)^{\alpha}  R_{\Xcal}(h). 
   \qquad (\mbox{because } , ~T_P=R_{\Xcal}(h), ~|T_m| \leq  R_{\Xcal}(h)). 
\label{eqn:Z2_final} 
\ee 

\item An upper bound of $\Big|\frac{Z_3+Z_4}{\alpha(\alpha-1)}\Big|$\\ 
Recall that 
\beqa 
Z_3 &=&  \qFP+\qFN -(\mFP+\mFN), \\ 
Z_4 &=& (1+\theta)^{\alpha}(\mFP-\qFP)+(1-\theta)^{\alpha}(\mFN-\qFN).
\eeqa 
Using (\ref{eqn:m}) and (\ref{eqn:q}), we rewrite $Z_3$ and  $Z_4$ 
as below;
\beqa
Z_3 &=&  S_p-T_p, \\
Z_4 &=& \frac{(1+\theta)^{\alpha}+(1-\theta)^{\alpha}}{2} (T_p -S_p)
       +\frac{(1+\theta)^{\alpha}-(1-\theta)^{\alpha}}{2} (T_m - S_m)       
\eeqa
We find an upper bound of  $\Big|\frac{Z_3+Z_4}{\alpha(\alpha-1)}\Big|$; 
\beqa
\Big|\frac{Z_3+Z_4}{\alpha(\alpha-1)}\Big|
& =&
\Big| \frac{(1+\theta)^{\alpha}+(1-\theta)^{\alpha}-2}{2 \alpha(\alpha-1)} (T_p -S_p)
       +\frac{(1+\theta)^{\alpha}-(1-\theta)^{\alpha}}{2\alpha(\alpha-1)} (T_m - S_m)\Big|       
\eeqa
Note that $f(x) = \frac{x^{\alpha}}{\alpha(\alpha-1)}$  is a convex function for $ \alpha \neq 0,~1$.   
If we apply Lemma \ref{lemma:convex-prop} for 
$f(x) = \frac{x^{\alpha}}{\alpha(\alpha-1)}$ with 
the values of  $s = 1-\theta, ~t=1$, and  $u=1+\theta$, 
it holds that
\be 
\frac{1-(1-\theta)^{\alpha}}{\alpha (\alpha-1) \theta}
& \leq & \frac{(1+\theta)^{\alpha} - (1-\theta)^{\alpha}}{\alpha (\alpha-1)\theta}
~\leq~ \frac{(1+\theta)^{\alpha}-1}{\alpha (\alpha-1) \theta}. 
\label{eqn:convex-prop2}
\ee  
By (\ref{eqn:convex-prop2}) and $\theta = \frac{1}{r} >0$, 
it holds that 
\beqa
\frac{(1+\theta)^{\alpha} + (1-\theta)^{\alpha} -2}{\alpha(\alpha-1) } \geq 0.
\eeqa
Using the above property, we have 
\be 
\Big|\frac{Z_3+Z_4}{\alpha(\alpha-1)}\Big|
& \leq &
 \Big|\frac{(1+\theta)^{\alpha}+(1-\theta)^{\alpha}-2}{2 \alpha(\alpha-1)}\Big| \cdot |T_p -S_p| 
       +\Big|\frac{(1+\theta)^{\alpha}-(1-\theta)^{\alpha}}{2\alpha(\alpha-1)}\Big| \cdot |T_m - S_m|\Big| \nonumber\\ 
& \leq  & \frac{2-(1+\theta)^{\alpha}-(1-\theta)^{\alpha}}{2 \alpha(1-\alpha)}\cdot |T_p -S_p| 
   + \frac{(1+\theta)^{\alpha}-(1-\theta)^{\alpha}}{2\alpha(1-\alpha)} \cdot |T_m - S_m|. \label{eqn:Z3Z4_final}
\ee 
The second inequality holds since $0< \alpha <1$ (hence $ \alpha -1<0$). 
\end{itemize}

By (\ref{eqn:K^alpha_final}), (\ref{eqn:T1_final}),(\ref{eqn:Z1_final}), (\ref{eqn:Z2_final}),  (\ref{eqn:Z3Z4_final}), 
and applying Lemma \ref{lemma:together}, with probability at least $1-\delta$, 
we have 
\beqa
\lefteqn{|I_{\alpha}(h, P)- I_{\alpha}(\ub:n)| } \\
&\leq & \frac{\alpha \theta \varepsilon_2}{(1-\theta R_S(h) -\theta\varepsilon_2)(1-\theta R_S(h))^{\alpha}} 
           \cdot \frac{1 -R_S(h)+\varepsilon_2+(1+\theta)^{\alpha} R_{\Xcal}(h)}{\alpha(1-\alpha)}\\
& &      ~~+ \frac{1}{(1-\theta R_S(h))^{\alpha}}
      \Big[\frac{2-(1+\theta)^{\alpha}-(1-\theta)^{\alpha}}{2\alpha(1-\alpha)} \varepsilon_2  
           + \frac{(1+\theta)^{\alpha}-(1-\theta)^{\alpha}}{2\alpha(1-\alpha)} \varepsilon_2\Big]
\\
&\leq & \frac{\varepsilon_2}{\alpha(1-\alpha)(1-\theta R_S(h))^{\alpha}}  
\Big[\frac{\alpha \theta \Big(1+ \big(  (1+\theta)^{\alpha}-1 \big)R_S(h) 
+ \varepsilon_2\big((1+\theta)^{\alpha}+1\big)\Big)}{1-\theta R_S(h) -\theta \varepsilon_ 2} +1-(1-\theta)^{\alpha}\Big]\\
&=& \frac{\varepsilon_2}{\alpha(1-\alpha)}\Big( \frac{r}{r-R_S(h)}\Big)^{\alpha}
    \Big[1-\Big(1-\frac{1}{r}\Big)^{\alpha}+ \frac{\alpha C_{\tilde{\psi}}}{r-R_S(h)-\varepsilon_2}\Big]
\eeqa
where 
\beqa
C_{\tilde{\psi}} =  1+ \big((1+\frac{1}{r})^{\alpha} -1\big)R_S(h) 
         + \varepsilon_2\big((1+\frac{1}{r})^{\alpha}+1\big).
\eeqa
The second inequality holds since $R_{\Xcal}(h) \leq R_S(h) +\varepsilon_2$ by Theorem \ref{thm:gen-error} 
and the first equality is obtained by $\theta = \frac{1}{r}$. 

By Mean Value Theorem and $f(x) = x^{\alpha-1}$ is decreasing for $0 <\alpha<1$ and $x>0$,
we have $z_0 \in (1-\frac{1}{r}, 1)$ such that  
\beqa 
   1-\Big(1-\frac{1}{r}\Big)^{\alpha} ~=~  \frac{\alpha z_0^{\alpha}}{r} \leq \frac{\alpha}{r}.
\eeqa
Using the above inequality, we have found an upper bound of $|I_{\alpha}(h, P)- I_{\alpha}(\ub:n)|$;
\beqa
 |I_{\alpha}(h, P)- I_{\alpha}(\ub:n)|  
&\leq &\frac{\varepsilon_2}{1-\alpha}\Big( \frac{r}{r-R_S(h)}\Big)^{\alpha}\Big( \frac{1}{r} +\frac{C_{\tilde{\psi}}}{r-R_S(h)-\varepsilon_2}\Big)\\
&=&\frac{1}{1-\alpha}\Big( \frac{r}{r-R_S(h)}\Big)^{\alpha}\Big( \frac{1}{r} +\frac{C_{\tilde{\psi}}}{r-R_S(h)-\varepsilon_2}\Big)\sqrt{\frac{8d_{\Hcal}\ln\big(\frac{2en}{d_{\Hcal}}+ 8\ln\frac{8}{\delta}\big)}{n}}.
\eeqa

\subsection{When $    \alpha > 1$}
The analysis is  almost  same as in the case of $0 <  \alpha < 1$ 
but needs some modification because   the value of $\alpha$ is bigger than one. 
The upper bounds of 
$\Big|\frac{1}{K^{\alpha}} -\frac{1}{J^{\alpha}}  \Big|$ 
and $\Big|\frac{Z_3+Z_4}{\alpha(\alpha-1)}\Big|$ have different values 
when  $\alpha >1$. 
For the case $\alpha >1 1$, 
we can show that (\ref{eqn:Z2_final}) and (\ref{eqn:Z3Z4_final}) are replaced by the following 
inequalities, respectively,   
\be
\Big|\frac{1}{K^{\alpha}} -\frac{1}{J^{\alpha}}  \Big|
& \leq & \frac{\alpha \theta |T_m-S_m|}{(1-\theta R_S(h))\cdot(1-\theta R_S(h)-\theta |T_m-S_m|)^{\alpha}}   \label{eqn:T1_alpha1}
\ee
and 
\be
 \Big|\frac{Z_3+Z_4}{\alpha(\alpha-1)}\Big|
 & \leq  & \frac{(1+\theta)^{\alpha}+(1-\theta)^{\alpha}-2}{2\alpha(\alpha-1)}\cdot |T_p -S_p| 
   + \frac{(1+\theta)^{\alpha}-(1-\theta)^{\alpha}}{2\alpha(\alpha-1)} \cdot |T_m - S_m|.  \label{eqn:Z3Z4_alpha1}
\ee
By (\ref{eqn:K^alpha_final}), (\ref{eqn:T1_alpha1}), (\ref{eqn:Z1_final}), (\ref{eqn:Z2_final}), and (\ref{eqn:Z3Z4_alpha1}), 
we have 
\beqa
\lefteqn{ |I_{\alpha}(h, P)- I_{\alpha}(\ub:n)| } \\
&\leq & \frac{\alpha \theta\varepsilon_2}{(1-\theta R_S(h)) (1-\theta R_S(h)-\theta \varepsilon_2)^{\alpha}}
        \cdot \frac{1-R_S(h)+\varepsilon_2 + (1+\theta)^{\alpha}R_{\Xcal}(h)}{\alpha (\alpha-1)} \\
       & & \quad +~ \frac{1}{(1-\theta R_S(h))^{\alpha}} 
           \frac{\big((1+\theta)^{\alpha} + (1-\theta)^{\alpha} -2 \big) \varepsilon_2}{2 \alpha (\alpha-1)}\\
&\leq & \frac{\alpha \theta\varepsilon_2}{(1-\theta R_S(h)) (1-\theta R_S(h)-\theta \varepsilon_2)^{\alpha}}
        \cdot \frac{1-R_S(h)+\varepsilon_2 + (1+\theta)^{\alpha}(R_S(h)+\varepsilon_2)}{\alpha (\alpha-1)} \\
       & & \quad +~ \frac{1}{(1-\theta R_S(h))^{\alpha}} 
           \frac{\big((1+\theta)^{\alpha} + (1-\theta)^{\alpha} -2 \big) \varepsilon_2}{2 \alpha (\alpha-1)}\\
&\leq & \varepsilon_2 \cdot
\frac{(1+\theta)^{\alpha}-1 +  
\frac{\alpha \theta}{1-\theta R_S(h)}
\big[\frac{1-\theta R_S(h)}{1-\theta (R_S(h)+\varepsilon_2)}\big]^{\alpha}
\big[1+ \big(  (1+\theta)^{\alpha}-1 \big)R_S(h) 
+ \varepsilon_2\big((1+\theta)^{\alpha}+1\big)\big]
 }{\alpha(\alpha-1)(1-\theta R_S(h))^{\alpha}}\\
\eeqa
We rewrite $|I_{\alpha}(h, P)- I_{\alpha}(\ub:n)|$ in terms of $r =\frac{1}{\theta}$ instead of  $\theta$. 
Recalling  that $C_{\tilde \psi} = 1+ \big(  (1+\frac{1}{r})^{\alpha}-1 \big)R_S(h) 
+ \varepsilon_2\big((1+\frac{1}{r})^{\alpha}+1\big)$, we have 
\beqa
|I_{\alpha}(h, P)- I_{\alpha}(\ub:n)|  
&\leq &  \frac{\varepsilon_2 }{\alpha(\alpha-1)(1- \frac{R_S(h)}{r})^{\alpha}} 
\Big[ \Big(1+\frac{1}{r}\Big)^{\alpha}-1 +  
\frac{\alpha \big(1+\frac{\varepsilon_2}{r-R_S(h)-\varepsilon_2}\big)^{\alpha} C_{\tilde \psi}}{r-R_S(h)}\Big].
\eeqa
By Mean Value Theorem and the fact that $f(x) = x^{\alpha-1}$ is increasing with $x>0$ for $\alpha >1$, 
we have 
\beqa
\Big(1+\frac{1}{r}\Big)^{\alpha}-1 &\leq & \frac{\alpha}{r} \Big(1+\frac{1}{r}\Big)^{\alpha-1}. 
\eeqa
Hence  
\beqa
|I_{\alpha}(h, P)- I_{\alpha}(\ub:n)|
&\leq & \frac{\varepsilon_2}{\alpha(\alpha-1)} \Big(\frac{r }{r-R_S(h)}\Big)^{\alpha}
        \Big[\frac{\alpha}{r}\Big(1+\frac{1}{r}\Big)^{\alpha} 
        + \frac{\alpha \big(1+\frac{\varepsilon_2}{r-R_S(h)-\varepsilon_2}\big)^{\alpha} C_{\tilde \psi}}{r-R_S(h)}\Big]\\
&=& \frac{\varepsilon_2}{(\alpha-1)} \Big(\frac{r }{r-R_S(h)}\Big)^{\alpha}
        \Big[\frac{1}{r}\Big(1+\frac{1}{r}\Big)^{\alpha} 
        + \frac{\big(1+\frac{\varepsilon_2}{r-R_S(h)-\varepsilon_2}\big)^{\alpha} C_{\tilde \psi}}{r-R_S(h)}\Big].
\eeqa
Recalling the definition of $\varepsilon_2$, we have proved the case of $\alpha>1$ of Theorem \ref{thm:gen-fair-constraint-tight}.

\section{Extension of Generalized Entropy} \label{append:ext_gen_entropy} 

Axioms 4 and 6 should  be modified to reflect the change. 
\begin{itemize}  
\item   Axiom 4$'$: 
For any partition $\{\Xcal^g\}_{g=1}^G$ of $\Xcal$ where 
$\Xcal^g$ has the  probability distribution $P_x^g$ 
such that $P_x^g(A) = \frac{P_x(A)}{P_x(\Xcal^g)}$ for any $A \subseteq \Xcal^g$, 
it holds that 
\beqa
I_\alpha(b, \Xcal,P_x) 
&=& \sum_{g=1}^{G} w_{\alpha, g}^G ~I_{\alpha}(b|_{\Xcal^g}, \Xcal^gg, P_x^g) + I_{\alpha}(\upsilon, \Xcal, P_x)\\
&=& \sum_{g=1}^G m_g \Big(\frac{\mu_g}{\mu}\Big)^{\alpha}~I_{\alpha}(b|_{\Xcal^g}, \Xcal^g, P_x^g) 
                           +  \sum_{g=1}^G m_g f_{\alpha}\Big(\frac{\mu_g}{\mu}\Big) 
\eeqa 
where   
$b|_{\Xcal^g}$ is the restricted function of $b(\ux)$ on $\Xcal^g$, 
$w_{\alpha, g}^G =m_g  \big(\frac{\mu_g}{\mu}\big)^{\alpha}$, 
$m_g=  P_x(\Xcal^g), ~\mu = \Esf[b(\ux)], ~\mu_g = \int_{\Xcal^g} b(\ux) dP_x^g$,  
and $\upsilon(\ux) = \sum_{g=1}^G \mu_g \mathds{1} \{\ux \in \Xcal^g \}$.  
\item   Axiom 6$'$ (Pigou-Dalton transfer principle): 
Consider $M_i, M_j \subset \Xcal$ such that  $M_i \cap M_j = \emptyset$ and 
$b(\bx_i) > b(\bx_j)$ for 
any $\ux_i \in M_i$   and $\ux_j \in M_j$. Define $\tilde{b}(\bx)$ as 
\beqa
  \tilde{b}(\bx) = \left\{ \begin{array}{ll}
                        b(\bx) - \delta & \mbox{if } \bx \in M_i, \\ 
                        b(\bx) + \delta & \mbox{if } \bx \in M_j, \\
                        b(\bx)          & \mbox{otherwise}.
                   \end{array} \right.                  
\eeqa
It holds that $I_{\alpha}(b,\Xcal, P_x) > I_{\alpha}(\tilde{b},\Xcal, P_x)$, 
if $\Esf[b(\ux)] =  \Esf[\tilde{b}(\ux)]$ and 
 $b(\ux_i) -\delta > b(\ux_j) + \delta$   for any $\ux_i \in M_i, ~\ux_j \in M_j$.
\end{itemize} 

Now, we prove Theorem \ref{thm:prop_ext_gen_entropy}. 
  It is easy to see that Axioms 1,3, 5, 7 hold after the extension. 
To prove   the remaining  Axioms 2, 4$'$, and 6$'$,   
we need Jensen's inequality stated in below.
\begin{theorem}[Jensen's Inequality, \cite{Rudin87}] 
Let $Q$ be a probability distribution on $\Xcal$ and dQ the probability density function of $Q$. 
If $b$ is a real function with $\int_{\Xcal} |b(\ux)| dQ$, 
$ a_1< b(\bx)< a_2$ for all $\bx \in \Xcal$, and $\phi$ is convex on $(a_1, a_2)$, then 
\beqa
 \phi\Big(\int_{\Xcal} b(\ux) dQ \Big) \leq \int_{\Xcal}  \phi \big(b(\ux)\big) dQ.
\eeqa 
\end{theorem}

\subsection{Proof for Axiom 2}
Note that $f_{\alpha}(x)$ is convex over $\real$ for all $\alpha \in [0, \infty)$. 
We will show that $I_{1}(b, \Xcal, P_x)\geq 0$ for $\alpha =1$. 
\beqa 
I_{1}(b, \Xcal, P_x)
                  =   \frac{1}{\mu} \Big(\int_{\Xcal} b(\ux) \ln b(\ux) dP_x -    \mu \ln \mu \Big) 
                  \geq   \frac{1}{\mu} \Big[f_{1}\Big( \int_{\Xcal} b(\ux) \ln b(\ux) dP_x \Big)  - \mu \ln \mu\Big] 
                  \geq   0. 
\eeqa 
The first inequality holds by Jensen's inequality since $b(\ux)$ is bounded and $f_1(x)$ is convex. 
In a similar way,   we can show that $I_{\alpha}(b, \Xcal, P_x) \geq 0$ for $\alpha \neq 1$. 

\subsection{Proof for Axiom 4$'$ (Additive decomposability)} 
Recall that $\upsilon(\ux) = \sum_{g=1}^G \mu_g \mathds{1}\{\ux \in \Xcal^g\}$ and    $m_g = P_x(\Xcal^g)$. 
For any $A \subset \Xcal^g$, we have $ P_x^g(A) =  \frac{P_x(A)}{P_x(\Xcal^g)} = \frac{P_x(A)}{m_g}$. 
Hence $P_x^g = \frac{P_x}{m_g} $ and $dP_x = m_g dP_x^g$ for all $g$. 
\begin{lemma} \label{lemma:Eupsilon(x)}
\beqa
 \Esf[\upsilon(\ux)] =  \mu =  \sum_{g=1}^G m_g \mu_g.
\eeqa
\end{lemma}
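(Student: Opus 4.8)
The plan is to establish the two asserted equalities independently and then observe that they coincide. First I would handle $\Esf[\upsilon(\ux)] = \sum_{g=1}^G m_g \mu_g$ by a direct computation. Since $\upsilon(\ux) = \sum_{g=1}^G \mu_g \mathds{1}\{\ux \in \Xcal^g\}$ is a finite sum of indicator functions with constant coefficients $\mu_g$, linearity of the integral gives $\Esf[\upsilon(\ux)] = \sum_{g=1}^G \mu_g \int_{\Xcal} \mathds{1}\{\ux \in \Xcal^g\}\, dP_x = \sum_{g=1}^G \mu_g P_x(\Xcal^g)$, and recognizing $P_x(\Xcal^g) = m_g$ closes this half.

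Next I would show $\mu = \sum_{g=1}^G m_g \mu_g$. Because $\{\Xcal^g\}_{g=1}^G$ is a measurable partition of $\Xcal$, the integral defining $\mu = \int_{\Xcal} b(\ux)\, dP_x$ splits as $\sum_{g=1}^G \int_{\Xcal^g} b(\ux)\, dP_x$. On each block I would apply the change-of-measure identity $dP_x = m_g\, dP_x^g$ noted immediately before the lemma (which follows from $P_x^g(A) = P_x(A)/m_g$ for $A \subseteq \Xcal^g$), turning $\int_{\Xcal^g} b(\ux)\, dP_x$ into $m_g \int_{\Xcal^g} b(\ux)\, dP_x^g = m_g \mu_g$ by the definition of $\mu_g$. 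Summing over $g$ yields $\mu = \sum_{g=1}^G m_g \mu_g$, which matches the first computation and completes the lemma.

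The computation is essentially routine; the only point requiring care is the interchange of the total integral with the sum over the partition and the subsequent rescaling to the conditional measures $P_x^g$. This is legitimate precisely because the $\Xcal^g$ are disjoint, cover $\Xcal$, and each carries positive mass $m_g > 0$, so that $P_x^g$ is well defined. I would also note that any block with $m_g = 0$ contributes nothing to either side and may simply be discarded. No machinery is needed beyond countable additivity of $P_x$ and the definitions already fixed in Axiom $4'$.
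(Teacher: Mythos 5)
Your proposal is correct and follows essentially the same route as the paper's proof: it establishes $\mu = \sum_g m_g \mu_g$ by splitting the integral over the disjoint partition and applying the change of measure $dP_x = m_g\, dP_x^g$, and it computes $\Esf[\upsilon(\ux)]$ by linearity applied to the indicator decomposition, exactly as the paper does (merely in the opposite order, with a harmless extra remark about blocks of zero mass). No gaps.
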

\begin{proof}
Since $\Xcal =  \dot \bigcup_{g=1}^G \Xcal^g$ where $\dot\bigcup$ means a disjoint union, we have 
\be 
   \mu = \int_{\Xcal}b(\ux) dP_x = \int_{ \dot\bigcup \Xcal^g} b(\ux)dP_x = \sum_{g=1}^G \int_{\Xcal^g} b(\ux) dP_x,\label{eqn:A4_lemma_tmp1}  \\ 
   \sum_{g=1}^G \int_{\Xcal^g} b(\ux) dP_x 
   = \sum_{g=1}^G m_g \int_{\Xcal^g}  {b}(\ux) dP_x^g = \sum_{g=1}^G m_g \mu_g. \label{eqn:A4_lemma_tmp2}  
\ee  
Combining (\ref{eqn:A4_lemma_tmp1}) and (\ref{eqn:A4_lemma_tmp2}), we have $\mu =  \sum_{g=1}^G m_g \mu_g$.
Consider $\Esf[\upsilon(\ux)]$; 
\beqa
\Esf[\upsilon(\ux)] = \Esf\big[\sum_{g=1}^G \mu_g \mathds{1}\{\ux \in \Xcal^g\}\big]
=\sum_{g=1}^G \mu_g \Esf\big[\mathds{1}\{\ux \in \Xcal^g\}\big] 
= \sum_{g=1}^G m_g \mu_g = \mu.
\eeqa 
\end{proof}
 
\begin{lemma} \label{lemma:V_in_gen_entropy} 
For $\alpha \in [0, \infty)$, it holds that 
$ I_{\alpha}(b, \Xcal, P_x) =\sum_{g=1}^G  m_g f_{\alpha}\Big(\frac{\mu_g}{\mu}\Big).$
\end{lemma}
\begin{proof}
Since $\Xcal = \dot\bigcup \Xcal^g$, for each $\ux \in \Xcal$, there is a unique $g$ such that $\ux \in \Xcal^g$. 
Therefore 
\be
\upsilon(\ux) = \sum_{i=1}^{G} \mu_i \mathds{1} \{\ux \in \Xcal^i \} = \mu_g \qquad \mbox{for } \ux \in \Xcal^g. \label{eqn:upsilon(x)}
\ee 
Consider $I_{\alpha}(\upsilon, \Xcal, P_x)$;  
\beqa
I_{\alpha}(\upsilon, \Xcal, P_x) 
 =  \int_{\Xcal} f_{\alpha}\Big(\frac{\upsilon(\ux)}{\Esf[\upsilon(\ux)]}\Big)dP_x   
 =  \int_{\Xcal} f_{\alpha}\Big(\frac{\upsilon(\ux)}{\mu}\Big)dP_x  
 =   \sum_{g=1}^G \int_{\Xcal^g} f_{\alpha}\Big(\frac{\mu_g}{\mu}\Big)dP_x  
 =   \sum_{g=1}^G m_g f_{\alpha}\Big(\frac{\mu_g}{\mu}\Big).
\eeqa 
The second equality holds by   Lemma \ref{lemma:Eupsilon(x)}  and the third equality holds by   (\ref{eqn:upsilon(x)})
\end{proof}

\begin{itemize}
\item[i)]  $\alpha=1$: 
Using $\Xcal = \dot\bigcup \Xcal^g$, we have 
\be 
I_{\alpha}(b, \Xcal, P_x)  =    \int_{\dot\bigcup \Xcal^g} \frac{b(\ux)}{\mu} \ln \frac{b(\ux)}{\mu} dP_x  
                           =  \sum_{g=1}^G \int_{  \Xcal^g} \frac{b(\ux)}{\mu} \ln \frac{b(\ux)}{\mu} dP_x  
                           =  \sum_{g=1}^G m_g \int_{  \Xcal^g}  \frac{{b}(\ux)}{\mu} \ln \frac{{b}(\ux)}{\mu} dP_x^g.   
                             \label{eqn:add_dec_alpha_1}                       
\ee 

Since $\frac{{b}(\ux)}{\mu} \ln \frac{{b}(\ux)}{\mu}  
= \frac{\mu_g}{\mu}  \frac{{b}(\ux)}{\mu_g} \Big( \ln \frac{{b}(\ux)}{\mu_g} + \ln \frac{\mu_g}{\mu}\Big)$, 
we have
\be
\int_{\Xcal^g}  \frac{{b}(\ux)}{\mu} \ln \frac{{b}(\ux)}{\mu} dP_x^g
 &=& \frac{\mu_g}{\mu} \Big(\int_{\Xcal^g} \frac{{b}(\ux)}{\mu_g}    \ln \frac{{b}(\ux)}{\mu_g} dP_x^g  
       +   \ln \frac{\mu_g}{\mu} \int_{\Xcal^g} \frac{{b}(\ux)}{\mu_g}dP_x^g \Big)\nonumber\\
&=& \frac{\mu_g}{\mu} \Big( I_{\alpha}(b|_{\Xcal^g}, \Xcal_g, P_x^g) 
    +  \ln \frac{\mu_g}{\mu} \Big) \quad \big(\mbox{because }   \mu_g = \int_{\Xcal_g} {b}(\ux) dP_x^g \big)\qquad ~~~ \label{eqn:sub_alpha_1}
\ee 

By (\ref{eqn:sub_alpha_1}), equation (\ref{eqn:add_dec_alpha_1}) becomes 
\beqa
I_{\alpha}(b, \Xcal, P_x) 
=   \sum_{g=1}^G m_g  \frac{\mu_g}{\mu}\Big( I_{\alpha}(b|_{\Xcal^g}, \Xcal_g, P_x^g) 
      +  \ln \frac{\mu_g}{\mu}  \Big)
 =  \sum_{g=1}^G m_g  \frac{\mu_g}{\mu}I_{\alpha}(b|_{\Xcal^g}, \Xcal_g, P_x^g) 
   + \sum_{g=1}^G m_g  \frac{\mu_g}{\mu}\ln \frac{\mu_g}{\mu}. 
\eeqa 
By Lemma \ref{lemma:V_in_gen_entropy}, we have 
$I_{1}(\upsilon, \Xcal, P_x) = \sum_{g=1}^G m_g f_{1}\Big(\frac{\mu_g}{\mu}\Big)    
                             = \sum_{g=1}^G m_g \frac{\mu_g}{\mu}\ln \frac{\mu_g}{\mu}.$  
We have proved that the extended generalized entropy has the additive decomposable property when  $\alpha=1$. 

\item[ii)]   $\alpha=0$: This case can be shown similarly as in the case $\alpha =1$.  
 
\item[iii)]   $\alpha \neq 0,1 $ (and $0 < \alpha < \infty$):  
\be 
 I_{\alpha}(b, \Xcal, P_x) 
 =  \frac{1}{\alpha (\alpha-1)}  \sum_{g=1}^G \int_{\Xcal^g} \Big[\Big(\frac{b(\ux)}{\mu}\Big)^{\alpha} -1\Big]dP_x 
 =  \frac{1}{\alpha (\alpha-1)}\sum_{g=1}^G m_g\int_{\Xcal^g}  \Big[\Big(\frac{b(\ux)}{\mu}\Big)^{\alpha} -1\Big]  dP_x^g.  
   \label{eqn:add_decomp_alpha}
\ee 
Since $\Big(\frac{b(\ux)}{\mu}\Big)^{\alpha} -1  = \Big(\frac{\mu_g}{\mu}\Big)^{\alpha}\Big[\Big(\frac{b(\ux)}{\mu_g}\Big)^{\alpha} -1\Big] 
    +   \Big(\frac{\mu_g}{\mu}\Big)^{\alpha}-1, $ 
it holds that  
\beqa
\int_{\Xcal^g} \Big[\Big(\frac{b(\ux)}{\mu_g}\Big)^{\alpha} -1\Big] dP_x^g
&=&  \int_{\Xcal^g} \Big(\frac{\mu_g}{\mu}\Big)^{\alpha}\Big[\Big(\frac{b(\ux)}{\mu}\Big)^{\alpha} -1\Big]dP_x^g 
    +  \int_{\Xcal^g} \Big[ \Big(\frac{\mu_g}{\mu}\Big)^{\alpha}-1\Big]dP_x^g\\
&=&  \Big(\frac{\mu_g}{\mu}\Big)^{\alpha} \alpha (\alpha-1) I_{\alpha}(b|_{\Xcal^g}, \Xcal^g, P_x^g) + \Big(\frac{\mu_g}{\mu}\Big)^{\alpha}-1  
  \qquad (\mbox{because } \mu_g = \int_{\Xcal^g} b(\ux) dP_x^g). 
\eeqa
By (\ref{eqn:add_decomp_alpha}), we have  
\beqa
I_{\alpha}(b, \Xcal, P_x)
&=&    \sum_{g=1}^G m_g\Big(\frac{\mu_g}{\mu}\Big)^{\alpha} I_{\alpha}(b|_{\Xcal^g}, \Xcal^g, P_x^g)
    ~+ ~\frac{1}{\alpha(\alpha-1)} \sum_{g=1}^G m_g \Big[\Big(\frac{\mu_g}{\mu}\Big)^{\alpha}-1\Big] \\
&=&  \sum_{g=1}^G m_g\Big(\frac{\mu_g}{\mu}\Big)^{\alpha} I_{\alpha}(b|_{\Xcal^g}, \Xcal^g, P_x^g)
    ~+ \sum_{g=1}^G m_g f_{\alpha}\Big(\frac{\mu_g}{\mu}\Big) \\
&=&  \sum_{g=1}^G m_g\Big(\frac{\mu_g}{\mu}\Big)^{\alpha} I_{\alpha}(b|_{\Xcal^g}, \Xcal^g, P_x^g)
    ~+ I_{\alpha}(b, \Xcal, P_x)  \qquad(\mbox{By Lemma \ref{lemma:V_in_gen_entropy}}).
\eeqa
\end{itemize}

\subsection{Proof for Axiom 6$'$} 
It can be easily checked that
$P_x(M_i) = P_x(M_j)$ from the assumption  
that $ \Esf[b(\ux)] = \Esf[\tilde{b}(\ux)] =\mu$.
We need a simple property of a real-valued  differentiable convex function, which is stated in the Lemma below. 
\begin{lemma}\cite{Boyd04} \label{lemma:convex}
Suppose that $f:\mathbb{R}   \rightarrow \mathbb{R}$ is a differentiable real-valued function. 
Then $f$ is convex if and only if   
\beqa
 f(\ux_2) \geq f(\ux_1) +   f'(\ux_1)^T (\ux_2-\ux_1)
\eeqa 
\end{lemma}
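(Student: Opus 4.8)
The plan is to prove both directions of the equivalence, taking advantage of the one-dimensional setting so that the superscript $T$ in $f'(\ux_1)^T(\ux_2-\ux_1)$ is just ordinary multiplication, i.e. the condition reads $f(\ux_2) \geq f(\ux_1) + f'(\ux_1)(\ux_2-\ux_1)$. Since this is the standard first-order characterization of convexity (consistent with the citation \cite{Boyd04}), I would simply reproduce the textbook argument, which needs only the definition of convexity and differentiability.

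First I would treat the forward direction (convexity implies the inequality). Fixing $\ux_1,\ux_2$ and $\theta \in (0,1]$, convexity of $f$ gives $f(\ux_1 + \theta(\ux_2-\ux_1)) \leq (1-\theta)f(\ux_1) + \theta f(\ux_2)$. Subtracting $f(\ux_1)$ and dividing by $\theta$ yields
\beqa
\frac{f(\ux_1 + \theta(\ux_2-\ux_1)) - f(\ux_1)}{\theta} \leq f(\ux_2) - f(\ux_1).
\eeqa
Letting $\theta \to 0^{+}$, the left-hand side converges to the derivative $f'(\ux_1)(\ux_2-\ux_1)$ by differentiability of $f$, which gives exactly $f(\ux_2) \geq f(\ux_1) + f'(\ux_1)(\ux_2-\ux_1)$.

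For the converse, I would fix arbitrary $\ux_1,\ux_2$ and $\theta \in [0,1]$, set $z = \theta\ux_1 + (1-\theta)\ux_2$, and apply the assumed first-order inequality twice, once at $(z,\ux_1)$ and once at $(z,\ux_2)$, obtaining $f(\ux_1) \geq f(z) + f'(z)(\ux_1-z)$ and $f(\ux_2) \geq f(z) + f'(z)(\ux_2-z)$. Forming the convex combination of these two inequalities with weights $\theta$ and $1-\theta$ makes the gradient terms cancel, because $\theta(\ux_1-z) + (1-\theta)(\ux_2-z) = z - z = 0$. What remains is $\theta f(\ux_1) + (1-\theta)f(\ux_2) \geq f(z) = f(\theta\ux_1 + (1-\theta)\ux_2)$, which is precisely the defining inequality of convexity.

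The only delicate point is the limit step in the forward direction: I must justify that the difference quotient tends to $f'(\ux_1)(\ux_2-\ux_1)$. This follows immediately from differentiability of $f$ at $\ux_1$; one may additionally note that convexity makes the difference quotient monotone in $\theta$, so the limit exists and the inequality passes to it cleanly. Because the statement is one-dimensional and standard, no further machinery is required, and I would invoke the argument above directly to close the proof and then return to applying \emph{Lemma \ref{lemma:convex}} to the convex function $f_\alpha$ in completing the proof of Axiom $6'$.
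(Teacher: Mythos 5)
Your proof is correct, and both directions are argued cleanly: the forward direction via the difference quotient of $\theta \mapsto f(\ux_1+\theta(\ux_2-\ux_1))$, and the converse via the cancellation $\theta(\ux_1-z)+(1-\theta)(\ux_2-z)=0$ at $z=\theta\ux_1+(1-\theta)\ux_2$. The paper itself supplies no proof of this lemma --- it is invoked as a known result from \cite{Boyd04} --- and your argument is precisely the standard first-order-condition proof from that reference, so there is nothing to reconcile between the two.
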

Note that $f_{\alpha}(x)$  in (\ref{eqn:f_alpha}) is a real-valued  convex function defined on $\mathbb{R}$ for all  $\alpha \in [0, \infty)$. 
Therefore Lemma \ref{lemma:convex} holds and moreover $f'_{\alpha}(x_2)> f'_{\alpha}(x_1)$ if $x_2 > x_1$.

We will show that $I_{\alpha}(b,\Xcal, P_x) - I_{\alpha}(\tilde{b}, \Xcal, P_x) \geq 0$.
\beqa
 \lefteqn{ I_{\alpha}(b,\Xcal, P_x) - I_{\alpha}(\tilde{b},\Xcal, P_x)} \\  
 & = & \int_{M_i \cup M_j}  f_{\alpha}\Big(\frac{b(\bx)}{\mu}\Big) dP_x - \int_{M_i \cup M_j}  f_{\alpha}\Big(\frac{\tilde{b}(\bx)}{\mu}\Big) dP_x \\
 & = & \int_{M_i} f_{\alpha}\Big(\frac{b(\bx)}{\mu}\Big) dP_x +  \int_{M_j}  f_{\alpha}\Big(\frac{b(\bx)}{\mu}\Big) dP_x -
   \Big[ \int_{M_i} f_{\alpha}\Big(\frac{ b(\bx)-\delta}{\mu}\Big) dP_x +  \int_{M_j}  f_{\alpha}\Big(\frac{b(\bx)+\delta}{\mu}\Big) dP_x\Big] \\
 & > &  \int_{M_i}  f'_{\alpha}\Big(\frac{ b(\bx)-\delta}{\mu}\Big) \delta dP_x  
     - \int_{M_j} f'_{\alpha}\Big(\frac{b(\bx)}{\mu}\Big) \delta dP_x   \qquad(\mbox{by   Lemma \ref{lemma:convex}})\\
 & = & \delta P_x(M_i) \Big[ f'_{\alpha}\Big(\frac{ b(\bx)-\delta}{\mu}\Big)- f'_{\alpha}\Big(\frac{b(\bx)}{\mu}\Big) \Big]\\
 & \geq & 0 \qquad \qquad (\mbox{because $P_x(M_i) = P_x(M_j)$ and     the convexity of $f_{\alpha}$}).
\eeqa

\end{document}